\documentclass{article}

\usepackage{microtype}
\usepackage{graphicx}
\usepackage{subcaption}
\usepackage{booktabs} 
\usepackage{enumitem}
\usepackage[dvipsnames, table]{xcolor}

\usepackage{hyperref}

\PassOptionsToPackage{sort}{natbib} 
\usepackage[accepted]{icml2026} 

\usepackage{amsmath, amsthm, amssymb, mathtools}
\usepackage{nicefrac}
\usepackage{bbm}
\usepackage{accents}

\usepackage{multirow, multicol}
\usepackage{tabularx}
\usepackage{caption}
\usepackage{tikz}
\usetikzlibrary{bayesnet}
\usetikzlibrary{shapes, arrows.meta, positioning, fit, backgrounds, shadows, calc, decorations.pathreplacing}
\usepackage{etoolbox}
\everypar{\looseness=-1 }

\usepackage{soul}
\usepackage{url}
\usepackage{tcolorbox}
\tcbuselibrary{breakable}
\usepackage[textsize=tiny]{todonotes}

\usepackage{listings}
\usepackage{xcolor}

\definecolor{codegreen}{rgb}{0,0.6,0}
\definecolor{codegray}{rgb}{0.5,0.5,0.5}
\definecolor{codepurple}{rgb}{0.58,0,0.82}
\definecolor{backcolour}{rgb}{0.95,0.95,0.92}

\def\EE{{\mathbb E}}
\def\11{{\mathbf 1}}    

\def\cA{{\mathcal A}}  \def\cG{{\mathcal G}}  \def\cS{{\mathcal S}}     \def\cT{{\mathcal T}}      \def\cD{{\mathcal D}}  \def\cJ{{\mathcal J}}          \def\cL{{\mathcal L}} \def\cR{{\mathcal R}} \def\cX{{\mathcal X}}   

                  \def\bs{{\mathbf s}}     \def\bx{{\mathbf x}} \def\by{{\mathbf y}}

\def\KL{\mathbb{KL}}
\DeclareSymbolFont{boldoperators}{OT1}{cmr}{bx}{n}
\SetSymbolFont{boldoperators}{bold}{OT1}{cmr}{bx}{n}

\DeclareMathOperator*{\argmin}{arg\,min}

\newcommand{\circbrac}[1]{\left({#1}\right)}
\newcommand{\squarebrac}[1]{\left[{#1}\right]}

\def\ref{\mathrm{ref}}

\theoremstyle{plain}
    \newtheorem{theorem}{Theorem}[section]
    \newtheorem{proposition}[theorem]{Proposition}

\theoremstyle{definition}
    \newtheorem{definition}[theorem]{Definition}
    \newtheorem{example}[theorem]{Example}

\theoremstyle{remark}

\theoremstyle{plain}
    \newtheorem{innercustomgeneric}{\customgenericname}
    \providecommand{\customgenericname}{}
    \newcommand{\newcustomtheorem}[2]{%
      \newenvironment{#1}[1]
      {%
       \renewcommand\customgenericname{#2}%
       \renewcommand\theinnercustomgeneric{##1}%
       \innercustomgeneric
      }
      {\endinnercustomgeneric}
    }

    \newcustomtheorem{customthm}{Theorem}
    \newcustomtheorem{customlemma}{Lemma}
    \newcustomtheorem{customprop}{Proposition}
    \newcustomtheorem{customcor}{Corollary}

\definecolor{geneblue}{RGB}{70, 130, 180}
\definecolor{netgreen}{RGB}{60, 179, 113}
\definecolor{llmpurple}{RGB}{147, 112, 219}
\definecolor{predorange}{RGB}{255, 140, 0}
\definecolor{evalred}{RGB}{220, 20, 60}
\definecolor{hm1}{RGB}{215, 48, 39}
\definecolor{hm2}{RGB}{252, 141, 89}
\definecolor{hm3}{RGB}{254, 224, 144}
\definecolor{hm4}{RGB}{145, 191, 219}
\definecolor{hm5}{RGB}{69, 117, 180}

\definecolor{valenceInk}{HTML}{11263E}
\definecolor{valenceSpine}{HTML}{3C62D4}
\definecolor{valenceBlue}{HTML}{0038C7}
\definecolor{valenceCyan}{HTML}{29A5FF}
\definecolor{valenceMagenta}{HTML}{C047E9}
\definecolor{valenceLavender}{HTML}{E6EDFF}
\definecolor{valenceCanvas}{HTML}{F5F8FF}

\newlength\myindent
\newlength{\verticalsep}
\newlength{\horizontalsep}
\usepackage[capitalize,noabbrev]{cleveref}

\icmltitlerunning{ $f$-Trajectory Balance: A Loss Family for Tuning GFlowNets, Generative Models, and LLMs }

\begin{document}

\twocolumn[
  \icmltitle{$f$-Trajectory Balance: A Loss Family for Tuning GFlowNets, Generative Models, and LLMs with Off- and On-Policy Data}

  \icmlsetsymbol{equal}{*}

  \begin{icmlauthorlist}
    \icmlauthor{Jake Fawkes}{ucl}
    \icmlauthor{Jason Hartford}{valence,recursion}
  \end{icmlauthorlist}

  \icmlaffiliation{ucl}{Department of Statistics, University College London, UK \textit{(work done while an intern at Valence Labs)}.}
  \icmlaffiliation{valence}{Valence Labs, London, UK}
  \icmlaffiliation{recursion}{Recursion, London, UK}

  \icmlcorrespondingauthor{Jake Fawkes}{jakefawkess@gmail.com}
  \icmlcorrespondingauthor{Jason Hartford}{jason@valencelabs.com}

  \icmlkeywords{Machine Learning, ICML}

  \vskip 0.3in
]

\printAffiliationsAndNotice{} 

\begin{abstract}
  In GFlowNets and variational inference, it has been shown that the mean square error between target and model log probabilities is an effective, low variance, surrogate loss for training generative models.
  This loss has the property that when evaluated \emph{on-policy} its gradients correspond to those of the KL divergence, while \emph{off-policy} it remains a valid loss with the same global minimizer. In this work, we demonstrate that this construction can be extended to the whole family of $f$-divergences, leading to a family of losses whose on-policy gradients are that of the corresponding $f$-divergence, but retain the same global minimizer off-policy. Specifically, we show that the on-policy gradients lead to a one to one correspondence between translation invariant loss functions on the target and model log probabilities, and $f$-divergences. This equivalence allows us to design new surrogate loss functions for tuning a wide class of generative models that inherit the properties of the corresponding $f$-divergence, such as being more mode covering, whilst being applicable to off-policy data. We apply our losses on a range of tasks, including classic synthetic examples, SynFlowNets for molecule discovery, and asynchronous large language model (LLM) tuning, demonstrating that our models retain their predicted properties on- and off-policy in a wide class of generative models. 
\end{abstract}

\section{Introduction}
\begin{figure*}[t!]
    \centering
    \includegraphics[width=\textwidth]{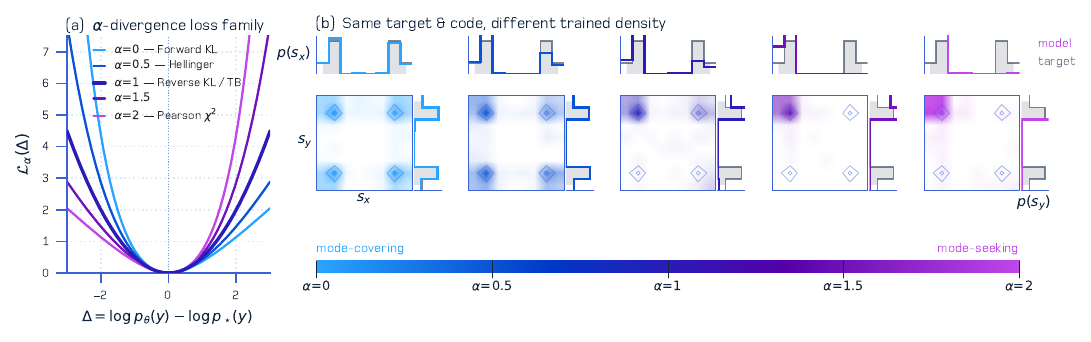}
    \caption{$f$-divergences include the $\alpha$-divergence family which contains both Forward~KL ($\alpha\!=\!0$) and Reverse~KL / standard Trajectory Balance ($\alpha\!=\!1$) as special cases. Lower $\alpha$ yields a more mode-covering loss, higher $\alpha$ a more mode-seeking one.}
    \label{fig:overview}
\end{figure*}

The alignment and fine-tuning of generative models---whether they are large language models (LLMs) generating text, or Generative Flow Networks (GFlowNets) generating molecular graphs---relies on minimizing the discrepancy between the model's distribution. In Reinforcement Learning (RL) fine-tuning, this objective is often framed as maximizing expected reward subject to a KL-divergence penalty, or equivalently, minimizing a divergence between the policy and the target distribution.

Traditionally, optimizing these objectives involves gradient estimators with high variance, such as REINFORCE \citep{williams1992simple}, or sophisticated policy gradient methods \citep[e.g.][]{schulman2017proximal}. However, recently the GFlowNet literature and large scale LLM tuning in Kimi \citep{team2025kimi2} has used a simpler approach: minimizing the mean squared error (MSE) between the log-probabilities of the model and the target. This has similarities with the K2 estimator of the $\KL$ divergence proposed in \citet{schulman2020approximating}, which we term $\mathbb{KL}_{sq}$. \citet{tang2025few}, note this ``squared KL'' loss has a surprising property: whilst it is a biased estimator of the  $\mathbb{KL}$, its gradients on on-policy data are unbiased to the true KL gradients. Further, as it is built on the MSE, it remains a valid loss function with the same global minimizer when applied to off-policy data \citep{bartoldson2025trajectory,tang2025rl}.  

While the $\mathbb{KL}_{sq}$ estimator offers stability and off-policy compatibility, by construction, it inherits the properties of 
the reverse KL divergence. In particular, models trained under this objective will typically exhibit ``mode-seeking'' behavior where the model collapses onto relatively few high-probability modes rather than covering the full diversity of the target distribution. In many generative tasks, such as drug discovery or exploratory agents, ``mode-covering'' behavior is often preferred because generative models are used to explore the space of high reward candidates (e.g. all possible drug targets), rather than to seek out one (or a few) particularly good candidate(s).

In this work, we demonstrate that the effectiveness of the $\mathbb{KL}_{sq}$ loss is not a unique phenomenon restricted to the KL divergence by deriving analogous losses for the entire family of $f$-divergences. We establish a theoretical equivalence showing that \textit{any} translation-invariant loss function on log-probabilities corresponds to a specific $f$-divergence. This allows us to derive a new family of surrogate losses that inherit the specific properties of their parent divergences (e.g., the mode-covering behavior of the Forward KL or Hellinger distance) while retaining the optimization benefits of the $\mathbb{KL}_{sq}$ estimator: low-variance gradients on-policy and validity off-policy.



We demonstrate the behaviour and utility of this family of loss functions across a variety of tasks. On the Hypergrid task \citep{bengio2021flow}---a synthetic task explicitly designed to test mode coverage---we show that the Hellinger and forward KL losses discover more modes of the reverse KL loss implied by the trajectory balance loss \citep{malkin2022gflownets}. We then apply these losses to both molecule generation---by modifying the losses in SynFlowNet \citep{cretu2025synflownet}---, class conditional sampling in diffusion models and asynchronous reinforcement learning on language models. In both cases, we found that we were able to learn higher entropy policies with mode-covering instances of our family of surrogate loss functions. In summary, our contributions are as follows:
\begin{itemize}
    \item We derive a general family of translation-invariant surrogate loss functions, $\mathcal{L}_f$, whose expected auto-differentiated gradients match those of the corresponding $f$-divergence on-policy.
    \item We prove the inverse relationship: any convex, translation-invariant loss on log-probabilities corresponds to minimizing a specific $f$-divergence.
    \item We generalize the ``Vargrad'' \citep{richter2020vargrad} and batch-wise normalization techniques to arbitrary $f$-divergences, addressing the intractable partition function problem in RL fine-tuning.
    \item We apply this framework to GFlowNets, showing that our losses generalize the standard Trajectory Balance objective \citep{malkin2022trajectory}, and empirically demonstrate on synthetic grids and SynFlowNet molecular discovery tasks that we can  control the exploration-exploitation trade-off (mode-seeking vs. mode-covering) simply by changing the surrogate loss function.
\end{itemize}

\section{Related work}

\paragraph{GFlowNets and GFlowNet losses} \citet{bengio2021flow} introduce GFlowNets and train them via the flow matching loss, which applies directly to a neural network aiming to learn the optimal Markovian flow for a given GFlowNets and applies on an action level. The detailed balance objective was proposed in \citet{bengio2023gflownet}, which again applies to the action level but parametrises a forward and backward policy distribution. \citet{malkin2022trajectory} propose trajectory balance, which enforces an equality constraint between the forward policy, backward policy, and normalisation constant on the trajectory level and penalises it with the mean square error. It was shown in \citet{malkin2023gflownets} that on-policy training with this loss is equivalent to a form of hierarchical variational inference \citep{ranganath2016hierarchical} with the $\KL$ divergence. Our results extend these to all $f$-divergences. The Vargrad objective \citep{richter2020vargrad} is often used in GFlowNet training, which use the same square error form as \citet{malkin2022trajectory} but estimate the normalisation constant in a batch-wise fashion. This Vargrad loss comes from the log variance divergence introduced in \citep{nusken2021solving}, where they also propose a variance based estimator built of the Pearson $\chi^2$ divergence which is most similar to our work. However, this would not share the same property of matching on-policy gradients as we show in the paper the correct way to extend this property is using generalised deviations \citep{rockafellar2006generalized} instead of the variance. Finally, and most related to our work \citet{silva2024divergence} discuss using alternative divergence measures for training GFlowNets, however their work focuses on the on-policy setting where the main exploratory benefits of GFlowNets comes from off-policy training. 

\paragraph{$f$-Divergences and Generative Models} $f$-divergences \citep{ali1966general,morimoto1963markov} have an extensive history in the tuning of probabilistic models \citep{minka2005divergence}. In the context of generative models, they have been applied heavily in variational inference \citep{wang2018variational} as well as to GANs \citep{nowozin2016f}, and more recently some work on applies $f$-divergences to tuning diffusion models \citep{tang2024fine,novello2025unified}. $f$-divergences have been applied in the context of LLMS, either directly as an objective \citep{go2023aligning,han2024f} or as a regularised \citep{huangcorrecting}. However to the best of our knowledge, none of this work demonstrates off policy validity, an important aspect as much LLM tuning is done asynchronously \citep{primeintellect2025prime-rl} or completely off-policy. $f$-divergences have also been applied extensively to imitation learning \citep{ke2020imitation,ghasemipour2020divergence}, suggesting our work could be applied to asynchronous distillation of LLMs \citep{lu2025onpolicydistillation}.

\section{Motivation In RL Tuning LLMs}\label{sec:motivation}

\paragraph{RL tuning LLMs and KL gradients}

We use $\pi_\theta$ to denote the LLM which is viewed as a policy over sequences of tokens, $\by$, given a prompt $\bx$. Reinforcement learning is then used to tune this model by maximizing the following reward for a distribution $P(\bx)$ over prompts/states, $\bx$:
\begin{align}
    \cJ(\theta) &= \EE_{\bx\sim P(\bx)} \squarebrac{ \EE_{\by\sim \pi_{\theta}(\by\mid \bx )}\squarebrac{ r(\bx,\by)}}, \label{eq:J_theta}\\
    &- \beta \KL \circbrac{\pi_{\theta}\circbrac{\cdot\lvert\bx},\pi_{\mathrm{ref}}\circbrac{\cdot\lvert\bx}}. 
\end{align}
 This objective is then generally optimised via PPO \citep{schulman2017proximal} or REINFORCE \citep{williams1992simple}. However, the gradient of the $\KL$ term cannot be obtained via auto-differentiation through a Monte Carlo estimator of the $\KL$, as \citet{tang2025few} noted many open source implementations did at the time. This is because the $\KL$ depends on $\theta$ both in terms of its sampling and its objective, where auto-differentiation only accounts for the latter. The correct gradient is:
 \begin{align*}
     \nabla_{\theta}\KL  = \EE_{\pi_{\theta}(\by|\bx)}\squarebrac{\log \frac{\pi_{\theta}(\by|\bx)}{\pi(\by|\bx)} \nabla_{\theta}\log_{\theta} \pi_{\theta}(\by|\bx)},
 \end{align*}
whereas a naive auto-differentiation would give the gradient as: 
 \begin{align*}
\mathbb{E}_{\pi_{\theta}}\!\left[ \nabla_{\theta} \log \frac{\pi_{\theta}(\mathbf{y}|\mathbf{x})}{\pi(\mathbf{y}|\mathbf{x})} \right] 
    = \mathbb{E}_{\pi_{\theta}}\!\left[ \nabla_{\theta} \log \pi_{\theta}(\mathbf{y}|\mathbf{x}) \right],
 \end{align*}
 where this expression's expectation is zero. \citet{tang2025few} suggest using the standard REINFORCE estimator of the gradient, but also discuss the squared $\KL$ estimator of \citet{schulman2020approximating}. This is denoted $\KL_{\mathrm{sq}}$ and given by:
 \begin{equation*}
    \KL_{\text{sq}}(\pi_{\theta} \| \pi_{\text{ref}})(\bx) 
    = \frac{1}{2}\mathbb{E}_{\mathbf{y} \sim \pi_{\theta}} \!\left[ \left( \log \frac{\pi_{\theta}(\mathbf{y}|\mathbf{x})}{\pi_{\text{ref}}(\mathbf{y}|\mathbf{x})} \right)^2 \right],
\end{equation*}
where we have written this as a function of the conditioning value $\bx$. This is itself biased estimator of the $\KL$, however it has the property that the naive auto-differentiated gradient is in expectation the gradient of the true $\KL$. Specifically,
\begin{align*}
    \mathbb{E}_{\mathbf{y} \sim \pi_{\theta}} \!\left[ \!\nabla_{\theta} \frac{1}{2} \left( \log \frac{\pi_{\theta}(\mathbf{y}|\mathbf{x})}{\pi_{\text{ref}}(\mathbf{y}|\mathbf{x})} \right)^2 \!\right] 
    \!=\! \nabla_{\theta}\KL(\pi_{\theta} \| \pi_{\text{ref}})(\bx).
\end{align*}

\paragraph{A single loss for off- and on-policy RL from the $\KL$ divergence}\label{sec:KL_loss}
We now describe how the $\KL_{sq}$ estimator leads to a loss that can be used to optimise the objective in Equation \ref{eq:J_theta} using either off or on-policy data, as in \citet{tang2025rl} and \citet{bartoldson2025trajectory}. First, we have that $\mathcal{J}(\theta)$ may be written as:
\begin{align*}
    \mathcal{J}(\theta) &= \beta \EE_{\bx\sim P(\bx)} \squarebrac{ \EE_{\by\sim \pi_{\theta}(\by\mid \bx )}\squarebrac{ \log e^{\beta^{-1}r(\bx,\by)}}}, \\
    &- \beta \EE_{\bx\sim P(\bx)} \squarebrac{ \EE_{\by\sim \pi_{\theta}(\by\mid \bx )} \squarebrac{\log \frac{\pi_{\theta}(\by|\bx)}{\pi_{\mathrm{ref}}(\by|\bx)}} } \\
    &=\beta\EE_{x\sim P(\bx)}\squarebrac{ -\KL\circbrac{\pi_{\theta}\| \pi_{\star}}(\bx) } + C
\end{align*}
where $C$ is a constant independent of $\theta$ and $\pi_{\star}$ defined as:
\begin{align*}
    \pi_{\star}(\by\lvert\bx) = \frac{\pi_{\mathrm{ref}}(\by \mid \bx ) \exp\circbrac{\beta^{-1} r(\bx,\by)} }{Z(\bx)},
\end{align*}
$Z(\bx) = \int \pi_{\mathrm{ref}}(\by \mid \bx ) \exp\circbrac{\beta^{-1} r(\bx,\by)} d\by$ is the partition function. Therefore maximisation of equation \ref{eq:J_theta} corresponds to minimising $\EE_{x\sim P(\bx)}\squarebrac{ -\KL\circbrac{\pi_{\theta}\| \pi_{\star}}(\bx) }$. If we assume for now that we know the partition function, $Z(\bx)$ then we can make use of the $\KL_{sq}$ property to define the loss $\mathcal{L}_{\KL_{sq}}(\bx,\by,\theta)$ as,
\begin{align*}
    \mathcal{L}_{\KL_{sq}}(\bx,\by,\theta) \coloneqq \frac{1}{2}\left( \log \frac{\pi_{\theta}(\mathbf{y}|\mathbf{x})}{\pi_{\star}(\mathbf{y}|\mathbf{x})} \right)^2
\end{align*}
where this loss has the property that $\EE_{\by\sim \pi_{\theta}(\by\mid \bx )}\squarebrac{ \nabla\mathcal{L}_{\KL}(\bx,\by,\theta)} \!\!=\!\! \nabla_{\theta}\KL(\pi_{\theta} \| \pi_{\text{ref}})(\bx)$, so that the on-policy gradients are equal in expectation to the gradients of the $\KL$. This allows us to define an objective:
\begin{align*}
    \Tilde{\mathcal{J}}(\theta) = \EE_{\bx\sim P(\bx)} \squarebrac{ \EE_{\by\sim \pi_{\theta}(\by\mid \bx )}\squarebrac{\mathcal{L}_{\KL_{sq}}(\bx,\by,\theta)}},
\end{align*}
where the expected auto-differentiated gradients of $\Tilde{\mathcal{J}}(\theta)$ are equal to the REINFORCE gradients of ${\mathcal{J}}(\theta)$. Seeing as $\mathcal{L}_{\KL_{sq}}$ is simply the squared loss between the logprobs, this has the additional benefit in that if we instead sampled the completions off-policy from some distribution $\mu$ and set the objective as:
\begin{align*}
    \Tilde{\mathcal{J}}_{\mu}(\theta) = \EE_{\bx\sim P(\bx)} \squarebrac{ \EE_{\by\sim \mu(\by\mid \bx )}\squarebrac{\mathcal{L}_{\KL_{sq}}(\bx,\by,\theta)}},
\end{align*}
then if the support on $\pi_{\theta}(\cdot|\bx)$ is contained in $\mu(\cdot|\bx)$ for each $\bx$ these share the same minimizer. This follows from the fact that $\Tilde{\mathcal{J}}_{\mu}(\theta)$ is minimised by minimising $\EE_{\by\sim \mu(\by\mid \bx )}\squarebrac{{\mathcal{L}_{\KL_{sq}}(\bx,\by,\theta)}}$ for each $\bx$ which occurs exactly when ${\pi_{\theta}(\mathbf{y}|\mathbf{x})} \!=\! \pi_{\star}(\mathbf{y}|\mathbf{x})$ for all $\by$ in the support of $\mu(\cdot|\bx)$. 
\paragraph{Unnormalised Target Distribution} If $Z(\bx)$ is not known, which is commonly the case, we can use a batch based estimate for the normalisation constant, which recovers the VarGrad  estimator \citep{richter2020vargrad} of the $\KL$ gradient. That is, for a sampled batch of completions $\mathcal{B} = \{\by_1, \dots, \by_B\}$ for each $\bx$ we can estimate $\log Z(\bx)$ as:
\begin{align}\label{eq:vargrad_normalisation}
    \widehat{\log Z(\bx)} = \frac{1}{B}\sum^B_{i=1} \circbrac{\frac{r(\by_i,\bx)}{\beta}+\log \frac{ \pi_{\mathrm{ref}}\circbrac{\by_i|\bx}}{\pi_{\theta}\circbrac{\by_i|\bx} }} 
\end{align}
to form the batch wise Vargrad loss as:
\begin{equation*}
    \mathcal{L}^{\mathrm{VG}}_{\text{KL}}(\mathcal{B},\theta) = \frac{1}{B}\sum^B_{i=1}\left[ \log \frac{\mathrm{SG}\left[\widehat{Z}(\mathbf{x})\right]\pi_{\theta}(\mathbf{y}_i \mid \mathbf{x})}{\pi_{\mathrm{ref}}(\mathbf{y}_i \mid \mathbf{x} ) e^{r(\mathbf{x},\mathbf{y}_i)/\beta}} \right]^2
\end{equation*}
where $\mathrm{SG}\squarebrac{\cdot}$ is the stop gradient function. 

A version of this loss was applied in Kimi K2 \citep{team2025kimi2} and K1.5 \citep{team2025kimi1.5}, where the generating policy, $\theta_{\mathrm{old}}$, is used as the reference distribution and everything is scaled by $\beta$. This gives the following loss:
\begin{equation*}
    \mathcal{L}^{\mathrm{K2}}(\mathcal{B},\theta) \!= \!\frac{1}{B}\! \sum_{i=1}^B \! \left[ \beta \log  \frac{Z\pi_{\theta}(\mathbf{y}_i \!\mid\! \mathbf{x})}{\pi_{\theta_{\mathrm{old}}}(\mathbf{y}_i \!\mid\! \mathbf{x} ) } -r(\by_i,\bx) \right]^2\!.
\end{equation*}
These papers use the mean rewards, $\Bar{r}(\bx) = \frac{1}{B}\sum^B_{i=1} r(\bx,\by_i)$, as an estimate for $\log Z$ instead of equation \ref{eq:vargrad_normalisation}. This can be viewed as assuming that $0 \approx \beta\circbrac{\log { \pi_{\theta}\circbrac{\by_i|\bx}} - \log { \pi_{\theta_\mathrm{old}}\circbrac{\by_i|\bx}}}$, which will hold exactly if the baseline is on-policy, i.e  $\theta = \theta_\mathrm{old}$, and approximately if the baseline is slightly off-policy or $\beta$ is small. This has the advantage of not needing forward pass for the whole batch to calculate $\widehat{\log Z}$. 

\section{Generalising Vargrad to Arbitrary Deviations and $f$- divergences}\label{sec:f_div_loss}

In this section we show that the approaches of the previous section can be generalised to arbitrary $f$-divergences. Specifically we can start from any $f$-divergence and construct a surrogate loss $\mathcal{L}_{f}$ whose gradients match the $f$-divergence on-policy and remains a valid loss with the same minimizer off-policy. Moreover, we show the reverse implication, that any using any translation invariant loss on the logprobs of the current and target distribution corresponds to minimising a corresponding $f$-divergence, and that these maps are the inverse of each other. First we give the definition of an $f$-divergence. Throughout we drop the conditioning on $\bx$ for clarity. 

\begin{definition}
    Let $f$ be a convex function with $f(1) = 0$. The associated $f$- divergence is then defined as:
\begin{align*}
    \mathcal{D}_f(p\|q) = \EE_{\by\sim q(\by)} \squarebrac{f\circbrac{\frac{p(\by)}{q(\by)}}}.
\end{align*}
We add the additional requirements that $f^{\prime}(1) = 1$ and $f^{\prime\prime}(1) = 1$ so that the scale of all $f$ divergences is standardised. This can always be achieved as $\Tilde{f}(x) = \lambda_1f(x) +\lambda_2(x-1)$ is convex with $f(1)=0$ for  $\lambda_i\geq 0$ and that $\mathcal{D}_{\Tilde{f}}(p\|q) = \lambda_1\mathcal{D}_{f}(p\|q)$ so that the scaling does not change the shape of the divergence. 
\end{definition}
$f$-divergences are not symmetric and throughout we will write the target first as  $\mathcal{D}_f(p_{\theta}\|p_{\star})$. As for any convex function, $\tilde{f}(t) = \frac{1}{t}\tilde{f}\circbrac{\frac{1}{t}}$ is convex, the reverse divergence, $\mathcal{D}_{f}(p_{\theta}\|p^*)$ is equal to $\mathcal{D}_{\tilde{f}} (p^*\|p_{\theta})$ and so can always be written in this form. Given this, the $\KL$ discussed in Section \ref{sec:motivation} is the from now on the reverse $\KL$ and corresponds to choosing $f(t) = t\log t$. 
The gradient of an $f$-divergence is given by the log derivative trick as:
\begin{align*}
    \nabla_{\theta} D_f(p_{\theta} \!\| p_{\star}) \!=\! \mathbb{E}_{\by \sim p_{\theta}} \left[ f'\!  \left( \frac{p_{\theta}(\by)}{p_{\star}(\by)} \right) \! \nabla_{\theta} \!\log p_{\theta}(\by) \right]\! .
\end{align*}

Now given this we define the following loss function, which generalises the $\KL_{\mathrm{sq}}$ loss to arbitrary $f$-divergence's in the sense that the gradients are equivalent on-policy but it is also a valid off-policy loss:
\begin{proposition}\label{prop:loss_form}
Let $\Delta_{\theta}(\by) = \log p_{\theta}(\by)-\log p_{\star}(\by)$. Then the function $\mathcal{L}_{f}:\mathbb{R} \to \mathbb{R}$ given by:
   \begin{equation*}
    \boxed{\mathcal{L}_{f}(\Delta_{\theta}(\by)) = \int_0^{\Delta_{\theta}(\by)} {f'(\exp(t))}  - f'(1) dt}
\end{equation*}
is a translation invariant loss function that generalises the construction in Section \ref{sec:KL_loss} in that:
\begin{enumerate}[leftmargin=*, label=\arabic*.]
    \item The population loss between the log probabilities of the target and current distribution:
    \begin{equation*}
        \mathcal{J}_{\mu,f}(\theta) = \mathbb{E}_{\by\sim \mu}\Big[ \mathcal{L}_{f}\big(\Delta_{\theta}(\by)\big) \Big],
    \end{equation*}
    is minimised when $p_{\theta} = p_{\star}$ so long as the support of $p_{\star}$ is contained in the support of $\mu$.   
\item If the samples are generated on-policy ($\mu = p_{\theta}$), the expected auto-differentiated gradients of $\mathcal{J}_{\mu,f}(\theta)$ correspond to the gradients of the associated $f$-divergence:
\begin{equation*}
    \nabla_{\!\theta} \mathcal{D}_{\!f}(p_{\theta} \| p_{\star}) \!=\! \mathbb{E}_{\by\sim p_{\theta}}\! \big[ \nabla_{\!\theta} \mathcal{L}_{f} \big(\Delta_{\theta}(\by)\big) \big]
\end{equation*}
\end{enumerate}

\end{proposition}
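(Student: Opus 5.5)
Both parts come from computing the derivative $\mathcal{L}_f'(\Delta) = f'(e^{\Delta}) - f'(1)$, which is immediate from the fundamental theorem of calculus. I first record the basic shape of $\mathcal{L}_f$. Since $f$ is convex, $f'$ is nondecreasing, and $\exp$ is increasing. Hence $\mathcal{L}_f'(\Delta)\le 0$ for $\Delta<0$ and $\mathcal{L}_f'(\Delta)\ge 0$ for $\Delta>0$. Together with $\mathcal{L}_f(0)=0$, this gives $\mathcal{L}_f\ge 0$ with a minimum at $0$. I also note that $\mathcal{L}_f$ is convex, since its derivative is nondecreasing.

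The minimum at $0$ is strict. The normalisation $f''(1)=1>0$ makes $\mathcal{L}_f''(0)=f''(1)=1$, so $\mathcal{L}_f$ is strictly positive near $0$. Convexity then propagates this to all $\Delta\neq0$: a convex function that vanishes at $0$ and has positive slope on one side of $0$ cannot return to $0$. Strict convexity of $f$ is not needed, only $f''(1)>0$. \emph{Translation invariance} I interpret as $\mathcal{L}_f$ depending on $(\log p_\theta,\log p_\star)$ only through their difference, which holds by construction.

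For part 1, $\mathcal{J}_{\mu,f}(\theta)=\EE_\mu[\mathcal{L}_f(\Delta_\theta)]\ge 0$. If $p_\theta=p_\star$, then $\Delta_\theta\equiv 0$ on the relevant support, so $\mathcal{J}_{\mu,f}=0$ and $p_\theta=p_\star$ is a global minimiser. For the converse, a minimiser achieving $0$ must have $\Delta_\theta=0$ $\mu$-a.e., i.e.\ $p_\theta=p_\star$ on $\mathrm{supp}\,\mu\supseteq\mathrm{supp}\,p_\star$. Since $p_\star$ integrates to one over that set, $p_\theta$ puts all its mass there, so $p_\theta=p_\star$ everywhere.

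For part 2, I differentiate pointwise under the expectation, with the sampling distribution held fixed as autodiff does. The chain rule gives
\begin{equation*}
\nabla_\theta \mathcal{L}_f(\Delta_\theta(\by)) = \big(f'(p_\theta(\by)/p_\star(\by)) - f'(1)\big)\nabla_\theta\log p_\theta(\by),
\end{equation*}
using $\exp(\Delta_\theta)=p_\theta/p_\star$ and $\nabla_\theta\Delta_\theta=\nabla_\theta\log p_\theta$. Taking the expectation under $p_\theta$, the $f'(1)$ term vanishes by the score identity $\EE_{p_\theta}[\nabla_\theta\log p_\theta]=\nabla_\theta\int p_\theta=0$. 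What remains is exactly the log-derivative-trick gradient of $\mathcal{D}_f(p_\theta\|p_\star)$ stated just before the proposition.

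The main obstacle is the strictness and uniqueness argument in part 1, together with the support bookkeeping: $\mathcal{L}_f(\Delta)=0$ must force $\Delta=0$, and equality on $\mathrm{supp}\,\mu$ must extend to global equality. The $f''(1)=1$ normalisation plus convexity handles the first point, and the normalisation of $p_\star$ handles the second. The remaining step is to justify interchanging $\nabla_\theta$ with the integral, which I would simply assume under standard regularity conditions, as the paper implicitly does.
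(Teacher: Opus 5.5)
Your proposal is correct and follows essentially the same route as the paper: for Part 1 you use the FTC derivative $\mathcal{L}_f'(\Delta)=f'(e^{\Delta})-f'(1)$ and convexity of $\mathcal{L}_f$, and for Part 2 you use the chain rule plus the score identity to drop the $f'(1)$ baseline. The differences are small refinements. You obtain uniqueness of the minimum at $\Delta=0$ from the standing normalisation $f''(1)=1$ together with convexity, rather than assuming strict convexity of $f$ as the paper does. You also spell out the support argument that extends $p_\theta=p_\star$ from $\mathrm{supp}\,\mu$ to everywhere, which the paper leaves implicit.
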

Applying this construction to the reverse $\KL$ divergence directly recovers the loss in Section \ref{sec:KL_loss}:
\begin{example}
    For the reverse $\KL$ divergence we have $f(u) = u\log(u)$. This gives $f^{\prime}(u) = \log(u)+1$, $f^{\prime}(1) = 1$, and:
\begin{align*}
    \mathcal{L}_{u\log(u)}(\Delta_{\theta}(\by)) &= \int_0^{\Delta_{\theta}(\by)} \log(\exp(t)) dt \\
    &= \frac{1}{2}\circbrac{\Delta_{\theta}(\by)}^2
\end{align*}
\end{example}

In fact, this result is not specific to the square error loss and works for any \emph{translation invariant loss function}. Specifically let $\ell: \mathbb{R} \to \mathbb{R}$ be some strictly convex, differentiable function that is minimized at $\ell(0)$, with $\ell(y-\hat{y})$ giving the loss between the prediction $\hat{\by}$ and the target $\by$. 
For all such loss functions we can show the reverse result that using $\ell(\cdot)$ on the difference between target and model logprobs can be associated with a particular $f$ divergence using its on-policy gradients. Moreover, these maps are inverses of each other:
\begin{proposition}\label{prop:inverse_mapping}
Let $\ell: \mathbb{R} \to \mathbb{R}$ be a translation invariant loss function. Then the objective:
    \begin{equation*}
        \Tilde{\mathcal{J}}_{\mu,\ell}(\theta) = \mathbb{E}_{\by\sim \mu}\Big[ \ell\big(\Delta_{\theta}(\by)\big) \Big],
    \end{equation*}
is minimised when $p_{\theta} = p_{\star}$ so long as the support of $p_{\star}$ is contained in the support of $\mu$. Further it has the property that the its auto-differentiated gradients correspond to the gradients of a corresponding $f$-divergence:
\begin{align*}
 \EE_{\bx \sim p_{\theta}(\by)}\squarebrac{ \nabla_{\theta} \ell(\Delta_{\theta}(\by)) } =  \nabla_{\theta} \cD_{f_{\ell}}\circbrac{p_{\theta}\| p_{\star} }, 
\end{align*}
where $f_{\ell}$ is defined as:
\begin{equation*}
    f_{\ell}(u) = \lambda_1\int^u_{1}\ell'(\log t)dt +\lambda_2 (u-1) +c,
\end{equation*}
for $\lambda_i,c \in \mathbbm{R}$ chosen to satisfy $f_{\ell}(1) = 0,f^{\prime}_{\ell}(1) = 1$, and $f^{\prime\prime}_{\ell}(1) = 1$. Moreover this mapping is the inverse of that in Proposition \ref{prop:loss_form} in the sense that $f_{\mathcal{L}_f}=f$ and $\mathcal{L}_{f_\ell} = \ell$.
\end{proposition}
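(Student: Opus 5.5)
We prove the three claims separately, relying on the computation behind Proposition~\ref{prop:loss_form}. Throughout, $\ell$ is strictly convex and differentiable with its minimum at $0$, so $\ell'(0)=0$.

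\textbf{Minimiser.} Since $\ell$ is minimised at $0$, we have $\ell(\Delta_\theta(\by))\ge \ell(0)$ pointwise. Hence $\Tilde{\mathcal{J}}_{\mu,\ell}(\theta)\ge \ell(0)$, and the bound is attained when $p_\theta=p_\star$, since then $\Delta_\theta\equiv 0$ on the support of $\mu$. For the converse direction, strict convexity makes $0$ the unique minimiser of $\ell$. So any minimiser has $\Delta_\theta=0$ $\mu$-a.s., i.e.\ $p_\theta=p_\star$ on $\mathrm{supp}(\mu)$. This set contains $\mathrm{supp}(p_\star)$, and normalisation of $p_\theta$ then forces $p_\theta=p_\star$ everywhere.

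\textbf{Gradient identity.} Let $u(\by)=p_\theta(\by)/p_\star(\by)$, so that $\Delta_\theta=\log u$. Autodiff treats $p_\star$ as fixed, so
\[
\nabla_\theta \ell(\Delta_\theta(\by))=\ell'(\log u(\by))\,\nabla_\theta\log p_\theta(\by).
\]
The log-derivative formula for $\nabla_\theta\mathcal{D}_f$ stated above then shows that it suffices to have $f_\ell'(u)=\ell'(\log u)$ up to additive terms that vanish in expectation. Differentiating the definition gives $f_\ell'(u)=\lambda_1\ell'(\log u)+\lambda_2$. The constant $\lambda_2$ contributes $\lambda_2\,\EE_{p_\theta}[\nabla_\theta\log p_\theta]=0$. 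The factor $\lambda_1$ is fixed by the normalisation $f''_\ell(1)=\lambda_1\ell''(0)=1$, i.e.\ $\lambda_1=1/\ell''(0)$.

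This step is the main obstacle: the identity holds exactly only when $\ell''(0)=1$. Otherwise it holds up to the positive scalar $\lambda_1$. I would handle this by stating the standardisation $\ell''(0)=1$, which is the analogue of $f''(1)=1$ and is harmless because rescaling $\ell$ rescales the divergence. I would also check that $f_\ell$ is convex: $f_\ell''(u)=\lambda_1\ell''(\log u)/u\ge 0$ by convexity of $\ell$. The conditions $f_\ell'(1)=1$ and $f_\ell(1)=0$ then give $\lambda_2=1$ (using $\ell'(0)=0$) and $c=0$.

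\textbf{Inverse maps.} Both compositions reduce to the fundamental theorem of calculus plus the normalisations.
\begin{itemize}[leftmargin=*]
\item For $\mathcal{L}_{f_\ell}$: $\mathcal{L}_{f_\ell}(\Delta)=\int_0^\Delta \bigl(f_\ell'(e^t)-f_\ell'(1)\bigr)dt=\int_0^\Delta \ell'(t)\,dt=\ell(\Delta)-\ell(0)$. This is $\ell$ up to the irrelevant constant $\ell(0)$, which we fix by normalising $\ell(0)=0$.
\item For $f_{\mathcal{L}_f}$: $\mathcal{L}_f'(s)=f'(e^s)-f'(1)$, so $\int_1^u \mathcal{L}_f'(\log t)\,dt=f(u)-f(1)-f'(1)(u-1)=f(u)-(u-1)$. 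The normalisation constants then produce $\lambda_1=1$ (since $\mathcal{L}_f''(0)=f''(1)=1$), $\lambda_2=1$ and $c=0$, which returns exactly $f$.
\end{itemize}
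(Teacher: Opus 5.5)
Your proposal is correct and follows the paper's proof for the core step. Both arguments apply the chain rule to $\ell(\Delta_\theta)$ and use the log-derivative formula for $\nabla_\theta\mathcal{D}_f$. Both then match score-function coefficients so that $\lambda_2$ vanishes in expectation, check convexity via $f_\ell''(u)=\lambda_1\ell''(\log u)/u$, and obtain $\lambda_1=1/\ell''(0)$ from $f_\ell''(1)=1$.

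You are right that the literal identity and $\mathcal{L}_{f_\ell}=\ell$ need the extra normalisations $\ell''(0)=1$ and $\ell(0)=0$. The paper's proof only obtains equality up to the factor $\lambda_1$, and it never writes out the minimiser or inverse-map checks that your fundamental-theorem-of-calculus computations supply. One caveat: your "harmless rescaling" justification fails for strictly convex $\ell$ with $\ell''(0)=0$, such as $\ell(x)=x^4$. So $\ell''(0)>0$ is a genuine extra hypothesis, and the paper's statement needs it too.
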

The boundary conditions here ensure that the maps are the inverse of each other, but the gradient equivalence for all $\lambda_i,c$. However, this choice of parameter values does ensure that the gradient is equivalently scaled across divergences as the model approaches convergence. Appendix \ref{ap:KL_from_sq}, shows that starting from the squared loss we return to the $\KL$.

\subsection{DevGrad Loss for Batch-Wise Normalisation}\label{sec:bw_normalisation}

We now introduce the generalisation of the Vargrad loss, which can be applied to unnormalised target distributions. We refer to this as the \emph{DevGrad} loss, as it corresponds to replacing the batch-wise variance with the batch-wise \emph{generalised deviation} \citep{rockafellar2006generalized,rockafellar2013fundamental} of the difference in logprobs. This also has the property of centring the score function coefficients in the batch, which leads to reduced variance even if we are working with a normalised target distribution.

Let the target density be of the form $p_{\star}(\by) = \frac{1}{Z}\exp(\mathcal{R}(\by))$ where $\mathcal{R}(\by)$ is the reward function or (negative) energy function and  $Z = \int \exp(\mathcal{R}(\by)) d\by$ is the partition function. The partition function remains intractable, but we can use the same approach as Vargrad and estimate it as: 
\begin{align}\label{eq:log_z_devgrad}
\widehat{\log Z} \!=\! \min_{C} \!\tfrac{1}{B}\! \sum_{i} \mathcal{L}_f\Big(\Delta(\by_i) + C \Big).
\end{align}
where $\Delta(\by_i)=\log p_{\theta}(\mathbf{y}_i) - \mathcal{R}(\by_i)$ are the unnormalised difference in logprobs/energy functions. Given this we define the \emph{Devgrad} loss as follows: 
\begin{definition}
    Given a loss function, $\mathcal{L}_f$, defined as in Section \ref{sec:f_div_loss}, we define the batch-wise \textbf{Devgrad} loss for a batch $\mathcal{B} = \{\by_1, \dots, \by_B\}$ as:
\begin{align*}
    \mathcal{L}^{\mathrm{DG}}_{f}(\mathcal{B},\theta) \!=\! \tfrac{1}{B} \sum_{i} \mathcal{L}_f \Big(\Delta(\by_i) + \mathrm{SG}\squarebrac{\widehat{\log Z}} \Big).
\end{align*}
where $\widehat{\log Z}$ satisfies Equation \ref{eq:log_z_devgrad}. 
\end{definition}
Again, this corresponds to taking the batch wise \emph{generalised deviation} \citep{rockafellar2006generalized,rockafellar2013fundamental} of the batch wise difference in logprobs. When $\cL_f(y)= y^2$ this recovers the variance, but for other $\cL_f$ we recover different deviations, such as the mean absolute deviation around the median when $\phi_f(y)= \lvert y\rvert$, which in terms of $f$-divergences corresponds to the total variation. As we show in Appendix \ref{ap:grad_var}, using $\widehat{\log Z}$ also centres the batch wise score function coefficients, which leads to variance reduction in the same way as Vargrad.


\subsection{Tempered Loss}

If the posterior probabilities are defined by the reward tilted distribution as:
\begin{align}\label{eq:boltzman}
    \pi_{\star}(\by\lvert\bx) = \frac{1}{Z(\bx)}\pi_{\mathrm{ref}}(\by \mid \bx ) \exp\circbrac{\beta^{-1} r(\bx,\by)} ,
\end{align}
for small $\beta$ we can run into problems of exploding loss and therefore gradients due to the $\exp\circbrac{\beta^{-1} r(\bx,\by)}$ term. For example, if the reward is bounded between $[0,1]$ and $\beta=0.005$ the unnormalised probabilities can be of magnitude $e^{200}$, causing significant numerical overflow. Alternatively, if we were to use clipping we would end up losing a significant amount of signal. E.g. clipping the exponential at values greater than $e^{20}$ would leave us unable to differentiate between samples with rewards greater than $0.1$. 

To resolve this, we first define the tempered distribution:

\begin{definition}
    For a distribution $p(\by|\bx)$, we define the tempered distribution $\Tilde{p}_{\beta}(\by|\bx)$ as the distribution that is proportional to ${p}^{\beta}(\by| \bx)$. In terms of energy functions we have that if $p(\by|\bx) = \frac{1}{Z(\bx)}\exp(\mathcal{R}(\by|\bx))$ then:
    \begin{align}
        {p}^{\beta}(\by| \bx) = \frac{1}{\Tilde{Z}(\bx)}\exp(\beta\mathcal{R}(\by|\bx)).
    \end{align}
\end{definition}
Now, as mentioned above, for the tilted distribution $\pi_{\star}$ given in Equation \ref{eq:boltzman}, we have the problem that the magnitude of the energy function $\mathcal{R}_{\star}(\by|\bx)$ scales with $\frac{1}{\beta}$ making training with small $\beta$ impractical. However, the energy function of the tempered distribution is $\beta\mathcal{R}_{\star}(\by|\bx) = \beta \log \pi_{\mathrm{ref}} +r(\bx,\by)$ which is bounded in magnitude as $\beta \to 0$. We also have that if the tempered target and model logprobs are equal, the untempered logprobs must also be equal. Given this, we can now define the tempered loss as follows:
\begin{definition}
    We define the tempered loss relative to an $f$-divergence $f$ as:
    \begin{align}
        \Tilde{\mathcal{L}}_{f,\beta}(\Delta_{\theta}(\by)) = \frac{1}{\beta}{\mathcal{L}}_{f}(\beta\Delta_{\theta}(\by))
    \end{align} where $\Delta_{\theta}(\by)\! =\! \log \pi_{\theta}(\by)-\mathcal{R}_{\star}(\by|\bx)$.
\end{definition}

The scaling by $\frac{1}{\beta}$ is to ensure the gradient size doesn't vary with $\beta$. The tempered loss formulation also provides a new viewpoint on the KIMI loss \citep{team2025kimi2,team2025kimi1.5} as it is the tempered loss of the reverse $\KL$ loss:
\begin{example}
    Let $f(u) = u\log u$, then we have the tempered loss is equal to:
\begin{align*}
    \Tilde{\mathcal{L}}_{f,\beta}(\Delta_{\theta}(\by)) = \frac{1}{2\beta}\mathbb{E}_{\mathbf{y} \sim \pi_{\theta}} \!\left[ \left( \beta\log \frac{\pi_{\theta}(\mathbf{y}|\mathbf{x})}{\mathcal{R}(\mathbf{y}|\mathbf{x})}-\log \Tilde{Z} \right)^2 \right]
\end{align*}
Which leads to the batch wise normalisation $\widehat{\log Z} = \bar{r}$ used by KIMI under the assumption that $0 \approx \beta\circbrac{\log { \pi_{\theta}\circbrac{\by_i|\bx}} - \log { \pi_{\theta_\mathrm{ref}}\circbrac{\by_i|\bx}}}$, as discussed in \ref{sec:KL_loss}.
\end{example}

\section{Generalisation to Generative Flow Networks}
\label{sec:gflownets}


We now apply the novel loss family detailed in Section \ref{sec:f_div_loss} to Generative Flow Networks (GFlowNets) \citep{bengio2021flow,bengio2023gflownet}. GFlowNets are a family of probabilistic methods that amortise sampling from spaces with compositional structure. 
One of the key benefits of GFlowNets is that they allow for both on and off-policy training, with exploratory benefits from  off-policy samples. 

In the on-policy case there exist partial equivalences between GFlowNets and hierarchical variational inference (HVI) \citep{malkin2023gflownets}, in the sense that the expected gradients with certain losses \citep{malkin2022trajectory} match the expected gradients from performing HVI with the KL. We demonstrate that our losses naturally extend these equivalences to all other $f$-divergences and provide a new family of losses for training GFlowNets on- and off-policy. First, we provide a brief background on GFlowNets, sticking to discrete GFlowNets for simplicity, though GFlowNets have been extended to the continuous case \citep{lahlou2023theory}.

\subsection{Background: GFlowNets, Trajectory Balance, and Variational Inference}

Following \citep{bengio2021flow}, we define a DAG $\cG = (\cS,\cA)$ with states $\cS$, actions $\cA$, initial state $\bs_0$, and terminal states $\cX$. A complete trajectory $\tau = (\bs_0\to\cdots\to \bs_n)$ ends at $\bx_{\tau} \in \cX$. GFlowNets aim to sample $\bx \in \cX$ proportional to a reward $\cR(\bx)$ by learning a forward policy $\pi_{F}(\cdot|\bs,\theta)$ from one state to the next. This induces a trajectory distribution $\pi_{F}(\tau |\theta) = \prod_{(\mathbf{s}_{i}, \mathbf{s}_{i-1}) \in \tau}\pi_{F}(\bs_i|\bs_{i-1})$ and a marginal over terminal states $\pi_{F}(\bx_{\tau}\mid \theta) = \sum_{\tau: \bx_{\tau}=\bx}\pi_{F}(\tau |\theta)$.

Direct likelihood maximization is intractable due to the marginal sum. \citet{malkin2022trajectory} address this by introducing a learnable $Z$ and backward policy $\pi_{B}(\tau|\bx_{\tau},\phi) = \prod \pi_{B}(\mathbf{s}_{i-1} \mid \mathbf{s}_i, \phi)$ and enforcing the \emph{trajectory balance} constraint:
\begin{align*}
    Z\pi_{F}(\tau|\theta) = \cR(\bx_{\tau})\pi_{B}(\tau|\bx_{\tau},\phi).
\end{align*}

\begin{figure*}[t!]
    \centering
    \begin{subfigure}[t]{0.475\textwidth}
        \centering
        \includegraphics[width=\textwidth,height=4.5cm,keepaspectratio]{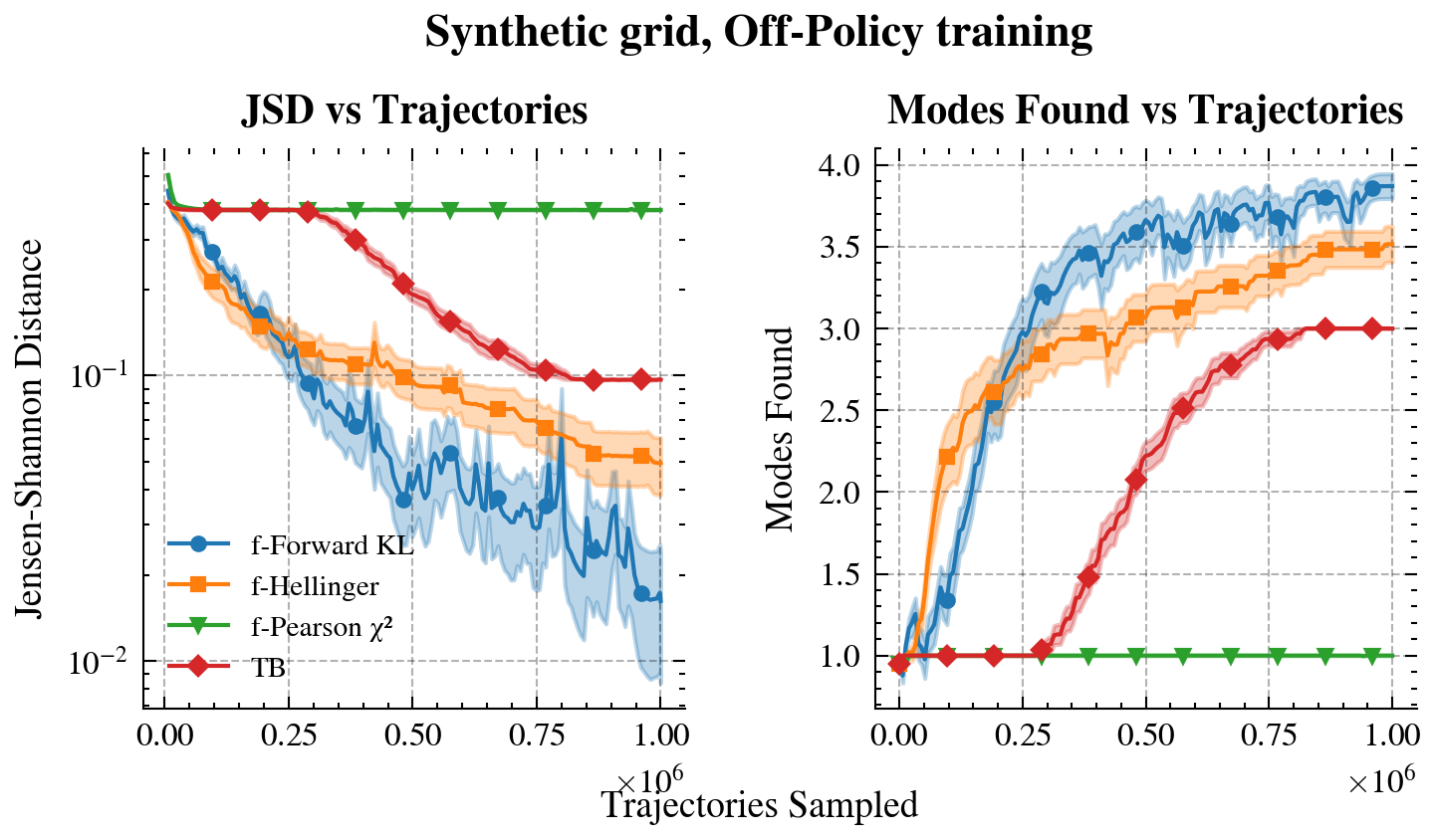}
        \caption{Comparison of different losses on the grid task of \citet{bengio2021flow}. The forward KL and Hellinger are more mode covering than standard trajectory balance, meaning they find all 4 modes and fit the distribution quicker. Pearson is more mode seeking than trajectory balance and attaches to the first mode it finds.}
        \label{fig:grid_results}
    \end{subfigure}
    \hfill
    \begin{subfigure}[t]{0.475\textwidth}
        \centering
        \includegraphics[width=\textwidth,height=4.5cm,keepaspectratio]{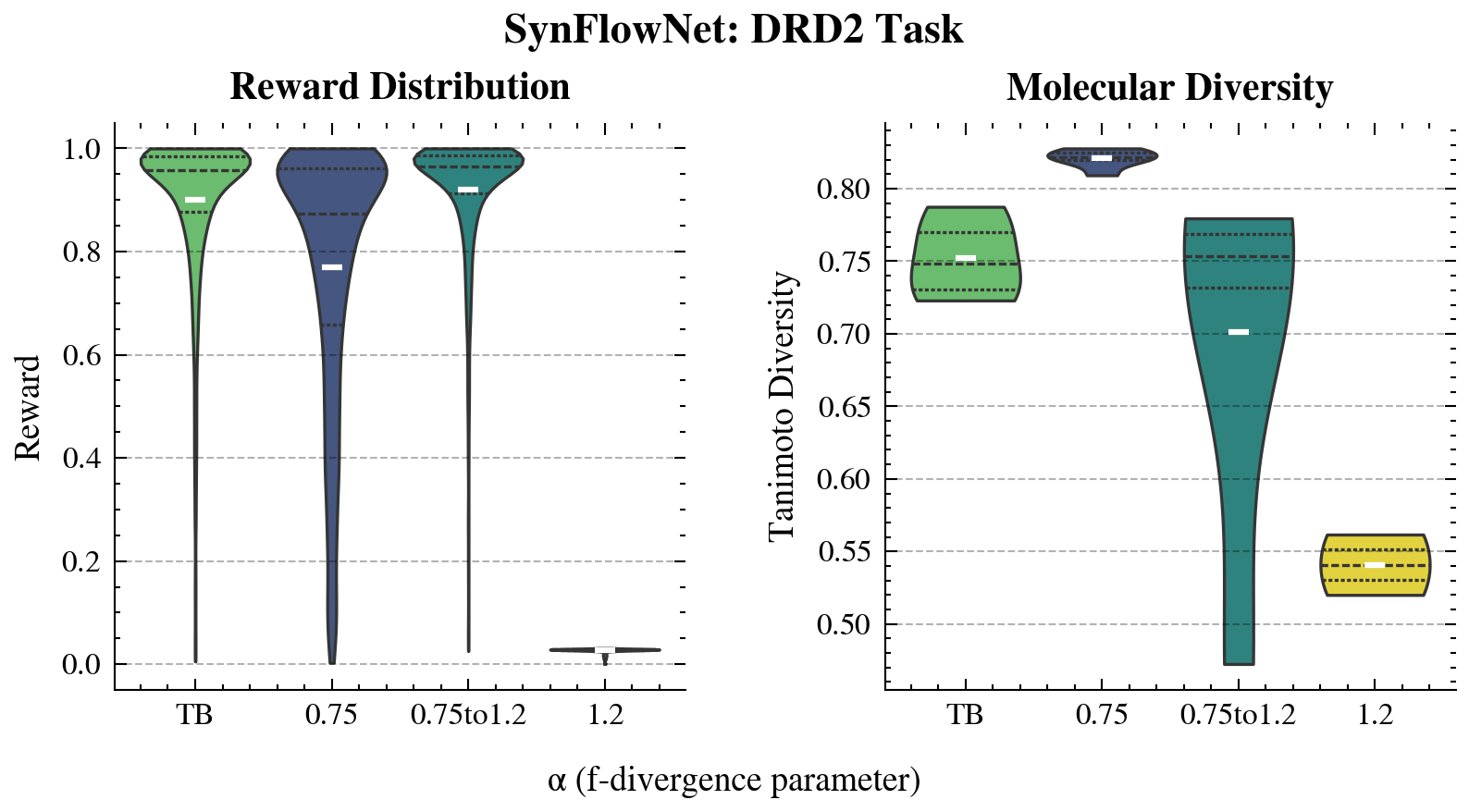}
        \caption{Training a SynFlowNet with a variety of losses based on $\alpha$-divergences. In $\alpha$ divergences, $\alpha$ is a controllable parameter with lower values incentivising mode covering and higher values incentivizing mode seeking, standard trajectory balance corresponds to $\alpha=1$. Annealing $\alpha$ during training allows these to be traded off, lead to the sampling of more diverse high reward molecules. }
        \label{fig:synflownet}
    \end{subfigure}
    \label{fig:combined_results}
\end{figure*}

Here $\pi_{B}(\tau \mid \mathbf{x}_{\tau}, \phi) = \prod_{(\mathbf{s}_{i-1}, \mathbf{s}_i) \in \tau} \pi_{B}(\mathbf{s}_{i-1} \mid \mathbf{s}_i, \phi)$ and $Z = \sum_{\bx\in \cX}\cR(\bx)$. \citet{malkin2022trajectory} show that if $\theta,\phi$ are such that the trajectory balance constraint is satisfied for all $\tau\in \cT$, then $\pi_{F}(\bx|\theta) \propto \cR(\bx)$. Defining the log trajectory discrepancy as:
\begin{align*}
    \Delta(\tau,\theta,\phi) &= \log \frac{Z_{\phi} \pi_{F}(\tau|\theta)}{\cR(\bx_{\tau}) \pi_{B}(\tau|\bx_{\tau},\phi)} ,
\end{align*}
where $Z_{\phi}$ is now a learnable parameter, the trajectory balance constraint is satisfied if $\Delta(\tau,\theta,\phi) = 0$ for all $\tau\in \cT$, which leads \citet{malkin2022trajectory} to define the trajectory balance loss as:
\begin{align}
    \cL_{\mathrm{TB}}(\tau,\phi,\theta) \coloneqq \circbrac{\Delta(\tau,\theta,\phi)}^2.
\end{align}
The parameters, $\theta,\phi$, are then updated using the expected gradients of the loss as $\EE_{\mu(\tau)} \squarebrac{\nabla{\theta}\cL_{\mathrm{TB}}(\tau,\phi,\theta )}$ and $\EE_{\mu(\tau)} \squarebrac{\nabla{\phi}\cL_{\mathrm{TB}}(\tau,\phi,\theta) }$, using expected gradients under a behavior policy $\mu(\tau)$, typically a tempered $\pi_F$. 

\citet{malkin2023gflownets} relate this to hierarchical variational inference (HVI) \citep{ranganath2016hierarchical}, viewing $\pi_{B}$ as a posterior and $\pi_F$ as a generative model minimizing the $f$-divergence :
\begin{align}
    \cL_{\mathrm{HVI,f}}(\pi_F,\pi_B) &= \cD_f\circbrac{\pi_F\|\pi_B} \\
    &=\EE_{\tau\sim\pi_B}\squarebrac{f\circbrac{\frac{\pi_{F}(\tau)}{\pi_{B}(\tau)}}}.
\end{align}
For the reverse KL ($\mathrm{R-}\KL$) and KL, \citet{malkin2022gflownets} show that their are gradient equivalences between HVI and on-policy GflowNet training as:
\begin{align*}
    \nabla_{\theta}\cD_{\mathrm{R-}\KL}(\pi_{B,\phi}\|\pi_{F,\theta}) &= \EE_{\tau\sim\pi_{F,\theta}} \squarebrac{\nabla_{\theta}\cL_{{\mathrm{TB}}}(\tau,\phi,\theta )}, \\
    \nabla_{\phi}\cD_{\KL}(\pi_{B,\phi}\|\pi_{F,\theta}) &= \EE_{\tau\sim\pi_{B,\phi}} \squarebrac{\nabla_{\phi}\cL_{{\mathrm{TB}}}(\tau,\phi,\theta )}.
\end{align*}
If $\cG$ is a tree, $\pi_{B}=1$, meaning we do not have to learn a backward policy and the gradient equivalence is the same as that discussed in Section \ref{sec:KL_loss}. 
\subsection{$f$-Trajectory Balance}
We now demonstrate that the loss introduced in Section \ref{sec:f_div_loss} can be applied to enforce the trajectory balance constraints in GFlowNets and that doing so generalises the result of \citet{malkin2022gflownets} to arbitrary $f$-divergences. 
\begin{proposition}\label{prop:gflownet_generalisation}
For a convex function, $f$, we have that $\mathcal{L}_{f}$ applied to log trajectory generalises trajectory balance in the sense that we have:
\begin{align*}
    \nabla_{\theta}\cD_{f}(\pi_F\|\pi_B) =  \EE_{\tau\sim\pi_{F,\theta}} \squarebrac{\nabla_{\theta}\mathcal{L}_{f}(\tau,\phi,\theta )} \\
    \nabla_{\phi}\cD_{h}(\pi_B\|\pi_F) =  \EE_{\tau\sim\pi_{B,\phi}} \squarebrac{\nabla_{\phi}\mathcal{L}_{f}(\tau,\phi,\theta )}
\end{align*}
where $h$ is defined as:
\[
    h(u) = \int_1^u \left( 2 - f'\left(\frac{1}{t}\right) \right) dt.
\]
\end{proposition}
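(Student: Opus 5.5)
The plan is to reduce both identities to the score-function computation behind Proposition~\ref{prop:loss_form}, applied once with $\pi_F$ as the model and once with $\pi_B$ as the model. Following \citet{malkin2023gflownets}, I will interpret $\pi_B$ inside the divergences as the normalised trajectory distribution $\pi_B(\tau)=\cR(\bx_\tau)\pi_B(\tau\mid\bx_\tau,\phi)/Z$. I will also assume $Z_\phi$ equals this normaliser, so that
\[
e^{\Delta(\tau,\theta,\phi)}=\pi_F(\tau\mid\theta)/\pi_B(\tau).
\]
The derivative of the loss is available directly from its definition: $\mathcal{L}_f'(\Delta)=f'(e^\Delta)-f'(1)=f'(e^\Delta)-1$.

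\textbf{The $\theta$-identity.} Only the forward policy depends on $\theta$, so $\nabla_\theta\Delta=\nabla_\theta\log\pi_F(\tau\mid\theta)$. Auto-differentiation therefore gives
\[
\nabla_\theta\mathcal{L}_f=\bigl(f'(\pi_F/\pi_B)-1\bigr)\nabla_\theta\log\pi_F .
\]
Take the expectation under $\tau\sim\pi_{F,\theta}$. The constant term vanishes because $\EE_{\pi_F}[\nabla_\theta\log\pi_F]=0$. What remains is exactly the log-derivative formula for $\nabla_\theta\cD_f(\pi_F\|\pi_B)$ stated before Proposition~\ref{prop:loss_form}, with $p_\theta=\pi_F$ and $p_\star=\pi_B$. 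This is just Proposition~\ref{prop:loss_form}, part~2, on trajectory space, so this half can be cited.

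\textbf{The $\phi$-identity.} Here $\nabla_\phi\Delta=-\nabla_\phi\log\pi_B(\tau)$, which gives
\[
\nabla_\phi\mathcal{L}_f=\bigl(1-f'(e^{\Delta})\bigr)\nabla_\phi\log\pi_B .
\]
Now $\pi_B$ is the model and $\pi_F$ the target, with ratio $u=\pi_B/\pi_F=e^{-\Delta}$. The log-derivative formula gives
\[
\nabla_\phi\cD_h(\pi_B\|\pi_F)=\EE_{\pi_B}\bigl[h'(\pi_B/\pi_F)\,\nabla_\phi\log\pi_B\bigr].
\]
Differentiating the stated $h$ gives $h'(u)=2-f'(1/u)=2-f'(e^{\Delta})$. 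This differs from $1-f'(e^\Delta)$ only by the constant $1$. That constant contributes $\EE_{\pi_B}[\nabla_\phi\log\pi_B]=0$, so the two expectations agree.

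I will also verify that $h$ is admissible:
\begin{itemize}
\item $h''(u)=f''(1/u)/u^2\ge 0$, so $h$ is convex;
\item $h(1)=0$ directly from the integral;
\item $h'(1)=2-f'(1)=1$ and $h''(1)=f''(1)=1$, using the normalisation $f'(1)=1$, $f''(1)=1$.
\end{itemize}
This shows where the constant $2$ in $h$ comes from.

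\textbf{Main obstacle: the partition function.} The delicate point is the role of $Z_\phi$, since it is a learnable parameter inside $\phi$ and in general is not the true normaliser. I will first try to fix it by treating $\pi_B$ as the normalised distribution $\cR\pi_B/Z_\phi$ evaluated at the optimal (or exact) $Z$. If that is unavailable, I will restrict the $\phi$-gradient to the backward-policy parameters. In that case $Z_\phi$ enters only through $\log Z_\phi$, which shifts $\Delta$ by a constant, and the identity holds exactly once $Z_\phi=Z$. Stating this assumption explicitly, as is implicit in the KL case of \citet{malkin2023gflownets}, is what makes both identities exact rather than approximate.
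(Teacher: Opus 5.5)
Your proposal is correct and follows the paper's proof essentially step for step. You compute $\mathcal{L}_f'(\Delta)=f'(e^\Delta)-f'(1)$, use $\nabla_\theta\Delta=\nabla_\theta\log\pi_F$ and $\nabla_\phi\Delta=-\nabla_\phi\log\pi_B$, match the score-function coefficients against the log-derivative formulas for $\cD_f(\pi_F\|\pi_B)$ and $\cD_h(\pi_B\|\pi_F)$, and discard constant offsets because $\EE[\nabla\log\pi]=0$.

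Two of your additions are sound refinements of what the paper handles only by ``absorbing $Z$ and $R$ into the definition''. First, you check that $h$ is convex with $h(1)=0$ and $h'(1)=h''(1)=1$, which explains the constant $2$. Second, you assume $Z_\phi=Z$ and restrict the $\phi$-gradient to the backward-policy parameters. That restriction is genuinely needed: the $\log Z_\phi$ component of the loss gradient carries no score function, so the constant shifts used to match $h'$ would not cancel there.
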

We demonstrate in Appendix \ref{ap:no_backwards_grad} that the on-policy  gradients of the backwards policy do not corresponds to $f$-divergence minimisation, as with trajectory balance. In this case their validity comes from the fact $\cL_{f}$ is a valid loss function. \citet{malkin2022trajectory} also demonstrate that  the trajectory balance loss has lower variance than a REINFORCE estimator as the model approaches the optimum, which we show holds for $f$-trajectory balance in Appendix \ref{ap:f_traj_Balance_variance}.

\section{Experiments}
We demonstrate our loss family on four tasks: the synthetic grid task for GFlowNets \citep{bengio2021flow}, molecule sampling with SynFlowNet \citep{cretu2025synflownet}, diffusion model tuning \citep{venkatraman2024amortizing}, and asynchronous LLM fine-tuning on GSM8k \citep{cobbe2021training} and Hendrycks MATH \citep{hendrycks2measuring}.

We provide a complete list of $f$-divergences used in Appendix \ref{ap:f_traj_Balance_variance}, but most important for understanding our results is the family of $\alpha$-divergences, generated by $f(u) = \frac{u^\alpha - u}{\alpha(\alpha-1)} + \frac{\alpha-1}{\alpha}(u-1)$. The key point about this family is not just that it contains many of the most common $f$-divergences, including the forward and reverse KL when $\alpha$ is 0 and 1 respectively\footnote{The value at these points is defined via the limit.}, but that the $\alpha$ parameter controls the the mode covering vs mode seeking behaviour, with lower values of $\alpha$ leading to more mode covering losses. The $\alpha$ divergence loss, $\cL_{\alpha}$, is given by:
\begin{figure*}[t!]
    \centering
    \begin{subfigure}[t]{0.44\textwidth}
        \centering
        \includegraphics[width=\textwidth,height=3.5cm,keepaspectratio]{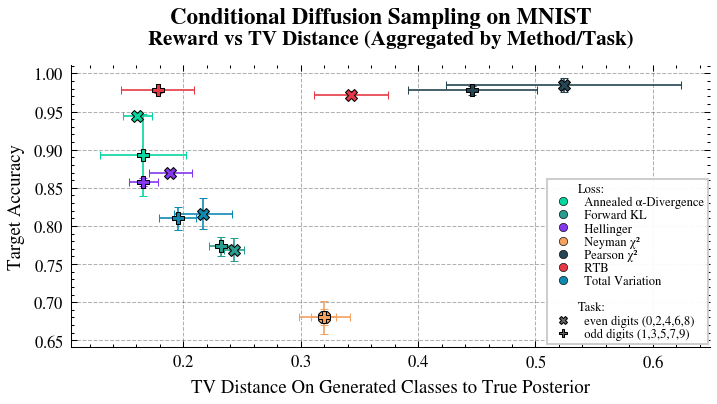}
        \caption{Tradeoff in reward vs divergence from the prior when tuning a pretrained diffusion model for MNIST digits to generate odd and even digits respectively.}
        \label{fig:tradeoff_diffusion}
    \end{subfigure}
    \hfill 
    \begin{subfigure}[t]{0.5\textwidth}
        \centering
        \includegraphics[width=\textwidth,height=3.5cm,keepaspectratio]{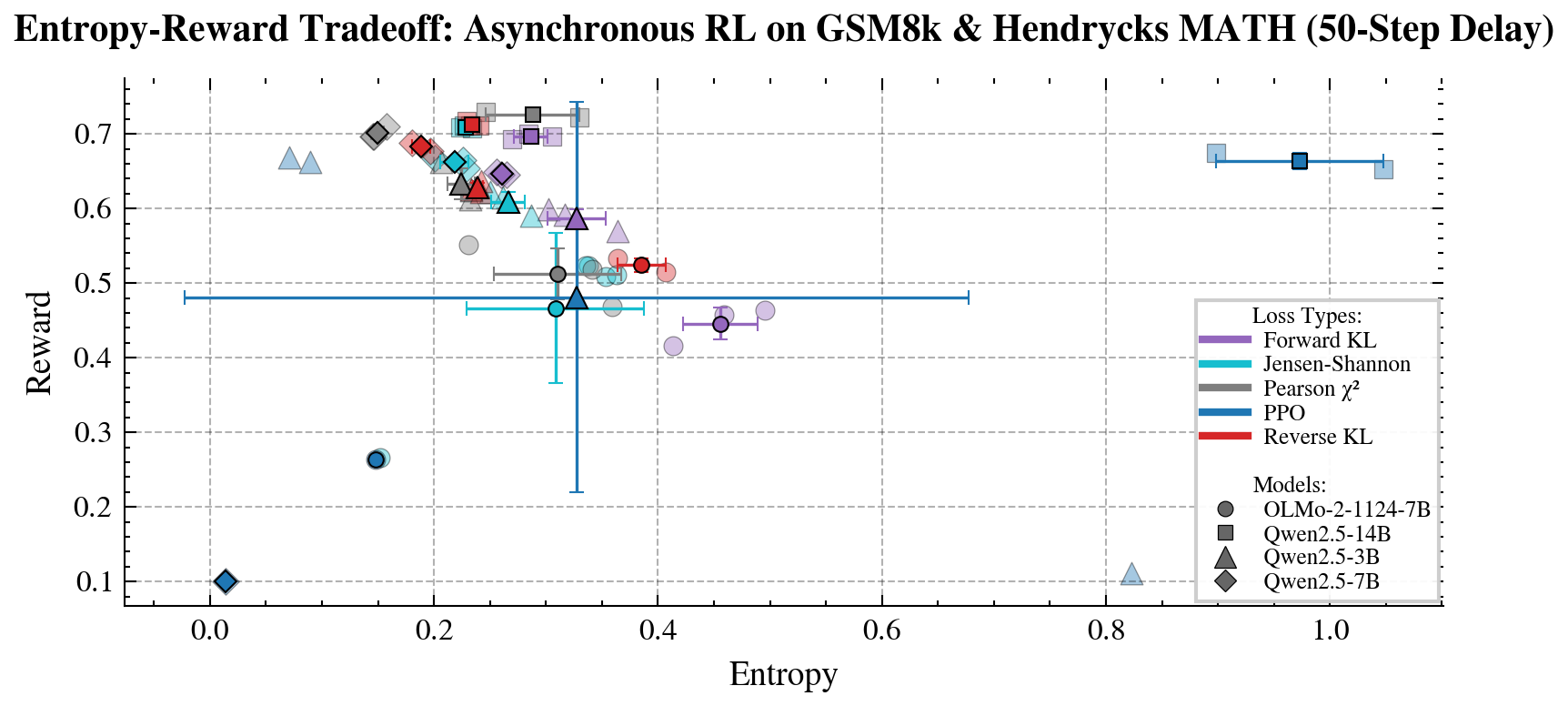}
        \caption{Entropy-Reward Tradeoff when tuning LLMs 50 steps asynchronously with verifiable rewards on GSM8k \citep{cobbe2021training} and Hendrycks MATH \citep{hendrycks2measuring}. }
        \label{fig:async_llm}
    \end{subfigure}
    \label{fig:combined_tradeoffs}
\end{figure*}

\begin{equation*}
    \mathcal{L}_{\alpha}(\Delta) = \frac{1}{(\alpha-1)^2}e^{(\alpha-1)\Delta} - \frac{\Delta}{\alpha-1} - \frac{1}{(\alpha-1)^2}.
\end{equation*}
where trajectory balance emerges in the limit as $\alpha\to 1$
\subsection{GFlowNet Experiments}
\subsubsection{Synthetic Grid}
We demonstrated this trade-off using the 2D grid experiment from \citet{bengio2021flow, malkin2022trajectory}, where an agent navigates a hypergrid to discover four modes in the grid corners. Actions include moving to adjacent coordinates or terminating the trajectory and low-reward regions between modes simulate the mode-discovery challenges typical of MCMC and sampling algorithms. We evaluated four $\alpha$-divergence losses, $\cL_{\alpha}$, on this task, the forward KL ($\alpha=0$), Hellinger ($\alpha=0.5$), reverse KL ($\alpha=1$), and Pearson $\chi^2$ ($\alpha=2$). Reverse KL corresponds to training with standard trajectory balance.

Figure \ref{fig:grid_results} demonstrates the mode covering vs mode seeking property, where lower values of $\alpha$ in the forward KL and Hellinger losses lead the agent to find more modes than standard trajectory balance and to find modes quicker. Given one of the key motivations of GFlowNets is its ability to find modes, this clearly demonstrates the benefits of our loss family. On the other hand, the Pearson $\chi^2$ is more mode seeking than trajectory balance, which caused it to get stuck in the first mode it found. This behaviour will in-fact be helpful later for getting reward maximising losses for RL.

\subsubsection{SynFlowNet}
We now apply our losses to tuning SynFlowNets \citep{cretu2025synflownet}, a class of GFlowNets constrained to sample from the space of synthetically accessible molecules. Again the goal is to sample this space proportional to a given reward model, such as predicted binding energy or bioactivity. For these tasks we find that the $\alpha$-divergence losses, $\cL_{\alpha}$, required much less extreme $\alpha$ values for stable training, with $\alpha$ much closer to the trajectory balance value of $\alpha=1$.

We evaluate $\alpha \in \{0.75, 1.2\}$ against TB across 3 SynFlowNet tasks, sweeping the inverse temperature. $\alpha=0.75$ yields more diverse molecules than TB while $\alpha=1.2$ leads to mode collapse. Annealing $\alpha$ from $0.75$ to $1.2$ during training achieves the best of both worlds, as shown for DRD2 in Figure~\ref{fig:synflownet}.

\subsection{Generative Model Experiments}

\subsubsection{Conditionally Sampling Diffusion Models}
Whilst we have presented GFlowNets for discrete spaces, they have been extended to continuous ones \citep{lahlou2023theory}, and from this applied to tuning of diffusion models \citep{bernerdiscrete,venkatraman2024amortizing}. Specifically, \citet{venkatraman2024amortizing} view generations as trajectories sampled from a GFlowNet, allowing for a pretrained diffusion model, $p_{\mathrm{prior}}(\bx)$, to be tuned to an unnormalised posterior distribution $p_{\mathrm{post}}(\bx) \propto r(\bx)p_{\mathrm{prior}}(\bx)$ without evaluation of the final likelihoods, only the intermediary steps. 

To demonstrate that $f$-trajectory balance can be applied in in this case, we repeated the experiments from \citet{venkatraman2024amortizing} by tuning a pre-trained  diffusion model for MNIST digits to conditionally sample either odd or even numbers. In this case, the target posterior has multiple modes, one for each target digit, and a correctly fitted posterior would sample them evenly. Figure \ref{fig:tradeoff_diffusion} demonstrates that standard trajectory balance fails to do this, especially for even digits where Appendix \ref{ap:f_traj_Balance_variance} shows it oversamples $0$s and $6$s . Alternative $f$-trajectory balance losses are able to sample the modes more evenly, with an annealed alpha divergence again getting the best trade-off. 



\subsubsection{Asynchronously Tuning LLMs with RL}
Finally, as a classic example of the entropy-reward trade-off in reinforcement learning with verifiable rewards of large large models, we fine-tune a number of models on a mix of questions on GSM8k \citep{cobbe2021training} and Hendrycks MATH \citep{hendrycks2measuring}. To demonstrate the off policy validity, we train asynchronously with a 50 step delay. For models, we use the Qwen 2.5 family of sizes 3b-14b and OLMo-2-1124-7B, aiming to demonstrate our losses in a variety of model sizes and family's. As we find that standard GRPO cannot be applied 50 steps asynchronously, we compare against the optimised PPO loss implementation from \citet{primeintellect2025prime-rl}, which includes changes such as CISPO clipping \citep{chen2025minimax} and DAPO \citep{yu2025dapo}. For our losses, we use the tempered DevGrad losses (Appendix~\ref{app:loss_derivations}) without clipping, importance weighting, or masking.

We train for 300 steps with 3 seeds per configuration; 
 The Reverse KL, Pearson, and Forward KL again demonstrate the same trade-off (Figure~\ref{fig:async_llm}). We also include the Jensen-Shannon divergence (requiring numerical normalisation), confirming applicability to the full family of $f$-divergences. The optimised PPO produces inconsistent results across model classes owing to off-policy instability. 

\section{Conclusion}

In this work, we have derived a loss family that naturally extends the Vargrad loss for generative models and the trajectory balance loss for GFlowNets by extending their property of on-policy gradients matching the $\KL$ gradients to the whole family of $f$-divergences. We have demonstrated that doing so allows us to control mode-seeking vs mode-covering behaviour in a variety of generative models, whilst training on- and off-policy.

\section*{Acknowledgments}

The authors would like to thank Julien Roy and Emmanuel Bengio for feedback on this work.
\bibliography{references}
\bibliographystyle{abbrvnat}

\appendix
\onecolumn

\section{Appendix A. Proofs and Derivations}

In this appendix, we provide the full derivations for the loss family introduced in \Cref{sec:f_div_loss}. We prove that the translation invariant loss function $\mathcal{L}_f$ defined in \Cref{prop:loss_form} satisfies the two key properties: minimizing the population loss recovers the target distribution (off-policy consistency), and the on-policy auto-differentiated gradients recover the gradients of the associated $f$-divergence. We also establish the inverse mapping described in \Cref{prop:inverse_mapping}.

\subsection{Proof of Proposition 4.3}

\textbf{Proposition 4.3.} \textit{Let $\Delta_\theta(y) = \log p_\theta(y) - \log p_\star(y)$. The function:}
\begin{equation*}
\mathcal{L}_f(\Delta_\theta(y)) = \int_{0}^{\Delta_\theta(y)} \left( f'(\exp(t)) - f'(1) \right) dt
\end{equation*}
\textit{is a translation invariant loss function that generalizes the construction in Section 3.2 in that:}
\begin{enumerate}
    \item \textit{The population loss $\mathcal{J}_{\mu, f}(\theta) = \mathbb{E}_{y \sim \mu}[\mathcal{L}_f(\Delta_\theta(y))]$ is minimized when $p_\theta = p_\star$ (assuming support coverage).}
    \item \textit{If $\mu = p_\theta$, the expected auto-differentiated gradients match the $f$-divergence gradient: $\nabla_\theta D_f(p_\theta \| p_\star) = \mathbb{E}_{p_\theta}[\nabla_\theta \mathcal{L}_f(\Delta_\theta(y))]$.}
\end{enumerate}

\subsubsection{Proof of Part 1: Convexity and Global Minimizer}

Let the scalar loss function with respect to the log-probability difference be denoted by $L(\Delta) = \int_{0}^{\Delta} (f'(\exp(t)) - f'(1)) dt$. We determine the properties of $L(\Delta)$ by analyzing its derivatives.

Using the Fundamental Theorem of Calculus, the first derivative is:
\begin{equation*}
    L'(\Delta) = f'(\exp(\Delta)) - f'(1).
\end{equation*}
Differentiating again with respect to $\Delta$, we obtain the second derivative:
\begin{equation*}
    L''(\Delta) = f''(\exp(\Delta)) \cdot \exp(\Delta).
\end{equation*}
Since $f$ is a convex function defining an $f$-divergence, $f''(u) \ge 0$ for all $u > 0$. Additionally, the exponential function $\exp(\Delta)$ is strictly positive for all real $\Delta$. Therefore:
\begin{equation*}
    L''(\Delta) \ge 0 \quad \forall \Delta \in \mathbb{R}.
\end{equation*}
This confirms that $L(\Delta)$ is a convex function. To find the global minimizer, we solve for the stationary point $L'(\Delta) = 0$:
\begin{align*}
    f'(\exp(\Delta)) - f'(1) &= 0 \\
    f'(\exp(\Delta)) &= f'(1).
\end{align*}
Assuming strict convexity of $f$ (implying $f'$ is strictly monotonic), this equality holds if and only if:
\begin{equation*}
    \exp(\Delta) = 1 \implies \Delta = 0.
\end{equation*}
Since $L(\Delta)$ is convex and its derivative vanishes at $\Delta = 0$, $\Delta = 0$ is the global minimum.
Consequently, the expected population loss $\mathbb{E}_{y \sim \mu}[L(\Delta_\theta(y))]$ is minimized when $\Delta_\theta(y) = 0$ almost everywhere, which implies $\log p_\theta(y) = \log p_\star(y)$, or $p_\theta = p_\star$.

\subsubsection{Proof of Part 2: Gradient Matching Condition}

We show that the expected gradient of the $f$-divergence on policy equals the expected auto-differentiated gradient of the loss.

\textbf{Gradient of the $f$-divergence:}
Using the log-derivative trick ($\nabla_\theta p_\theta = p_\theta \nabla_\theta \log p_\theta$) and defining the likelihood ratio $u(y) = p_\theta(y)/p_\star(y)$, the gradient is:
\begin{align*}
    \nabla_\theta D_f(p_\theta \| p_\star) &= \nabla_\theta \mathbb{E}_{p_\star} \left[ f(u(y)) \right] \nonumber \\
    &= \mathbb{E}_{p_\star} \left[ f'(u(y)) \frac{\nabla_\theta p_\theta(y)}{p_\star(y)} \right] = \mathbb{E}_{p_\theta} \left[ f'(u(y)) \nabla_\theta \log p_\theta(y) \right].
\end{align*}
Since $\mathbb{E}_{p_\theta}[\nabla_\theta \log p_\theta(y)] = 0$, we can subtract a constant baseline $f'(1)$ without changing the value:
\begin{equation*}
    \nabla_\theta D_f(p_\theta \| p_\star) = \mathbb{E}_{p_\theta} \left[ (f'(u(y)) - f'(1)) \nabla_\theta \log p_\theta(y) \right].
\end{equation*}

\textbf{Gradient of the Surrogate Loss:}
The auto-differentiated gradient of the loss sample $\mathcal{L}_f$ with respect to $\theta$ applies the chain rule to the term inside the integral:
\begin{align*}
    \nabla_\theta \mathcal{L}_f(\Delta_\theta(y)) &= \frac{\partial \mathcal{L}_f}{\partial \Delta} \nabla_\theta (\log p_\theta(y) - \log p_\star(y)) \nonumber \\
    &= \left( f'(\exp(\Delta_\theta(y))) - f'(1) \right) \nabla_\theta \log p_\theta(y).
\end{align*}
Substituting $\exp(\Delta_\theta(y)) = u(y)$, we take the expectation over the on-policy samples $y \sim p_\theta$:
\begin{equation*}
    \mathbb{E}_{p_\theta} [\nabla_\theta \mathcal{L}_f] = \mathbb{E}_{p_\theta} \left[ (f'(u(y)) - f'(1)) \nabla_\theta \log p_\theta(y) \right].
\end{equation*}
This is identical to the gradient of the $f$-divergence derived above.

\subsection{Proof of Proposition 4.5: Inverse Mapping}

\textbf{Proposition 4.5.} \textit{Let $\ell: \mathbb{R} \to \mathbb{R}$ be a strictly convex translation invariant loss function. Then the objective $\tilde{\mathcal{J}}_{\mu, \ell}(\theta)$ is minimized when $p_\theta = p_\star$ and corresponds to minimizing an $f$-divergence defined by the generator:}
\begin{equation*}
f_\ell(u) = \lambda_1 \int_{1}^{u} \ell'(\log t) dt + \lambda_2(u-1) + c
\end{equation*}
\textit{where constants are chosen to satisfy the standard boundary conditions $f(1)=0, f'(1)=1, f''(1)=1$.}

\textbf{Proof.}
We seek to find a convex generator $f$ such that the expected on-policy gradients of the divergence $D_f$ are proportional to the expected on-policy gradients of the loss $\ell$.

From the proof of Proposition 4.3, we established that the gradient of an $f$-divergence can be written as an expectation over on-policy samples:
\begin{equation*}
\nabla_\theta D_f(p_\theta \| p_\star) = \mathbb{E}_{p_\theta} \left[ (f'(u) - f'(1)) \nabla_\theta \log p_\theta \right],
\end{equation*}
where $u = p_\theta(y) / p_\star(y) = \exp(y - \hat{y})$.

Now consider the loss $\ell$ acting on the difference in log-probabilities $\Delta = y - \hat{y} = \log u$. The gradient of the loss objective with respect to $\theta$ is:
\begin{align*}
\nabla_\theta \ell(\Delta) &= \frac{\partial \ell}{\partial \Delta} \nabla_\theta \Delta \\
&= \ell'(\log u) \nabla_\theta \log p_\theta.
\end{align*}
For the gradients to match (up to a scaling factor $\lambda_1$ and an additive constant in the score function which vanishes in expectation), we equate the coefficients of $\nabla_\theta \log p_\theta$:
\begin{equation*}
\lambda_1 \ell'(\log u) = f'(u) - C,
\end{equation*}
where $C = f'(1)$. This gives the relationship between the first derivatives. To check convexity, we differentiate with respect to $u$:
\begin{align*}
\lambda_1 \ell''(\log u) \cdot \frac{d}{du}(\log u) &= f''(u) \\
\lambda_1 \frac{\ell''(\log u)}{u} &= f''(u).
\end{align*}
Since $u > 0$ (probability ratio) and $\ell$ is strictly convex ($\ell'' > 0$), it follows that $f''(u) > 0$ (assuming $\lambda_1 > 0$). Thus, the induced generator $f$ is strictly convex.

To recover the functional form of $f(u)$, we integrate the first derivative relationship $f'(t) = \lambda_1 \ell'(\log t) + C$ from $1$ to $u$:
\begin{equation*}
f(u) - f(1) = \int_{1}^{u} \left( \lambda_1 \ell'(\log t) + C \right) dt.
\end{equation*}
Imposing the standard $f$-divergence constraint $f(1)=0$, and letting $\lambda_2 = C$, we obtain:
\begin{equation*}
f_\ell(u) = \lambda_1 \int_{1}^{u} \ell'(\log t) dt + \lambda_2(u - 1).
\end{equation*}
The constants $\lambda_1$ and $\lambda_2$ can be solved for using the standardization constraints $f'(1)=1$ and $f''(1)=1$:
\begin{align*}
f''(1) = 1 &\implies \lambda_1 \ell''(0) = 1 \implies \lambda_1 = \frac{1}{\ell''(0)} \\
f'(1) = 1 &\implies \lambda_1 \ell'(0) + \lambda_2 = 1 \implies \lambda_2 = 1 - \frac{\ell'(0)}{\ell''(0)}.
\end{align*}
Thus, any strictly convex translation invariant loss on log-probabilities corresponds to a valid, strictly convex $f$-divergence.

\subsubsection{$\KL$ from square loss}\label{ap:KL_from_sq}
\begin{example}
    Let $\ell:{\mathbb{R}} \to \mathbb{R}$ be the squared loss function, so $\ell(x) = x^2$ for $x\in \mathbb{R}$. Then $\ell^{\prime}(x) = 2x$ and:
\begin{align*}
    f_{\ell}(u) &= \lambda_1\int^u_{1} 2\log tdt + \lambda_2(u-1)+c \\
    &= 2\lambda_1 u\log(u) + \lambda_2(u-1)+c,
\end{align*}
where the boundary conditions give us that $f_{\ell}(u) = u\log u$.
\end{example}
\subsection{Proof of Proposition 5.1}

\textbf{Proposition 5.1.} \textit{The following loss:}
\begin{equation*}
\mathcal{L}_f(\Delta(\tau, \theta, \phi)) = \int_{0}^{\Delta(\tau,\theta,\phi)} \left( f'(\exp(t)) - f'(1) \right) dt
\end{equation*}
\textit{generalizes the trajectory balance in the sense that we have:}
\begin{align*}
\nabla_\theta D_f(\pi_F \| \pi_B) &= \mathbb{E}_{\tau \sim \pi_{F, \theta}} [\nabla_\theta \mathcal{L}_f(\tau, \phi, \theta)] \\
\nabla_\phi D_h(\pi_B \| \pi_F) &= \mathbb{E}_{\tau \sim \pi_{B, \phi}} [\nabla_\phi \mathcal{L}_f(\tau, \phi, \theta)]
\end{align*}
\textit{where $h$ is defined as:}
\begin{equation*}
h(u) = \int_{1}^{u} \left( 2 - f'\left(\frac{1}{t}\right) \right) dt.
\end{equation*}

\textbf{Proof.}
Let the trajectory log-probability difference be denoted by $\Delta(\tau) = \log \pi_F(\tau | \theta) - \log \pi_B(\tau | \phi)$ (absorbing $Z$ and $R$ into the definition for clarity). Let $u(\tau) = \exp(\Delta(\tau)) = \frac{\pi_F(\tau)}{\pi_B(\tau)}$.
The gradient of the scalar loss with respect to $\Delta$ is $\mathcal{L}'(\Delta) = f'(u) - f'(1)$.

\subsubsection{Forward Policy Gradient ($\nabla_\theta$)}
We examine the expected auto-differentiated gradient of the loss with respect to $\theta$ under the forward policy $\pi_F$:
\begin{equation*}
\mathbb{E}_{\tau \sim \pi_F} [\nabla_\theta \mathcal{L}_f] = \mathbb{E}_{\tau \sim \pi_F} \left[ \frac{\partial \mathcal{L}_f}{\partial \Delta} \nabla_\theta \Delta(\tau) \right].
\end{equation*}
Since $\pi_B$ does not depend on $\theta$, $\nabla_\theta \Delta(\tau) = \nabla_\theta \log \pi_F(\tau)$. Thus:
\begin{equation*}
\mathbb{E}_{\tau \sim \pi_F} [\nabla_\theta \mathcal{L}_f] = \mathbb{E}_{\tau \sim \pi_F} \left[ (f'(u) - f'(1)) \nabla_\theta \log \pi_F(\tau) \right].
\end{equation*}
From Proposition 4.3, we know that the gradient of the $f$-divergence $D_f(\pi_F \| \pi_B)$ is given by:
\begin{equation*}
\nabla_\theta D_f(\pi_F \| \pi_B) = \mathbb{E}_{\tau \sim \pi_F} \left[ (f'(u) - c) \nabla_\theta \log \pi_F(\tau) \right],
\end{equation*}
where $c$ is any constant. Choosing $c=f'(1)$ recovers the loss gradient exactly.

\subsubsection{Backward Policy Gradient ($\nabla_\phi$)}
We examine the expected auto-differentiated gradient of the loss with respect to $\phi$ under the backward policy $\pi_B$:
\begin{equation*}
\mathbb{E}_{\tau \sim \pi_B} [\nabla_\phi \mathcal{L}_f] = \mathbb{E}_{\tau \sim \pi_B} \left[ \frac{\partial \mathcal{L}_f}{\partial \Delta} \nabla_\phi \Delta(\tau) \right].
\end{equation*}
Since $\pi_F$ does not depend on $\phi$, $\nabla_\phi \Delta(\tau) = -\nabla_\phi \log \pi_B(\tau)$. Thus:
\begin{equation}
\mathbb{E}_{\tau \sim \pi_B} [\nabla_\phi \mathcal{L}_f] = \mathbb{E}_{\tau \sim \pi_B} \left[ -(f'(u) - f'(1)) \nabla_\phi \log \pi_B(\tau) \right]. \label{eq:loss_grad_phi}
\end{equation}
Now consider the divergence $D_h(\pi_B \| \pi_F)$ defined by the convex generator $h$. Its gradient with respect to $\phi$ (where $\pi_B$ is the model and $\pi_F$ is the target) is:
\begin{align*}
\nabla_\phi D_h(\pi_B \| \pi_F) &= \nabla_\phi \int \pi_F(\tau) h\left( \frac{\pi_B(\tau)}{\pi_F(\tau)} \right) d\tau.
\end{align*}
Let $v(\tau) = \frac{\pi_B(\tau)}{\pi_F(\tau)} = \frac{1}{u(\tau)}$. Using the derivative rule for $f$-divergences (equivalent to Eq. 54 in the main text but for $h$ and $\pi_B$):
\begin{equation}
\nabla_\phi D_h(\pi_B \| \pi_F) = \mathbb{E}_{\tau \sim \pi_B} \left[ (h'(v) - h'(1)) \nabla_\phi \log \pi_B(\tau) \right]. \label{eq:div_grad_phi}
\end{equation}
To match Eq. \eqref{eq:loss_grad_phi} and Eq. \eqref{eq:div_grad_phi}, we require the terms scaling the score function to be equivalent up to a constant shift (which vanishes in expectation). We equate:
\begin{equation*}
h'(v) = -f'(u) + C = -f'(1/v) + C.
\end{equation*}
Using the definition of $h$ provided in the proposition:
\begin{equation*}
h(v) = \int_{1}^{v} \left( 2 - f'\left(\frac{1}{t}\right) \right) dt \implies h'(v) = 2 - f'\left(\frac{1}{v}\right).
\end{equation*}
Substituting this into the gradient expectation:
\begin{equation*}
\nabla_\phi D_h(\pi_B \| \pi_F) = \mathbb{E}_{\tau \sim \pi_B} \left[ (2 - f'(u) - h'(1)) \nabla_\phi \log \pi_B(\tau) \right].
\end{equation*}
Absorbing constants $2$ and $h'(1)$ into the baseline, this matches the loss gradient term $-(f'(u) - f'(1))$. Thus, the gradients are equivalent.
\subsubsection{Non-Existence of \texorpdfstring{$f$}{f}-Divergence for On-Policy Backward Gradients}\label{ap:no_backwards_grad}

In \Cref{sec:gflownets}, we established that the gradients for the backward policy $\pi_B$, when sampled from the backward policy correspond to minimizing a valid $f$-divergence $D_h(\pi_B \| \pi_F)$. In this section, we investigate the \textit{on-policy} setting, where samples are drawn from the forward policy $\pi_F$, and we optimize $\pi_B$ to minimize the surrogate loss $\mathcal{L}_f$.

We demonstrate that, unlike the off-policy case, the on-policy gradient update for $\pi_B$ generally \textbf{does not} correspond to the descent direction of any valid $f$-divergence. While a generator function $g$ can be derived to match the gradients locally, it fails to satisfy the global convexity requirement ($g'' \geq 0$) for standard choices of $f$, such as the KL divergence.

\subsection{Gradient Matching Setup}

We seek a convex function $g: \mathbb{R}_+ \to \mathbb{R}$ such that the gradient of the divergence $D_g(\pi_F \| \pi_B)$ matches the expected gradient of the surrogate loss $\mathcal{L}_f$ when sampling from $\pi_F$. Note that we define the divergence as $D_g(\pi_F \| \pi_B)$ because the expectation is taken over $\pi_F$.

Let $u(\tau) = \frac{\pi_F(\tau)}{\pi_B(\tau)}$.

\textbf{1. Gradient of the Surrogate Loss:}
The gradient of the loss $\mathcal{L}_f$ with respect to the backward parameters $\phi$, estimated on-policy, is:
\begin{align*}
    \nabla_\phi \mathcal{J}_{\text{on}} &= \mathbb{E}_{\tau \sim \pi_F} \left[ \nabla_\phi \mathcal{L}_f(\log u) \right] \\
    &= \mathbb{E}_{\tau \sim \pi_F} \left[ (f'(u) - f'(1)) \nabla_\phi (-\log \pi_B) \right] \\
    &= \mathbb{E}_{\tau \sim \pi_F} \left[ -f'(u) \nabla_\phi \log \pi_B \right]
\end{align*}
(The constant term $f'(1)$ vanishes because $\mathbb{E}_{\pi_F}[\nabla_\phi \log \pi_B] = 0$).

\textbf{2. Gradient of the Candidate Divergence:}
Consider the generic divergence $D_g(\pi_F \| \pi_B) = \int \pi_B(\tau) g\left( \frac{\pi_F(\tau)}{\pi_B(\tau)} \right) d\tau$.
Differentiating with respect to $\phi$:
\begin{align*}
    \nabla_\phi D_g &= \int \left( \nabla_\phi \pi_B \cdot g(u) + \pi_B g'(u) \nabla_\phi u \right) d\tau
\end{align*}
Using the identity $\nabla_\phi u = \pi_F \nabla_\phi (\pi_B^{-1}) = -u \nabla_\phi \log \pi_B$ and $\nabla_\phi \pi_B = \pi_B \nabla_\phi \log \pi_B$:
\begin{align*}
    \nabla_\phi D_g &= \int \pi_B \left( g(u) - u g'(u) \right) \nabla_\phi \log \pi_B \, d\tau \\
    &= \int \pi_F \frac{1}{u} \left( g(u) - u g'(u) \right) \nabla_\phi \log \pi_B \, d\tau \\
    &= \mathbb{E}_{\tau \sim \pi_F} \left[ \left( \frac{g(u)}{u} - g'(u) \right) \nabla_\phi \log \pi_B \right]
\end{align*}

\subsection{Derivation of the Generator $g$}

Equating the terms inside the expectations gives the condition for the gradients to match:
\begin{equation*}
    -f'(u) = \frac{g(u)}{u} - g'(u)
\end{equation*}
Rearranging into a linear ordinary differential equation for $g(u)$:
\begin{equation*}
    g'(u) - \frac{1}{u}g(u) = f'(u)
\end{equation*}
Using the integrating factor $I(u) = \exp(\int -\frac{1}{u} du) = \frac{1}{u}$, we solve:
\begin{align}
    \frac{d}{du} \left[ \frac{g(u)}{u} \right] &= \frac{f'(u)}{u} \\
    \frac{g(u)}{u} &= \int_{1}^{u} \frac{f'(t)}{t} dt + C \\
    g(u) &= u \int_{1}^{u} \frac{f'(t)}{t} dt + Cu \label{eq:g_solution}
\end{align}
The linear term $Cu$ corresponds to the gradient of a constant expectation and does not affect the convexity analysis.

\subsection{Proof of Non-Convexity}

For $D_g$ to be a valid divergence, $g$ must be convex, i.e., $g''(u) \geq 0$ for all $u \in (0, \infty)$. We compute the second derivative of the solution in Eq. \ref{eq:g_solution}:

First derivative:
\begin{equation*}
    g'(u) = \int_{1}^{u} \frac{f'(t)}{t} dt + u\left(\frac{f'(u)}{u}\right) = \int_{1}^{u} \frac{f'(t)}{t} dt + f'(u)
\end{equation*}

Second derivative:
\begin{equation} \label{eq:convexity_condition}
    g''(u) = \frac{f'(u)}{u} + f''(u)
\end{equation}

We now test this condition for the most common case: the Trajectory Balance loss, which corresponds to the KL divergence.

\textbf{Counterexample: KL Divergence (Trajectory Balance)}
Let $f(u) = u \log u$. Then $f'(u) = 1 + \log u$ and $f''(u) = \frac{1}{u}$. Substituting into Eq. \ref{eq:convexity_condition}:
\begin{align*}
    g''(u) &= \frac{1 + \log u}{u} + \frac{1}{u} \\
    g''(u) &= \frac{2 + \log u}{u}
\end{align*}
For $g$ to be convex, we require $g''(u) \geq 0$ for all $u > 0$. However:
\begin{equation*}
    \frac{2 + \log u}{u} < 0 \iff \log u < -2 \iff u < e^{-2} \approx 0.135
\end{equation*}
Since $u = \pi_F(\tau) / \pi_B(\tau)$, the ratio $u$ can easily fall below $e^{-2}$ in regions where the backward policy assigns significantly higher probability than the forward policy. In this region, $g$ is locally concave.

\textbf{Conclusion:}
Because $g''(u)$ is not non-negative everywhere, $g$ does not define a valid $f$-divergence. Consequently, the on-policy optimization of the backward policy cannot be interpreted as minimizing a distance measure between distributions in the divergence sense. It is more accurately described as minimizing the variance of the log-ratio or satisfying a moment-matching condition specific to the chosen surrogate loss.

\section{Gradient Variance Analysis}\label{ap:grad_var}

\subsection{$f$-DevGrad Variance Analysis}

In this section, we demonstrate that the gradient estimator derived from the batch-wise DevGrad loss, $\mathcal{L}_f^{DG}$, is equivalent to a REINFORCE estimator equipped with a generalized deviation baseline. We prove that the batch normalization step implicitly enforces a zero-sum constraint on the gradient weights, thereby acting as a variance-reducing control variate.

\textbf{Gradient Form.} Consider the batch-wise loss over a batch $\mathcal{B} = \{\mathbf{y}_1, \dots, \mathbf{y}_B\}$ with deviation $\Delta(\mathbf{y}_i) = \log \pi_\theta(\mathbf{y}_i) - \mathcal{R}(\mathbf{y}_i)$:
\begin{equation*}
    \mathcal{L}_f^{DG}(\mathcal{B}, \theta) = \min_{C} \frac{1}{B} \sum_{i=1}^B L_f \left( \Delta(\mathbf{y}_i) + C \right).
\end{equation*}
Let $\hat{C}$ be the minimizer of this objective (the batch-wise estimate of $-\log Z$). The gradient of the loss with respect to $\theta$, treating $\hat{C}$ as fixed (via the stop-gradient operator in Eq. 30), is:
\begin{equation*}
    \hat{\mathbf{g}}^{DG} = \frac{1}{B} \sum_{i=1}^B L_f' \left( \Delta(\mathbf{y}_i) + \hat{C} \right) \nabla_\theta \log \pi_\theta(\mathbf{y}_i).
\end{equation*}

\textbf{Zero-Sum Weight Property.} The scalar $\hat{C}$ is defined by the first-order optimality condition of the inner minimization problem:
\begin{equation*}
    \frac{\partial}{\partial C} \left( \frac{1}{B} \sum_{i=1}^B L_f(\Delta(\mathbf{y}_i) + C) \right) \bigg|_{C=\hat{C}} = \frac{1}{B} \sum_{i=1}^B L_f'(\Delta(\mathbf{y}_i) + \hat{C}) = 0.
\end{equation*}
Let $w_i = L_f'(\Delta(\mathbf{y}_i) + \hat{C})$ be the effective weight for the $i$-th sample. The optimality condition implies $\sum_{i=1}^B w_i = 0$. Consequently, the estimator can be rewritten as a REINFORCE estimator with a sample-dependent baseline:
\begin{equation*}
    \hat{\mathbf{g}}^{DG} = \frac{1}{B} \sum_{i=1}^B ( \tilde{w}_i - \hat{b} ) \nabla_\theta \log \pi_\theta(\mathbf{y}_i),
\end{equation*}
where $\tilde{w}_i$ represents the uncentered gradient coefficients derived from $f'$, and $\hat{b}$ is the implicit baseline resulting from $\hat{C}$ such that the coefficients sum to zero.

\subsection{$f$-Trajectory Balance Variance Analysis}\label{ap:f_traj_Balance_variance}

We compare the variance of the gradient estimator for the forward policy parameters $\theta$ derived from the $f$-Trajectory Balance loss, $\hat{g}_{f\text{-TB}}$, against the standard score function estimator, $\hat{g}_{\text{SF}}$. Let $\tau$ be a complete trajectory sampled from the forward policy $\pi_F(\cdot|\theta)$. We define the trajectory likelihood ratio as $u(\tau) = \frac{Z_\theta \pi_F(\tau|\theta)}{R(x_\tau)\pi_B(\tau|x_\tau, \phi)}$.

With the score function $S(\tau) = \nabla_\theta \log \pi_F(\tau|\theta)$, the estimators are given by:
\begin{align*}
    \hat{g}_{\text{SF}} &= f'(u(\tau)) S(\tau), \\
    \hat{g}_{f\text{-TB}} &= (f'(u(\tau)) - f'(1)) S(\tau) = (f'(u(\tau)) - 1) S(\tau).
\end{align*}
Since $\mathbb{E}_{\tau \sim \pi_F}[S(\tau)] = 0$, both estimators are unbiased. The difference in their variances is determined by the difference in their expected squared norms:
\begin{align*}
    \text{Var}(\hat{g}_{f\text{-TB}}) - \text{Var}(\hat{g}_{\text{SF}}) &= \mathbb{E}_{\tau \sim \pi_F}\left[ \left( (f'(u(\tau)) - 1)^2 - f'(u(\tau))^2 \right) \|S(\tau)\|^2 \right] \\
    &= \mathbb{E}_{\tau \sim \pi_F}\left[ (1 - 2f'(u(\tau))) \|S(\tau)\|^2 \right].
\end{align*}
At the global optimum where the Trajectory Balance constraint is satisfied, we have $Z_\theta \pi_F(\tau) = R(x_\tau)\pi_B(\tau)$, implying $u(\tau) = 1$ for all sampled $\tau$. Given the normalization $f'(1)=1$, the term inside the expectation becomes:
\begin{equation*}
    1 - 2f'(1) = -1.
\end{equation*}
Thus, at the optimum, the variance difference is $-\mathbb{E}[\|S(\tau)\|^2]$, which is strictly negative for stochastic policies.

Since $f$ is strictly convex and differentiable, $f'$ is continuous. Consequently, by the continuity of expected values, there exists a neighborhood around the optimum (where $u(\tau) \approx 1$) such that $\mathbb{E}_{\tau \sim \pi_F}[ (1 - 2f'(u(\tau))) \|S(\tau)\|^2 ] < 0$. Therefore, the $f$-Trajectory Balance estimator yields strictly lower variance than the standard score function estimator in the vicinity of the optimum.

\section{Loss Derivations and Properties}
\label{app:loss_derivations}

In this Appendix, we give the definitions of a variety of standard $f$-divergences and derive the closed form of the $f$-trajectory loss. We also derive the batch wise DevGrad normalisation, the LLM DevGrad formulation, and the tempered LLM DevGrad formulation. We assume the standard normalization $f'(1)=1$ and $f''(1)=1$. Reviewing notation we have:
\begin{itemize}
    \item \textbf{Generic:} $\Delta_{\theta}(\by_i) = \log p_{\theta}(\mathbf{y}_i) - \mathcal{R}(\by_i)$.
    \item \textbf{LLM Fine-tuning:} The target is the Boltzmann distribution defined by reward $r(\bx, \by)$ and reference $\pi_{\text{ref}}$.
    \[ \mathcal{R}(\by_i) = \log \pi_{\text{ref}}(\by_i) + \frac{r(\bx, \by_i)}{\beta} \]
    The deviation expands to:
    \[ \Delta_{\theta}(\by_i) = \log \frac{\pi_{\theta}(\by_i | \bx)}{\pi_{\text{ref}}(\by_i | \bx)} - \frac{r(\bx, \by_i)}{\beta} \]
\end{itemize}

The loss function, $\mathcal{L}_{f}$, is given by $\mathcal{L}_{f}(\Delta_{\theta}(\by)) = \int_0^{\Delta_{\theta}(\by)} {f'(\exp(t)) - f'(1)} dt$. The general Batch-wise DevGrad loss is defined as:
\begin{align*}
    \mathcal{L}^{\mathrm{DG}}_{f}(\mathcal{B},\theta) &= \frac{1}{B} \sum_{i} \mathcal{L}_f \Big(\Delta(\by_i) + \mathrm{SG}\left[\widehat{\log Z}\right] \Big) \\
    \text{where: } \widehat{\log Z} &= \argmin_{C} \frac{1}{B}\! \sum_{i} \mathcal{L}_f\Big(\Delta(\by_i) + C \Big)\\
     \text{Which must satisfy: }  &\frac{1}{B}\! \sum_{i} \mathcal{L}'_f\Big(\Delta(\by_i) + \widehat{\log Z} \Big) = 0
\end{align*}
Finally, we define the \textbf{Stop-Gradient Softmax with Temperature $\tau$} over the rewards $\mathbf{r}$ as the normalized importance weights:
\[ \tilde{\sigma}_{\tau}(\mathbf{r})_i = \frac{\exp\left(r(\bx, \by_i)/\tau\right)}{\mathrm{SG}\left[\sum_{j=1}^B \exp\left(r(\bx, \by_j)/\tau\right)\right]} \]
where $\mathrm{SG}[\cdot]$ denotes the stop-gradient operator. 

\subsection{Reverse KL Divergence}
\textbf{Generator:} $f(u) = u \log u$. \\
\textbf{Derivation:} With $f'(u) = 1 + \log u$, the integrand is $t$.
\begin{equation*}
    \mathcal{L}_{\text{RKL}}(\Delta) = \int_{0}^{\Delta} t \, dt = \frac{1}{2}\Delta^2.
\end{equation*}
\textbf{Batch-wise Normalization:} The optimality condition $\sum (\Delta_i + \widehat{\log Z}) = 0$ yields the mean difference:
\begin{equation*}
    \widehat{\log Z} = -\frac{1}{B} \sum_{i=1}^B \Delta(\by_i) \quad \xrightarrow[\text{LLM}]{\pi_{\theta} \approx \pi_{\text{ref}}} \quad \frac{1}{B} \sum_{i=1}^B \frac{r(\bx, \by_i)}{\beta}.
\end{equation*}
\textbf{Batch-wise DevGrad Loss:} We substitute the mean difference for the normalization constant.
\begin{equation*}
    \mathcal{L}^{\mathrm{DG}}_{\text{RKL}}(\mathcal{B},\theta) = \frac{1}{2B} \sum_{i=1}^B \left( \Delta(\by_i) - \mathrm{SG}\left[ \frac{1}{B} \sum_{j=1}^B \Delta(\by_j) \right] \right)^2.
\end{equation*}
\textbf{Batch-wise DevGrad Loss (LLM):} The Vargrad loss on the reward-adjusted log-ratios:
\begin{equation*}
    \mathcal{L}^{\mathrm{DG}}_{\text{RKL}} = \frac{1}{2} \mathbb{V}\text{ar}\left[ \log \frac{\pi_{\theta}(\by)}{\pi_{\text{ref}}(\by)} - \frac{r(\bx, \by)}{\beta} \right].
\end{equation*}
\paragraph{Tempered Reverse KL Divergence (KIMI Loss)}
From Example 4.10, the tempered loss for the Reverse KL corresponds to the variance of the reward-adjusted log-probabilities scaled by $\beta$. This recovers the standard Kimi setup:
\begin{equation*}
    \tilde{\mathcal{L}}^{\text{DG}}_{\text{RKL}} = \frac{1}{2\beta} \text{Var} \left[ \beta \log \frac{\pi_\theta(\mathbf{y}|\mathbf{x})}{\pi_{\text{ref}}(\mathbf{y}|\mathbf{x})} - r(\mathbf{x}, \mathbf{y}) \right].
\end{equation*}

\subsection{Forward KL Divergence}
\textbf{Generator:} $f(u) = 2(u-1) - \log u$. \\
\textbf{Derivation:} With $f'(u) = 2 - u^{-1}$, the integrand is $1 - e^{-t}$.
\begin{equation*}
    \mathcal{L}_{\text{FKL}}(\Delta) = \int_{0}^{\Delta} (1 - e^{-t}) \, dt = \Delta + e^{-\Delta} - 1.
\end{equation*}
\textbf{Batch-wise Normalization:} The condition $\sum (1 - e^{-(\Delta_i + \widehat{\log Z})}) = 0$ implies $\sum e^{-\Delta_i} e^{-\widehat{\log Z}} = B$:
\begin{equation*}
    \widehat{\log Z} = \log \left( \frac{1}{B} \sum_{i=1}^B e^{-\Delta(\by_i)} \right) \quad \xrightarrow[\text{LLM}]{\pi_{\theta}  \approx \pi_{\text{ref}}} \quad \log \left( \frac{1}{B} \sum_{i=1}^B e^{r(\bx, \by_i)/\beta} \right).
\end{equation*}
\textbf{Batch-wise DevGrad Loss:} Substituting $\widehat{\log Z}$ into the robust form $\Delta + \widehat{\log Z} + e^{-\Delta}e^{-\widehat{\log Z}} - 1$:
\begin{equation*}
    \mathcal{L}^{\mathrm{DG}}_{\text{FKL}}(\mathcal{B},\theta) = \frac{1}{B} \sum_{i=1}^B \left[ \Delta(\by_i) + \mathrm{SG}\left[ \log \left( \frac{1}{B} \sum_{j=1}^B e^{-\Delta(\by_j)} \right) \right] + \frac{e^{-\Delta(\by_i)}}{\mathrm{SG}\left[ \frac{1}{B} \sum_{j=1}^B e^{-\Delta(\by_j)} \right]} - 1 \right].
\end{equation*}

\textbf{Batch-wise DevGrad Loss (LLM):} Under the Kimi assumption ($\Delta \approx -r/\beta$), the normalization depends on the rewards with temperature $\beta$.
\begin{align*}
    \mathcal{L}^{\mathrm{DG}}_{\text{FKL}} = \frac{1}{B} \sum_{i=1}^B \Bigg[ \log \frac{\pi_{\theta}(\by_i | \bx)}{\pi_{\text{ref}}(\by_i | \bx)\left( B \cdot \tilde{\sigma}_{\beta}(\mathbf{r})_i \right)}  \nonumber +  \frac{\left(B \cdot \tilde{\sigma}_{\beta}(\mathbf{r})_i\right)\pi_{\text{ref}}(\by_i | \bx)}{\pi_{\theta}(\by_i | \bx)} - 1 \Bigg].
\end{align*}

\paragraph{Tempered Forward KL Divergence}
Applying the tempered transformation to Eq. 110. The normalization temperature shifts from $\beta$ to $1$.
\begin{equation*}
    \tilde{\mathcal{L}}^{\text{DG}}_{\text{FKL}} = \frac{1}{B\beta} \sum_{i=1}^B \left[ \log \frac{\pi_\theta(\mathbf{y}_i|\mathbf{x})^\beta}{\pi_{\text{ref}}(\mathbf{y}_i|\mathbf{x})^\beta (B \cdot \tilde{\sigma}_1(\mathbf{r})_i)} + (B \cdot \tilde{\sigma}_1(\mathbf{r})_i) \left( \frac{\pi_{\text{ref}}(\mathbf{y}_i|\mathbf{x})}{\pi_\theta(\mathbf{y}_i|\mathbf{x})} \right)^\beta - 1 \right].
\end{equation*}
\subsection{Pearson $\chi^2$ Divergence}
\textbf{Generator:} $f(u) = \frac{1}{2}(u-1)^2 + (u-1)$. \\
\textbf{Derivation:} With $f'(u) = u$, the integrand is $e^t - 1$.
\begin{equation*}
    \mathcal{L}_{\chi^2}(\Delta) = \int_{0}^{\Delta} (e^t - 1) \, dt = e^\Delta - \Delta - 1.
\end{equation*}
\textbf{Batch-wise Normalization:} The condition $\sum (e^{\Delta_i + \widehat{\log Z}} - 1) = 0$ implies $e^{\widehat{\log Z}} \sum e^{\Delta_i} = B$:
\begin{equation*}
    \widehat{\log Z} = -\log \left( \frac{1}{B} \sum_{i=1}^B e^{\Delta(\by_i)} \right) \quad \xrightarrow[\text{LLM}]{\pi_{\theta}  \approx \pi_{\text{ref}}} \quad -\log \left( \frac{1}{B} \sum_{i=1}^B e^{-r(\bx, \by_i)/\beta} \right).
\end{equation*}
\textbf{Batch-wise DevGrad Loss:} Substituting $\widehat{\log Z}$ into the robust form $e^{\Delta}e^{\widehat{\log Z}} - (\Delta + \widehat{\log Z}) - 1$:
\begin{equation*}
    \mathcal{L}^{\mathrm{DG}}_{\chi^2}(\mathcal{B},\theta) = \frac{1}{B} \sum_{i=1}^B \left[ \frac{e^{\Delta(\by_i)}}{\mathrm{SG}\left[ \frac{1}{B} \sum_{j=1}^B e^{\Delta(\by_j)} \right]} - \Delta(\by_i) - \mathrm{SG}\left[ -\log \left( \frac{1}{B} \sum_{j=1}^B e^{\Delta(\by_j)} \right) \right] - 1 \right].
\end{equation*}
\textbf{Batch-wise DevGrad Loss (LLM):} Using the assumption $\Delta \approx -r/\beta$, the normalization effectively uses a negative temperature $-\beta$.
\begin{align*}
    \mathcal{L}^{\mathrm{DG}}_{\chi^2} &= \frac{1}{B} \sum_{i=1}^B \Bigg[  \frac{\pi_{\theta}(\by_i | \bx)}{\left( B \cdot \tilde{\sigma}_{-\beta}(\mathbf{r})_i \right)\pi_{\text{ref}}(\by_i | \bx)} - \log \frac{ \pi_{\theta}(\by_i | \bx)}{\left( B \cdot \tilde{\sigma}_{-\beta}(\mathbf{r})_i \right)\pi_{\text{ref}}(\by_i | \bx)} \nonumber + - 1 \Bigg].
\end{align*}

\subsection{Neyman $\chi^2$ Divergence}
\textbf{Generator:} $f(u) = \frac{(u-1)^2}{2u} + (u-1)$. \\
\textbf{Properties:} \textit{Mass-covering.} Heavily penalizes under-estimation of the target probability. \\
\textbf{Derivation:} With $f'(u) = \frac{3}{2} - \frac{1}{2}u^{-2}$, the integrand is $\frac{1}{2} - \frac{1}{2}e^{-2t}$.
\begin{equation*}
    \mathcal{L}_{\text{Ney}}(\Delta) = \frac{1}{2}\Delta + \frac{1}{4}e^{-2\Delta} - \frac{1}{4}.
\end{equation*}
\textbf{Batch-wise Normalization:} The condition $\sum (\frac{1}{2} - \frac{1}{2}e^{-2(\Delta_i + \widehat{\log Z})}) = 0$ implies $e^{-2\widehat{\log Z}} \sum e^{-2\Delta_i} = B$:
\begin{equation*}
    \widehat{\log Z} = \frac{1}{2} \log \left( \frac{1}{B} \sum_{i=1}^B e^{-2\Delta(\by_i)} \right) \quad \xrightarrow[\text{LLM}]{\pi_{\theta}  \approx \pi_{\text{ref}}} \quad \frac{1}{2} \log \left( \frac{1}{B} \sum_{i=1}^B e^{2r(\bx, \by_i)/\beta} \right).
\end{equation*}
\textbf{Batch-wise DevGrad Loss:} Substituting $\widehat{\log Z}$ into the robust form $\frac{1}{2}(\Delta + \widehat{\log Z}) + \frac{1}{4}e^{-2\Delta}e^{-2\widehat{\log Z}} - \frac{1}{4}$:
\begin{equation*}
    \mathcal{L}^{\mathrm{DG}}_{\text{Ney}}(\mathcal{B},\theta) = \frac{1}{B} \sum_{i=1}^B \left[ \frac{1}{2}\Delta(\by_i) + \mathrm{SG}\left[ \frac{1}{4} \log \left( \frac{1}{B} \sum_{j=1}^B e^{-2\Delta(\by_j)} \right) \right] + \frac{\frac{1}{4}e^{-2\Delta(\by_i)}}{\mathrm{SG}\left[ \frac{1}{B} \sum_{j=1}^B e^{-2\Delta(\by_j)} \right]} - \frac{1}{4} \right].
\end{equation*}

\paragraph{Tempered Neyman $\chi^2$ Divergence}
Applying the tempered transformation to Eq. 120. The normalization temperature shifts from $\beta/2$ to $1/2$.
\begin{equation*}
    \tilde{\mathcal{L}}^{\text{DG}}_{\text{Ney}} = \frac{1}{B\beta} \sum_{i=1}^B \left[ \frac{1}{2} \log \frac{\pi_\theta(\mathbf{y}_i|\mathbf{x})^\beta}{\pi_{\text{ref}}(\mathbf{y}_i|\mathbf{x})^\beta \left(B \cdot \tilde{\sigma}_{1/2}(\mathbf{r})_i\right)} + \frac{1}{4} \left( \frac{\left(B \cdot \tilde{\sigma}_{1/2}(\mathbf{r})_i\right)\pi_{\text{ref}}(\mathbf{y}_i|\mathbf{x})^\beta}{\pi_\theta(\mathbf{y}_i|\mathbf{x})^\beta} \right)^2 - \frac{1}{4} \right].
\end{equation*}

\subsection{Squared Hellinger Distance}
\textbf{Generator:} $f(u) = 2(\sqrt{u}-1)^2 + (u-1)$. \\
\textbf{Properties:} \textit{Mode-covering / Robust.} Balances mode coverage with stability. The gradient is bounded as $\Delta \to -\infty$, unlike Forward KL which grows linearly. \\
\textbf{Derivation:} With $f'(u) = 3 - 2u^{-1/2}$, the integrand is $2 - 2e^{-t/2}$.
\begin{equation*}
    \mathcal{L}_{\text{H}^2}(\Delta) = \int_{0}^{\Delta} (2 - 2e^{-t/2}) \, dt = 2\Delta + 4e^{-\Delta/2} - 4.
\end{equation*}
\textbf{Batch-wise Normalization:} The condition $\sum (2 - 2e^{-(\Delta_i + \widehat{\log Z})/2}) = 0$ implies $e^{-\widehat{\log Z}/2} \sum e^{-\Delta_i/2} = B$:
\begin{equation*}
    \widehat{\log Z} = 2 \log \left( \frac{1}{B} \sum_{i=1}^B e^{-\frac{1}{2}\Delta(\by_i)} \right) \quad \xrightarrow[\text{LLM}]{\pi_{\theta}  \approx \pi_{\text{ref}}} \quad 2 \log \left( \frac{1}{B} \sum_{i=1}^B e^{\frac{1}{2\beta}r(\bx, \by_i)} \right).
\end{equation*}
\textbf{Batch-wise DevGrad Loss:} Substituting $\widehat{\log Z}$ into the robust form $2(\Delta + \widehat{\log Z}) + 4e^{-\Delta/2}e^{-\widehat{\log Z}/2} - 4$:
\begin{equation*}
    \mathcal{L}^{\mathrm{DG}}_{\text{H}^2}(\mathcal{B},\theta) = \frac{1}{B} \sum_{i=1}^B \left[ 2\Delta(\by_i) + \mathrm{SG}\left[ 4 \log \left( \frac{1}{B} \sum_{j=1}^B e^{-\frac{1}{2}\Delta(\by_j)} \right) \right] + \frac{4 e^{-\frac{1}{2}\Delta(\by_i)}}{\mathrm{SG}\left[ \frac{1}{B} \sum_{j=1}^B e^{-\frac{1}{2}\Delta(\by_j)} \right]} - 4 \right].
\end{equation*}
\textbf{Batch-wise DevGrad Loss (LLM):} The normalization term corresponds to a softmax with temperature $2\beta$.
\begin{align*}
    \mathcal{L}^{\mathrm{DG}}_{\text{H}^2} &= \frac{1}{B} \sum_{i=1}^B \Bigg[ 2\log \frac{\pi_{\theta}(\by_i | \bx)}{\left( B \cdot \tilde{\sigma}_{2\beta}(\mathbf{r})_i \right)\pi_{\text{ref}}(\by_i | \bx)} + 4\sqrt{ \left(B \cdot \tilde{\sigma}_{2\beta}(\mathbf{r})_i\right) \frac{\pi_{\text{ref}}(\by_i | \bx)}{\pi_{\theta}(\by_i | \bx)} } - 4 \Bigg].
\end{align*}

\paragraph{Tempered Squared Hellinger Distance}
Applying the tempered transformation to Eq. 117. The normalization temperature shifts from $2\beta$ to $2$.
\begin{equation*}
    \tilde{\mathcal{L}}^{\text{DG}}_{\text{H}^2} = \frac{1}{B\beta} \sum_{i=1}^B \left[ 2 \log \frac{\pi_\theta(\mathbf{y}_i|\mathbf{x})^\beta}{(B \cdot \tilde{\sigma}_{2}(\mathbf{r})_i) \pi_{\text{ref}}(\mathbf{y}_i|\mathbf{x})^\beta} + 4 \sqrt{(B \cdot \tilde{\sigma}_{2}(\mathbf{r})_i) \left( \frac{\pi_{\text{ref}}(\mathbf{y}_i|\mathbf{x})}{\pi_\theta(\mathbf{y}_i|\mathbf{x})} \right)^\beta} - 4 \right].
\end{equation*}

\subsection{Jensen-Shannon Divergence}
\textbf{Generator:} $f(u) = u \log u - (u+1)\log \frac{u+1}{2} + \text{linear terms}$. \\
\textbf{Derivation:} Using $f'(u) = 2 \log(\frac{2u}{u+1}) + 1$, we integrate term-by-term involving the dilogarithm function $\text{Li}_2$.
\begin{equation*}
    \mathcal{L}_{\text{JSD}}(\Delta) = \Delta^2 + 2\Delta \log 2 + 2 \text{Li}_2(-e^\Delta) + \frac{\pi^2}{6}.
\end{equation*}
\textbf{Batch-wise Normalization:} The condition $\sum \log(\frac{2 \exp(\Delta_i + \widehat{\log Z})}{\exp(\Delta_i + \widehat{\log Z}) + 1}) = 0$ requires finding the root $Z$ of:
\begin{equation*}
    \prod_{i=1}^B \frac{2 e^{\Delta_i} Z}{e^{\Delta_i} Z + 1} = 1.
\end{equation*}
This does not have a closed-form solution and requires numerical root-finding. \\
\textbf{Batch-wise DevGrad Loss:} No closed form simplification exists. The loss is computed by numerically solving for $\widehat{\log Z}$ using the root-finding equation above and substituting it back into the loss expression:
\begin{equation*}
    \mathcal{L}^{\mathrm{DG}}_{\text{JSD}}(\mathcal{B},\theta) = \frac{1}{B} \sum_{i=1}^B \mathcal{L}_{\text{JSD}}(\Delta(\by_i) + \mathrm{SG}\left[\widehat{\log Z}\right]).
\end{equation*}

\subsection{$\alpha$-Divergence}
\textbf{Generator:} $f(u) = \frac{u^\alpha - u}{\alpha(\alpha-1)} + \frac{\alpha-1}{\alpha}(u-1)$. \\
\textbf{Derivation:} With $f'(u) = \frac{u^{\alpha-1}}{\alpha-1} + \frac{\alpha-2}{\alpha-1}$, the integrand is $\frac{1}{\alpha-1}(e^{(\alpha-1)t} - 1)$.
\begin{equation*}
    \mathcal{L}_{\alpha}(\Delta) = \frac{1}{(\alpha-1)^2}e^{(\alpha-1)\Delta} - \frac{\Delta}{\alpha-1} - \frac{1}{(\alpha-1)^2}.
\end{equation*}
\textbf{Batch-wise Normalization:} The condition $\sum \left(\frac{e^{(\alpha-1)(\Delta_i + \widehat{\log Z})} - 1}{\alpha - 1}\right) = 0$ implies $e^{(\alpha-1)\widehat{\log Z}} \sum e^{(\alpha-1)\Delta_i} = B$. Solving for $\widehat{\log Z}$:
\begin{equation*}
    \widehat{\log Z} = \frac{1}{1-\alpha} \log \left( \frac{1}{B} \sum_{i=1}^B e^{(\alpha-1)\Delta(\by_i)} \right) \quad \xrightarrow[\text{LLM}]{\pi_{\theta}  \approx \pi_{\text{ref}}} \quad \frac{1}{1-\alpha} \log \left( \frac{1}{B} \sum_{i=1}^B e^{-(\alpha-1)r(\bx, \by_i)/\beta} \right).
\end{equation*}
\textbf{Batch-wise DevGrad Loss:} Substituting $\widehat{\log Z}$ into the robust form with $\mathcal{C}_{\alpha} = \frac{1}{(\alpha-1)^2}$:
\begin{equation*}
    \mathcal{L}^{\mathrm{DG}}_{\alpha}(\mathcal{B},\theta) = \frac{1}{B} \sum_{i=1}^B \left[ \frac{\mathcal{C}_{\alpha} e^{(\alpha-1)\Delta(\by_i)}}{\mathrm{SG}\left[ \frac{1}{B} \sum_{j=1}^B e^{(\alpha-1)\Delta(\by_j)} \right]} - \frac{\Delta(\by_i) + \mathrm{SG}\left[ \frac{1}{1-\alpha} \log \left( \frac{1}{B} \sum_{j=1}^B e^{(\alpha-1)\Delta(\by_j)} \right) \right]}{\alpha-1} - \mathcal{C}_{\alpha} \right].
\end{equation*}
\textbf{Batch-wise DevGrad Loss (LLM):} This effectively normalizes with temperature $\frac{\beta}{1-\alpha}$.
\begin{align*}
    \mathcal{L}^{\mathrm{DG}}_{\alpha} = \frac{1}{B(\alpha-1)^2} \sum_{i=1}^B \Bigg[ &\left(B \cdot \tilde{\sigma}_{\frac{\beta}{1-\alpha}}(\mathbf{r})_i\right) \left(\frac{\pi_{\theta}(\by_i | \bx)}{\pi_{\text{ref}}(\by_i | \bx)}\right)^{\alpha-1}  - (\alpha-1)\log \frac{\pi_{\theta}(\by_i | \bx)}{\left( B \cdot \tilde{\sigma}_{\frac{\beta}{1-\alpha}}(\mathbf{r})_i \right)\pi_{\text{ref}}(\by_i | \bx)} \nonumber - 1 \Bigg].
\end{align*}
This unified form shows that $\alpha$ controls the temperature of the reward softmax used for normalization. For Forward KL ($\alpha \to 0$), the temperature is $\beta$; for Pearson ($\alpha=2$), it is $-\beta$; and for Hellinger ($\alpha=0.5$), it is $2\beta$.

\paragraph{General Tempered $\alpha$-Divergence}
The general form for any $\alpha$, where the normalization temperature becomes $\frac{1}{1-\alpha}$:
\begin{equation*}
    \tilde{\mathcal{L}}^{\text{DG}}_{\alpha} = \frac{1}{B\beta(\alpha - 1)^2} \sum_{i=1}^B \left[ \frac{C_{\alpha, \text{temp}}}{(B \cdot \tilde{\sigma}_{\frac{1}{1-\alpha}}(\mathbf{r})_i)} \left( \frac{\pi_\theta(\mathbf{y}_i|\mathbf{x})}{\pi_{\text{ref}}(\mathbf{y}_i|\mathbf{x})} \right)^{\beta(\alpha-1)} - (\alpha - 1) \log \frac{\pi_\theta(\mathbf{y}_i|\mathbf{x})^\beta}{(B \cdot \tilde{\sigma}_{\frac{1}{1-\alpha}}(\mathbf{r})_i) \pi_{\text{ref}}(\mathbf{y}_i|\mathbf{x})^\beta} - 1 \right].
\end{equation*}

\subsection{Total Variation}
\textbf{Generator:} $f(u) = |u-1|$. \\
\
\textbf{Derivation:} Assuming the subgradient convention to satisfy $f'(1)=1$ and symmetry, we use $f'(u) = \text{sgn}(u-1)$ for $u \neq 1$. The integrand becomes $\text{sgn}(e^t - 1) = \text{sgn}(t)$.
\begin{equation*}
    \mathcal{L}_{\text{TV}}(\Delta) = \int_{0}^{\Delta} \text{sgn}(t) \, dt = |\Delta|.
\end{equation*}
\textbf{Batch-wise Normalization:} Minimizing the batch loss corresponds to minimizing the sum of absolute deviations $\sum_i |\Delta(\by_i) + C|$. The minimizer is the median of the inputs with reversed sign:
\begin{equation*}
    \widehat{\log Z} = -\text{Median}\left(\{\Delta(\by_i)\}_{i=1}^B \right) \quad \xrightarrow[\text{LLM}]{\pi_{\theta}  \approx \pi_{\text{ref}}} \quad \text{Median}\left(\left\{\frac{r(\bx, \by_i)}{\beta}\right\}_{i=1}^B \right).
\end{equation*}
\textbf{Batch-wise DevGrad Loss:} This results in the Mean Absolute Deviation (MAD) of the log-probability differences:
\begin{equation*}
    \mathcal{L}^{\mathrm{DG}}_{\text{TV}}(\mathcal{B},\theta) = \frac{1}{B} \sum_{i=1}^B \Big| \Delta(\by_i) - \mathrm{SG}\left[\text{Median}(\{\Delta(\by_j)\}_{j=1}^B)\right] \Big|.
\end{equation*}
\textbf{Batch-wise DevGrad Loss (LLM):} This results in the Mean Absolute Deviation of the reward-adjusted log-ratios.
\begin{equation*}
    \mathcal{L}^{\mathrm{DG}}_{\text{TV}} = \frac{1}{B} \sum_{i=1}^B \Bigg| \log \frac{\pi_{\theta}(\by_i | \bx)}{\pi_{\text{ref}}(\by_i | \bx)} - \frac{r(\bx, \by_i)}{\beta} - \mathrm{SG}\left[\text{Median}\left(\left\{\log \frac{\pi_{\theta}(\by_j | \bx)}{\pi_{\text{ref}}(\by_j | \bx)} - \frac{r(\bx, \by_j)}{\beta}\right\}_{j=1}^B\right)\right] \Bigg|.
\end{equation*}

\paragraph{Tempered Total Variation}
Applying the tempered transformation to Eq. 130. The normalization relies on the Median of the tempered deviations $\delta \approx -r$, removing the division by $\beta$ inside the absolute difference.
\begin{equation*}
    \tilde{\mathcal{L}}^{\text{DG}}_{\text{TV}} = \frac{1}{B\beta} \sum_{i=1}^B \left| \beta \log \frac{\pi_\theta(\mathbf{y}_i|\mathbf{x})}{\pi_{\text{ref}}(\mathbf{y}_i|\mathbf{x})} - r(\mathbf{x}, \mathbf{y}_i) - \text{SG} \left[ \text{Median}\left( \left\{ \beta \log \frac{\pi_\theta(\mathbf{y}_j|\mathbf{x})}{\pi_{\text{ref}}(\mathbf{y}_j|\mathbf{x})} - r(\mathbf{x}, \mathbf{y}_j) \right\}_{j=1}^B \right) \right] \right|.
\end{equation*}

\section{Gradient Analysis of DevGrad Losses}

In this section, we derive the closed-form gradients for the batch-wise DevGrad losses ($\mathcal{L}^{DG}_f$). 
Recall from Eq. 91 that the gradient of the DevGrad loss with respect to the parameters $\theta$ is given by:
\begin{equation*}
    \nabla_\theta \mathcal{L}^{DG}_f(\mathcal{B}) = \frac{1}{B} \sum_{i=1}^B w_i^{DG} \nabla_\theta \log \pi_\theta(\mathbf{y}_i|\mathbf{x})
\end{equation*}
where the effective gradient weight for the $i$-th sample is $w_i^{DG} = L'_f(\Delta_i + \log Z)$. Due to the optimality condition of $\log Z$ (Eq. 92), these weights always satisfy the zero-sum property $\sum w_i^{DG} = 0$, acting as a control variate.

Below, we define $\delta_i = \Delta(\mathbf{y}_i) + \log Z$ as the normalized deviation. We denote the batch mean operation as $\mathbb{E}_\mathcal{B}[\cdot] = \frac{1}{B}\sum_{j=1}^B (\cdot)$.

\textbf{1. Reverse KL Divergence (Standard Vargrad)}
\begin{itemize}
    \item \textbf{Normalization:} $\log Z = -\mathbb{E}_\mathcal{B}[\Delta(\mathbf{y})]$.
    \item \textbf{Gradient Weight:}
    \begin{equation*}
        w_i^{DG} = \Delta_i - \mathbb{E}_\mathcal{B}[\Delta(\mathbf{y})]
    \end{equation*}
\end{itemize}

\textbf{2. Forward KL Divergence}
\begin{itemize}
    \item \textbf{Normalization:} $\log Z$ satisfies $\mathbb{E}_\mathcal{B}[e^{-(\Delta + \log Z)}] = 1 \implies e^{\log Z} = \mathbb{E}_\mathcal{B}[e^{-\Delta}]$.
    \item \textbf{Gradient Weight:}
    \begin{equation*}
        w_i^{DG} = 1 - \frac{e^{-\Delta_i}}{\mathbb{E}_\mathcal{B}[e^{-\Delta(\mathbf{y})}]}
    \end{equation*}
\end{itemize}

\textbf{3. Pearson $\chi^2$ Divergence}
\begin{itemize}
    \item \textbf{Normalization:} $\log Z$ satisfies $\mathbb{E}_\mathcal{B}[e^{\Delta + \log Z}] = 1 \implies e^{-\log Z} = \mathbb{E}_\mathcal{B}[e^{\Delta}]$.
    \item \textbf{Gradient Weight:}
    \begin{equation*}
        w_i^{DG} = \frac{e^{\Delta_i}}{\mathbb{E}_\mathcal{B}[e^{\Delta(\mathbf{y})}]} - 1
    \end{equation*}
\end{itemize}

\textbf{4. Neyman $\chi^2$ Divergence}
\begin{itemize}
    \item \textbf{Normalization:} $e^{2\log Z} = \mathbb{E}_\mathcal{B}[e^{-2\Delta}]$.
    \item \textbf{Gradient Weight:}
    \begin{equation*}
        w_i^{DG} = \frac{1}{2} \left( 1 - \frac{e^{-2\Delta_i}}{\mathbb{E}_\mathcal{B}[e^{-2\Delta(\mathbf{y})}]} \right)
    \end{equation*}
\end{itemize}

\textbf{5. Squared Hellinger Distance}
\begin{itemize}
    \item \textbf{Normalization:} $e^{\frac{1}{2}\log Z} = \mathbb{E}_\mathcal{B}[e^{-\Delta/2}]$.
    \item \textbf{Gradient Weight:}
    \begin{equation*}
        w_i^{DG} = 2 \left( 1 - \frac{e^{-\Delta_i/2}}{\mathbb{E}_\mathcal{B}[e^{-\Delta(\mathbf{y})/2}]} \right)
    \end{equation*}
\end{itemize}

\textbf{6. Total Variation}
\begin{itemize}
    \item \textbf{Normalization:} $\log Z = -\text{Median}(\{\Delta_j\})$.
    \item \textbf{Gradient Weight:}
    \begin{equation*}
        w_i^{DG} = \text{sgn}\left(\Delta_i - \text{Median}(\{\Delta(\mathbf{y})\})\right)
    \end{equation*}
\end{itemize}

\textbf{7. General $\alpha$-Divergence}
\begin{itemize}
    \item \textbf{Normalization:} $e^{(\alpha-1)\log Z} = (\mathbb{E}_\mathcal{B}[e^{(\alpha-1)\Delta}])^{-1}$.
    \item \textbf{Gradient Weight:}
    \begin{equation*}
        w_i^{DG} = \frac{1}{\alpha-1} \left( \frac{e^{(\alpha-1)\Delta_i}}{\mathbb{E}_\mathcal{B}[e^{(\alpha-1)\Delta(\mathbf{y})}]} - 1 \right)
    \end{equation*}
\end{itemize}

\section{Minimal Implementation}\label{ap:code}

In this Section. we provide minimal formulations of our loss for both standard cases 
\begin{lstlisting}[language=Python, caption={Implementation of Tempered DevGrad Loss}]
import torch, math

def log_z_estimate(delta, name='ReverseKL', alpha=1):
    B = delta.size(0)
    logmeanexp = lambda k: torch.logsumexp(k * delta, 0) - math.log(B)
    return {
        'ReverseKL':      lambda: -delta.mean(),
        'ForwardKL':      lambda: logmeanexp(-1.0),
        'Pearson':        lambda: -logmeanexp(1.0),
        'Hellinger':      lambda: 2.0 * logmeanexp(-0.5),
        'NeymanChi2':     lambda: 0.5 * logmeanexp(-2.0),
        'TotalVariation': lambda: -delta.median(),
        'Alpha':          lambda: (1.0 / (1.0 - alpha)) * logmeanexp(alpha - 1.0)
    }[name]()

def devgrad_loss(delta, name='ReverseKL', alpha=1):
    name = 'ReverseKL' if (name == 'Alpha' and abs(alpha - 1) < 1e-4) else name
    d = delta + log_z_estimate(delta, name, alpha).detach()
    return {
        'ReverseKL':      lambda: 0.5 * d**2,
        'ForwardKL':      lambda: d + (-d).exp() - 1,
        'Pearson':        lambda: d.exp() - d - 1,
        'Hellinger':      lambda: 2*d + 4*(-0.5*d).exp() - 4,
        'NeymanChi2':     lambda: 0.5*d + 0.25*(-2*d).exp() - 0.25,
        'TotalVariation': lambda: d.abs(),
        'Alpha':          lambda: (1/(alpha-1)**2) * ((alpha-1)*d).exp() - d/(alpha-1) - (1/(alpha-1)**2)
    }[name]().mean()

def tempered_devgrad_loss(log_pi, log_ref, reward, beta=1.0, name='ReverseKL', alpha=1):
    delta_beta = (beta * (log_pi - log_ref)) - reward
    raw_loss = devgrad_loss(delta_beta, name, alpha)
    return raw_loss / beta
\end{lstlisting}

\section{Experiments}\label{ap:experiments}

\subsection{Synthetic Grid Experiment}
For the synthetic grid experiment, we use same setup as \citet{malkin2022gflownets}, with the following parameters:

\begin{itemize}
    \item \textbf{Dimension ($D$)}: The dimensionality of the grid.
    \[ D = 2 \]
    
    \item \textbf{Side Length ($H$)}: The resolution of the grid per dimension.
    \[ H = 128 \]
    
    \item \textbf{State Space ($\mathcal{S}$)}: The set of non-terminating states, representing coordinates in the grid.
    \[ \mathcal{S}^o = \{0, 1, \dots, H-1\}^D \]
    The process begins at the initial state $s_0 = \mathbf{0} = (0, \dots, 0)$.
    
    \item \textbf{Exploration Parameter ($R_0$)}: A background reward constant that determines the scarcity of the reward signal.
    \[ R_0 = 0.001 \]
    
    \item \textbf{Reward Function ($R$)}: The unnormalized density defined over terminating states $s^\top = (s^1, \dots, s^D)$. It consists of the background term $R_0$ plus two region-based terms that create modes near the corners.
    \[
    R(s^\top) = R_0 + 0.5 \prod_{d=1}^D \mathbb{I}\left[ \left| \frac{s^d}{H-1} - 0.5 \right| \in (0.25, 0.5] \right] + 2 \prod_{d=1}^D \mathbb{I}\left[ \left| \frac{s^d}{H-1} - 0.5 \right| \in (0.3, 0.4) \right]
    \]
    where $\mathbb{I}[\cdot]$ is the indicator function which is 1 if the condition holds and 0 otherwise.
    
    \item \textbf{Action Space}: At any state $s = (s^1, \dots, s^D)$, the allowed actions are:
    \begin{enumerate}
        \item Increment a coordinate $d$ by 1: $s \to s + e_d$ (allowed only if $s^d < H-1$).
        \item Terminate: $s \to s^\top$ (transition to the corresponding terminating state in $\mathcal{X}$).
    \end{enumerate}
\end{itemize}

We also provide the following plots for on policy training with the same losses, showing a similar trend:

\begin{figure}[H]
     \centering
     \begin{subfigure}[b]{0.48\textwidth}
         \centering
         \includegraphics[width=\textwidth]{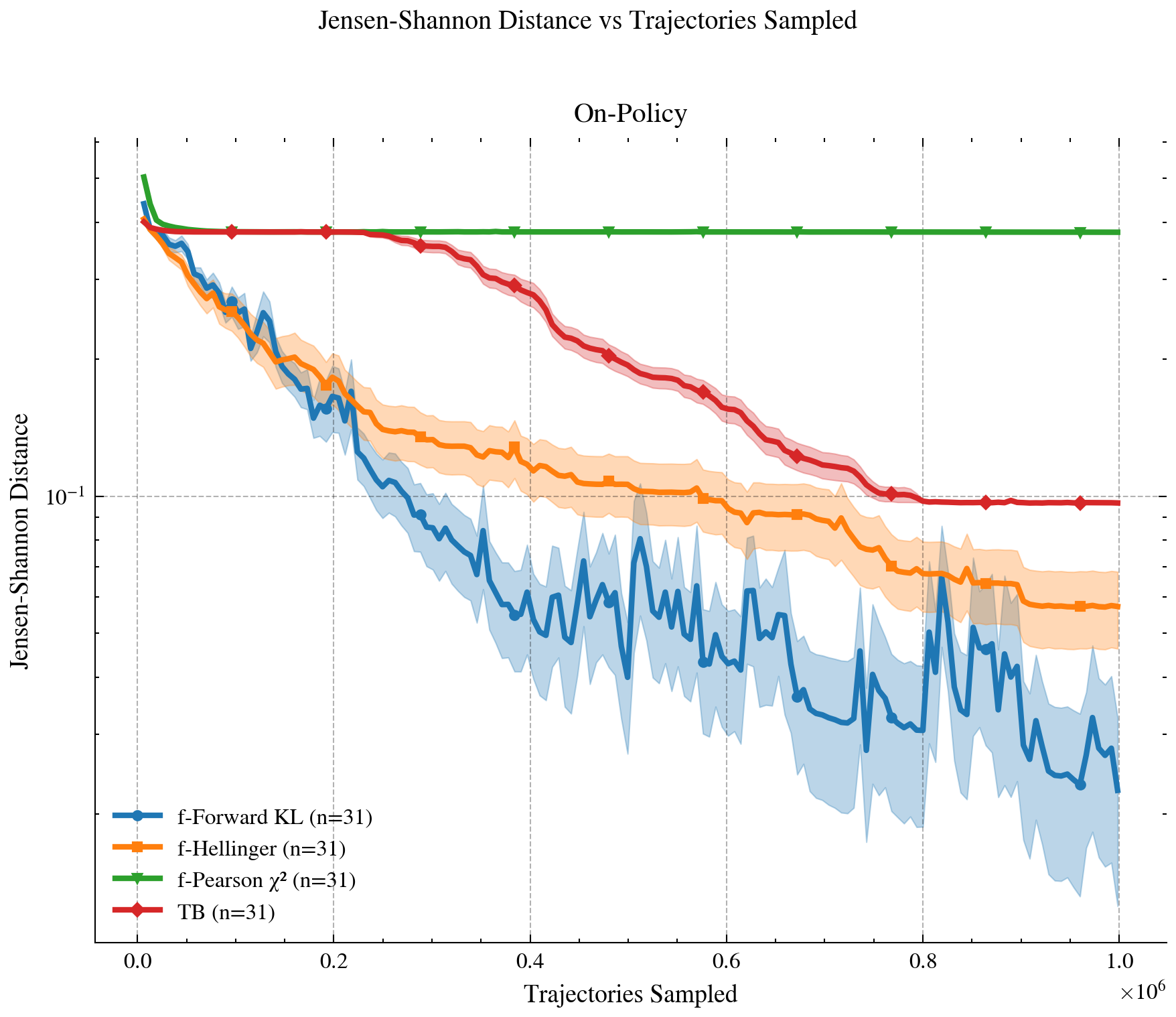}
         \caption{JSD vs. Trajectories}
         \label{fig:jsd_trajectories}
     \end{subfigure}
     \hfill
     \begin{subfigure}[b]{0.48\textwidth}
         \centering
         \includegraphics[width=\textwidth]{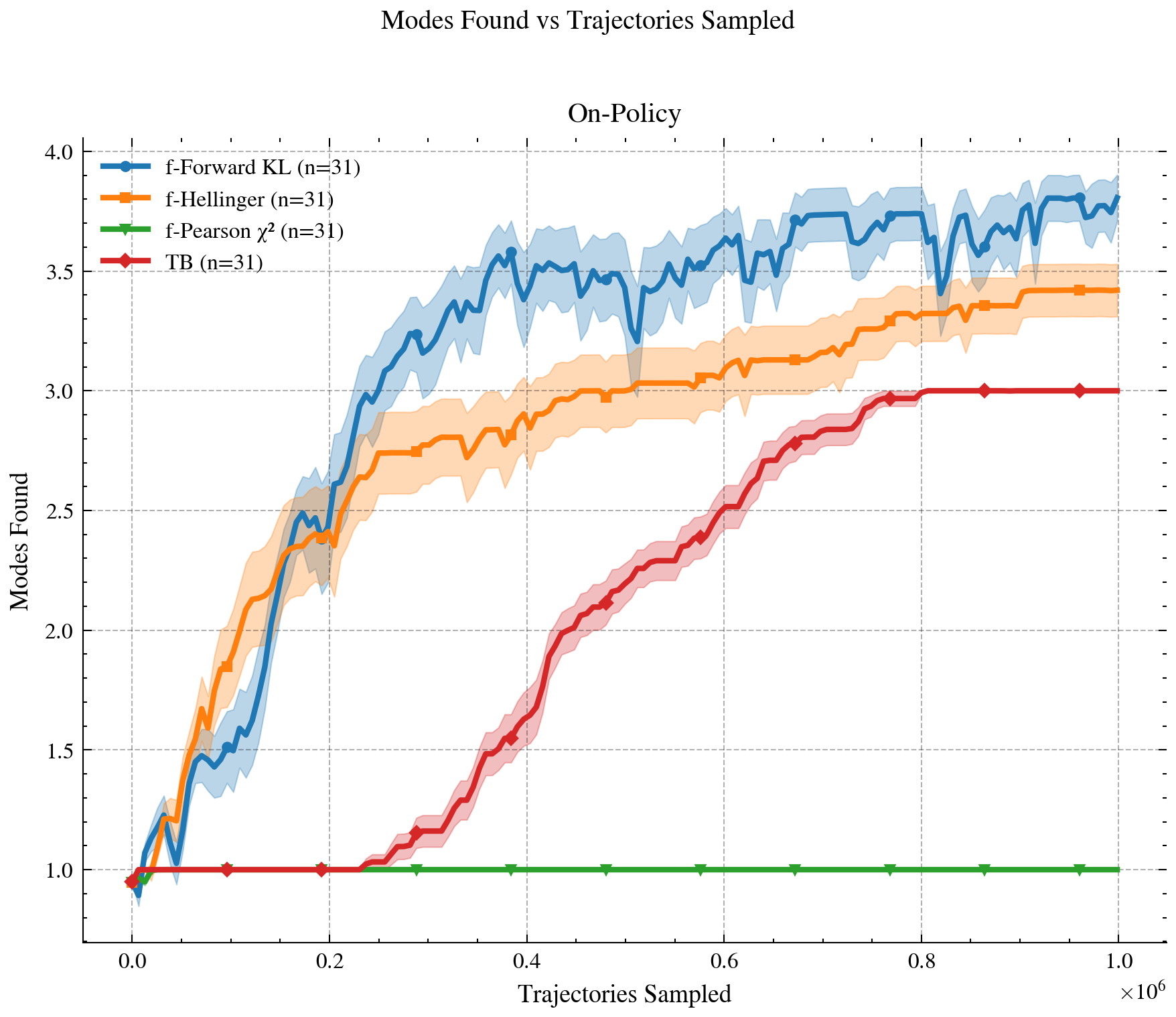}
         \caption{Modes Found vs. Trajectories}
         \label{fig:modes_trajectories}
     \end{subfigure}
        \caption{An on policy recreation of the synthetic grid experiment in the main text. }
        \label{fig:grid_experiment_results_on_policy}
\end{figure}

\subsection{SynFlowNet Experiments}

Here we present our results from the SynFlowNet experiments across three of the tasks from \citet{cretu2025synflownet}. These are all using rewards based on bioactivity and binding affinity as:
\begin{itemize}
    \item sEH (Soluble Epoxide Hydrolase): Uses a pretrained proxy model (a Message Passing Neural Network) trained to predict the AutoDock Vina binding energy.
    \item GSK3$\beta$ (Glycogen Synthase Kinase-3 Beta): Used as an oracle function from the PMO benchmark to predict bioactivity.
    \item DRD2 (Dopamine Receptor D2): Used as an oracle function from the PMO benchmark to predict bioactivity.
\end{itemize}
We present our results across a range of different values of the inverse temperature, $\beta$, in order to understand how our $\alpha$ parameter interacts with the natural way to tune mode-seeking vs covering behaviour in GFlowNets. We fix all the hyperparameter settings used in the original SynFlowNet experiments. Firstly, we present the reward distributions for unique molecules generated and molecule diversity, which indicate the trend that annealing alpha during training ends up with higher reward molecules. This can also be seen from the additional plots of the CDF of the reward function which we add afterward. Finally, we add diversity plots across experimental settings, demonstrating that 

For each $\alpha,\beta$ setting we train 5 seeds where each seed takes 3 H100 hours to train.

\begin{figure}[H] 
    \centering
    
    \begin{subfigure}{\textwidth}
        \centering
        \includegraphics[width=0.8\linewidth]{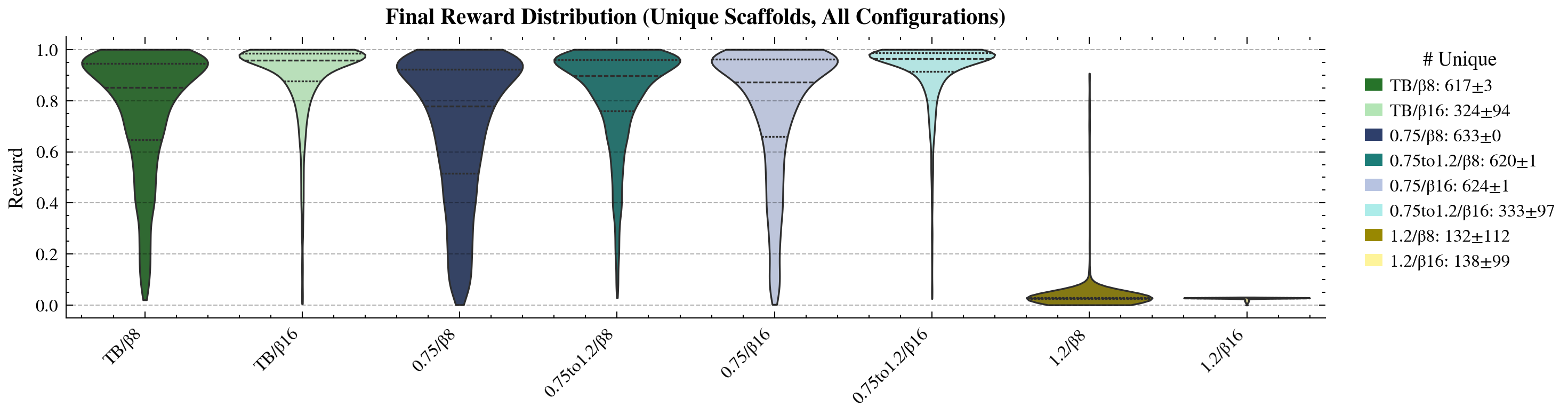}
        \caption{DRD2 reward distributions}
        \label{fig:drd2_dist}
    \end{subfigure}
    
    \vspace{0.5cm} 
    
    \begin{subfigure}{\textwidth}
        \centering
        \includegraphics[width=0.8\linewidth]{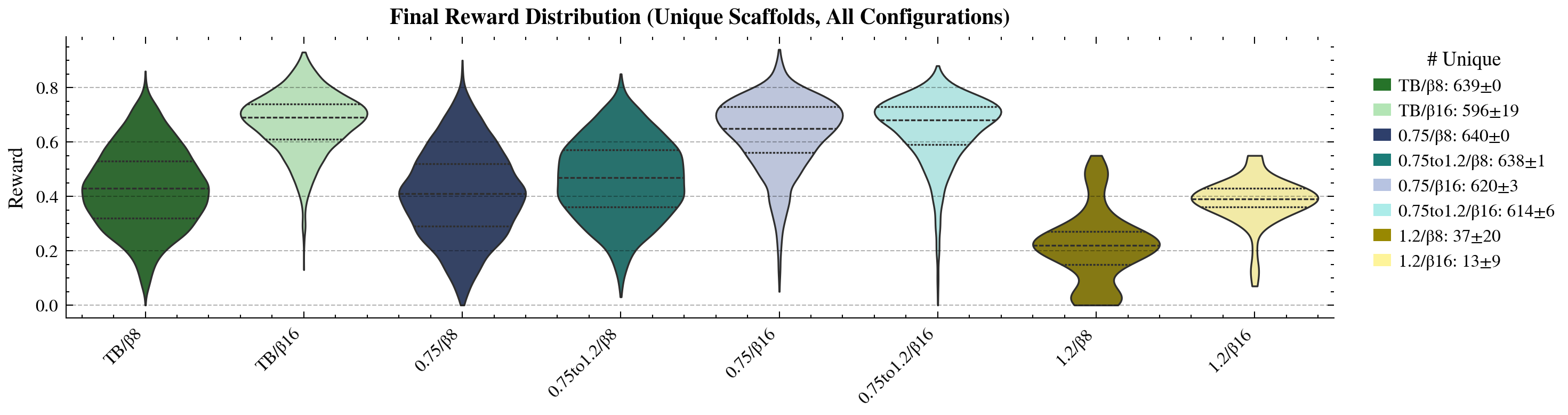}
        \caption{GSK3 reward distributions}
        \label{fig:gsk_dist}
    \end{subfigure}
    
    \vspace{0.5cm}
    
    \begin{subfigure}{\textwidth}
        \centering
        \includegraphics[width=0.8\linewidth]{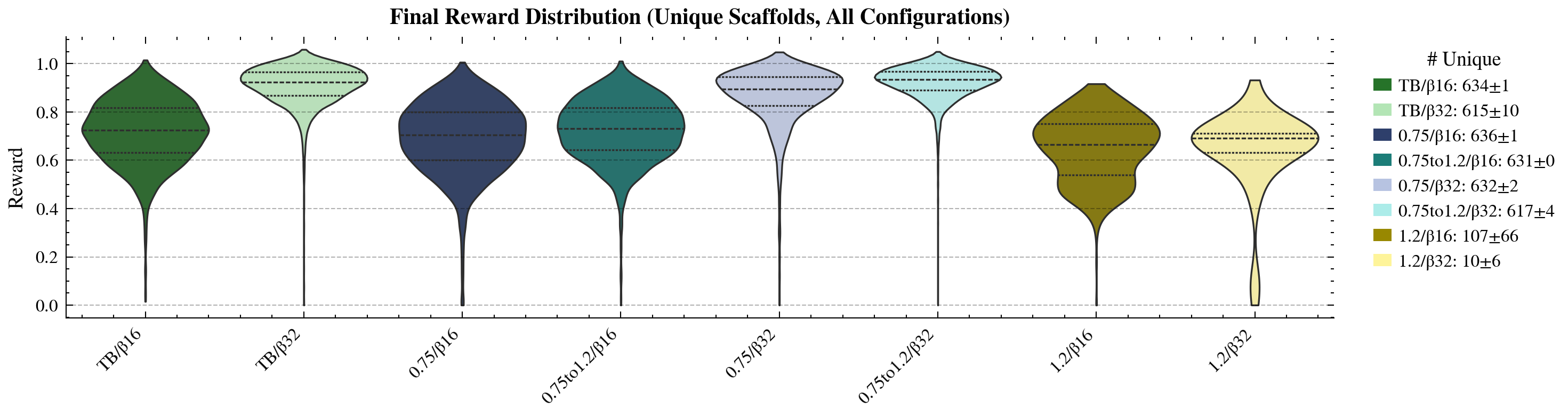}
        \caption{sEH reward distributions}
        \label{fig:seh_dist}
    \end{subfigure}

    \caption{Swept reward distributions over different alpha beta values in Synflownet training}
    \label{fig:total_swept_rewards}
\end{figure}

\begin{figure*}[htbp]
  \centering
  
  \begin{subfigure}[b]{0.32\textwidth}
    \centering
    \includegraphics[width=\textwidth]{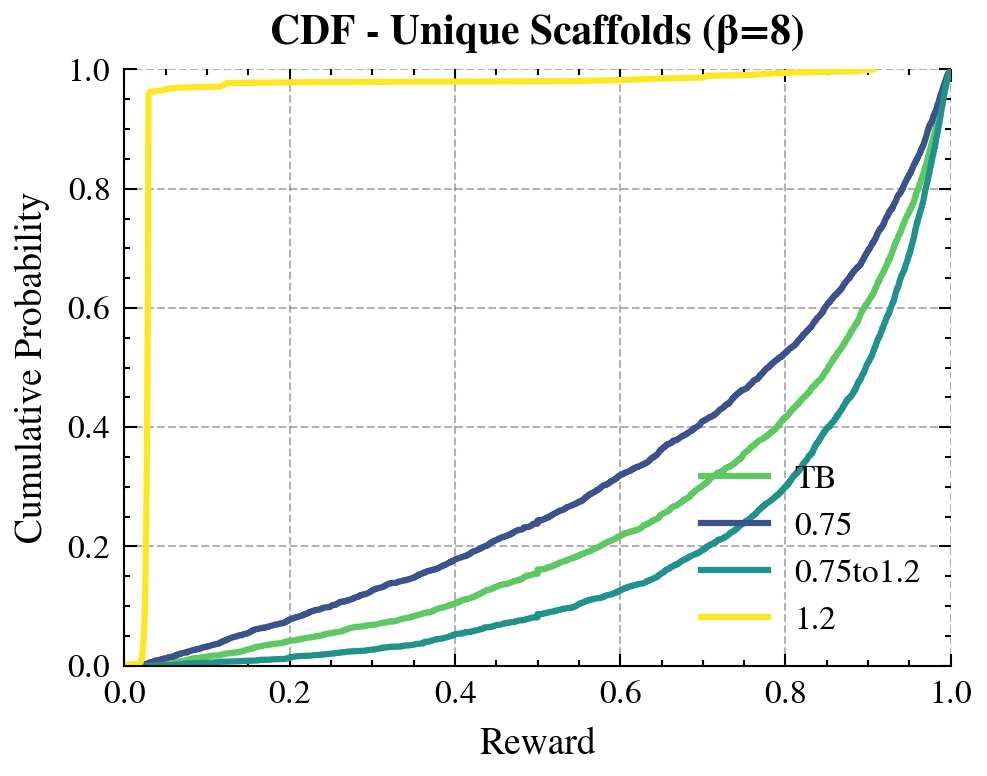}
    \caption{DRD2, $\beta=8$}
    \label{fig:cdf_drd2_b8}
  \end{subfigure}
  \hfill
  \begin{subfigure}[b]{0.32\textwidth}
    \centering
    \includegraphics[width=\textwidth]{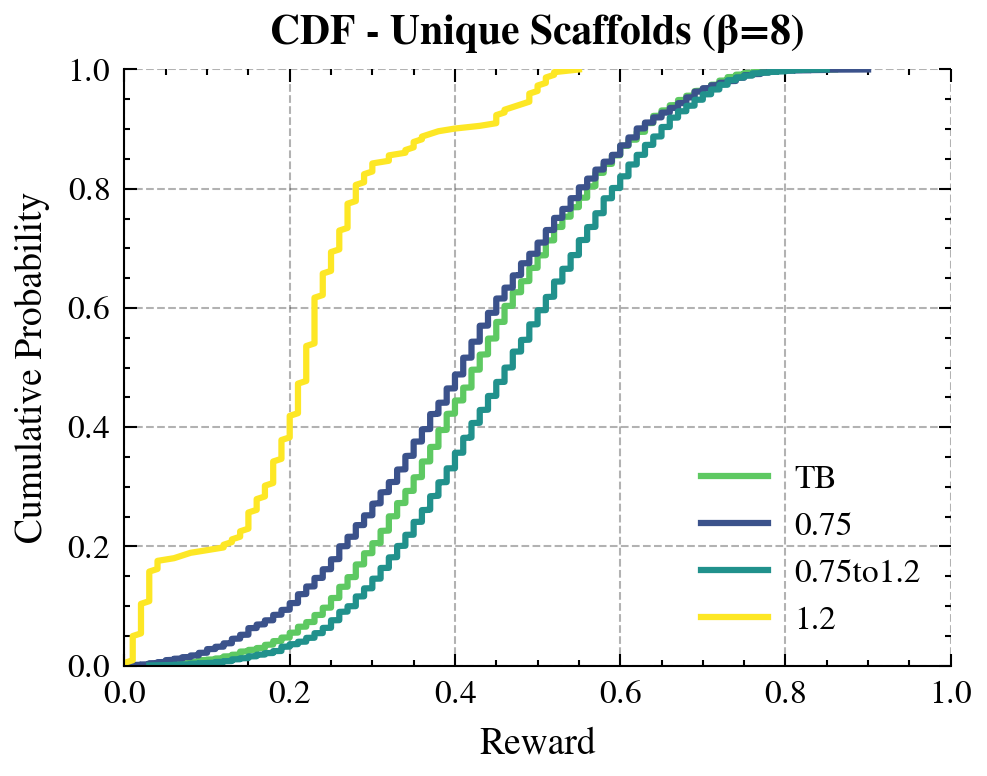}
    \caption{GSK3, $\beta=8$}
    \label{fig:cdf_gsk_b8}
  \end{subfigure}
  \hfill
  \begin{subfigure}[b]{0.32\textwidth}
    \centering
    \includegraphics[width=\textwidth]{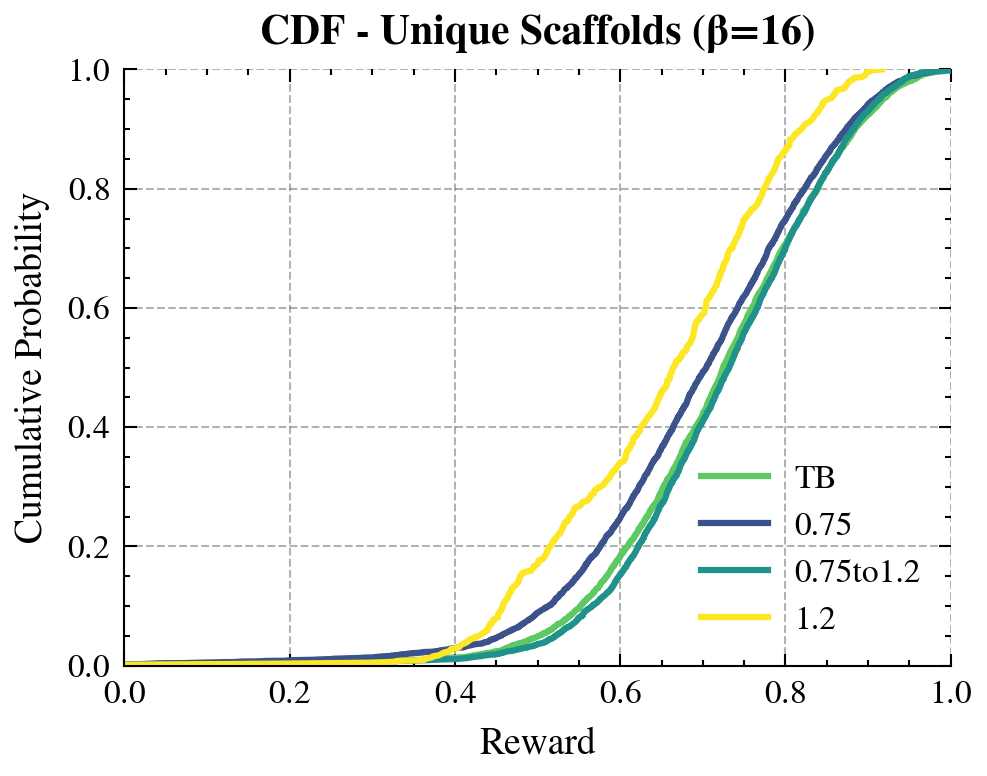}
    \caption{sEH, $\beta=16$}
    \label{fig:cdf_seh_b8}
  \end{subfigure}

  \vspace{0.5cm} 

  \begin{subfigure}[b]{0.32\textwidth}
    \centering
    \includegraphics[width=\textwidth]{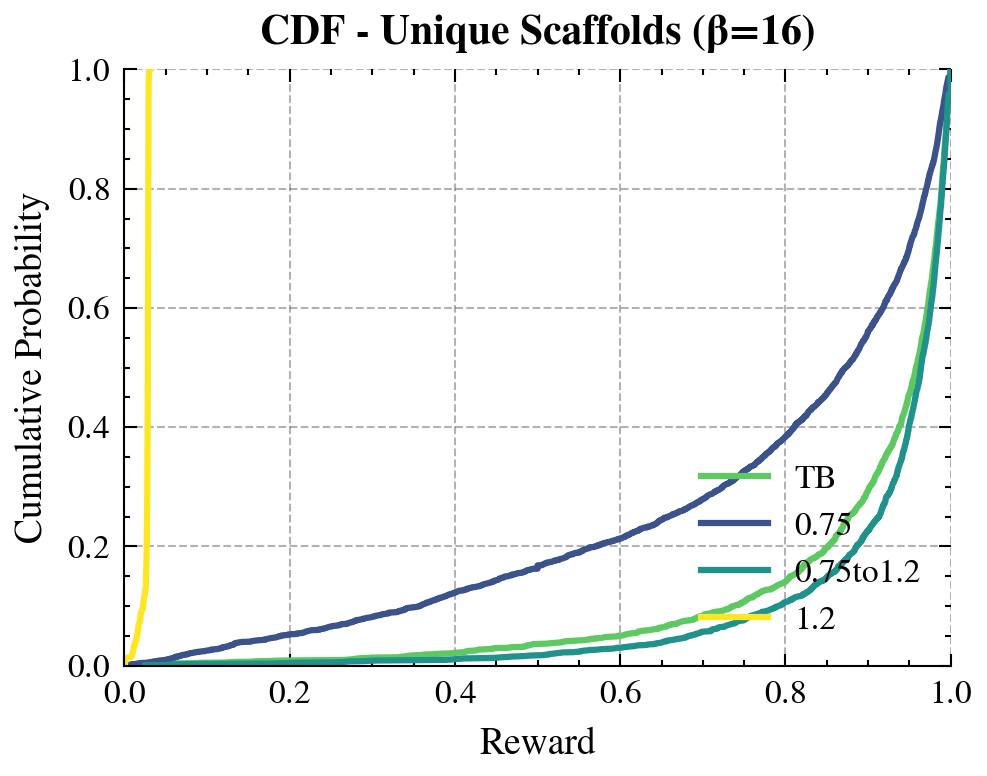}
    \caption{DRD2, $\beta=16$}
    \label{fig:cdf_drd2_b16}
  \end{subfigure}
  \hfill
  \begin{subfigure}[b]{0.32\textwidth}
    \centering
    \includegraphics[width=\textwidth]{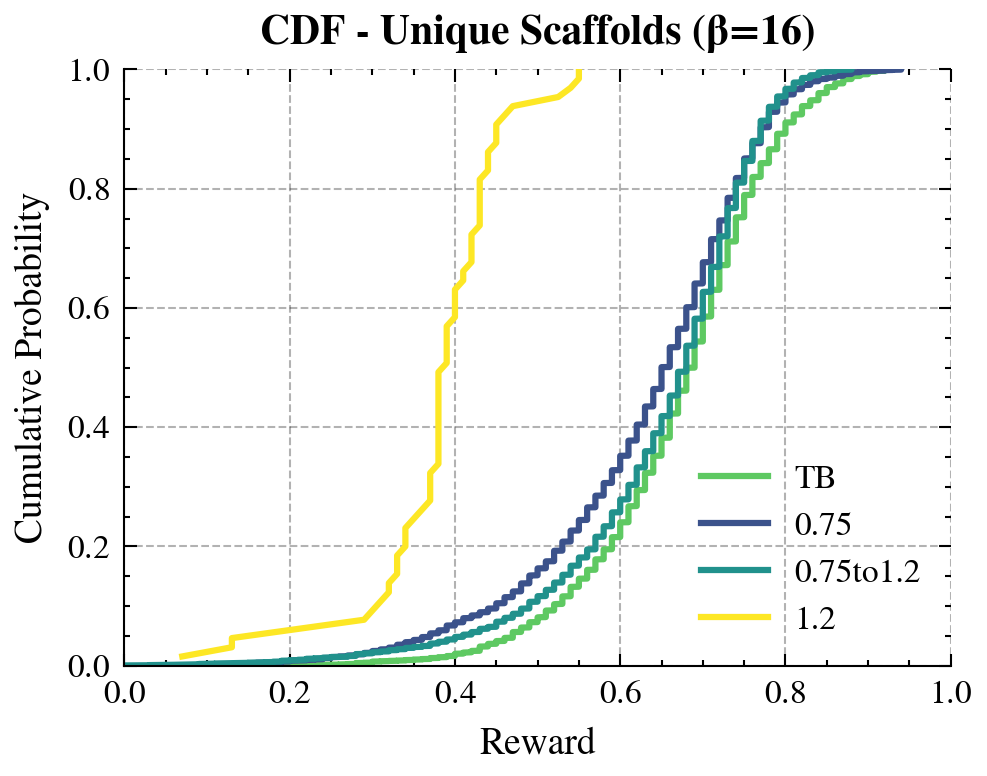}
    \caption{GSK3, $\beta=16$}
    \label{fig:cdf_gsk_b16}
  \end{subfigure}
  \hfill
  \begin{subfigure}[b]{0.32\textwidth}
    \centering
    \includegraphics[width=\textwidth]{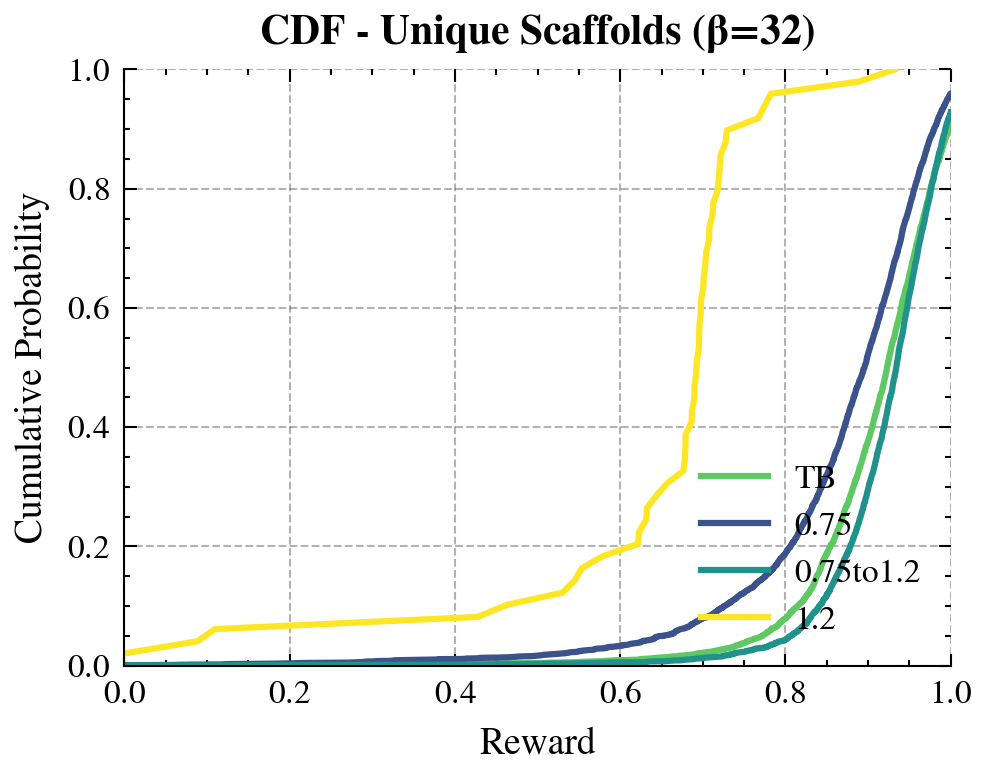}
    \caption{sEH, $\beta=32$}
    \label{fig:cdf_seh_b16}
  \end{subfigure}
  \caption{Comparison of scaffold CDFs across different targets (columns) and $\beta$ values (rows) during SynFlowNet training. In 5/6 of the plots the SynFlownet with annealed $\alpha$  has its reward CDF to the right of that trained via trajectory balance, indicating it is generating a higher proportion of high reward molecules. On the other hand $\alpha=1.2$ leads to clear mode collapse, choosing a few high value molecules. }
  \label{fig:combined_cdf_grid}
\end{figure*}

\begin{figure}[p] 
    \centering
    
    \begin{subfigure}{\textwidth}
        \centering
        \includegraphics[width=\linewidth]{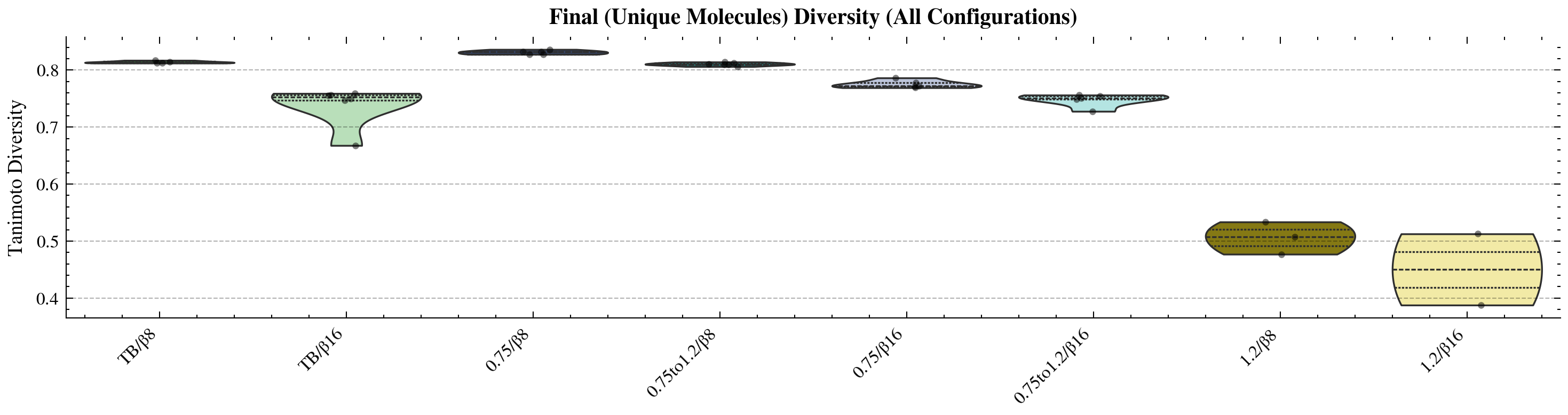}
        \caption{GSK3 diversity}
        \label{fig:diversity_gsk}
    \end{subfigure}
    
    \vspace{0.3cm} 
    
    \begin{subfigure}{\textwidth}
        \centering
        \includegraphics[width=\linewidth]{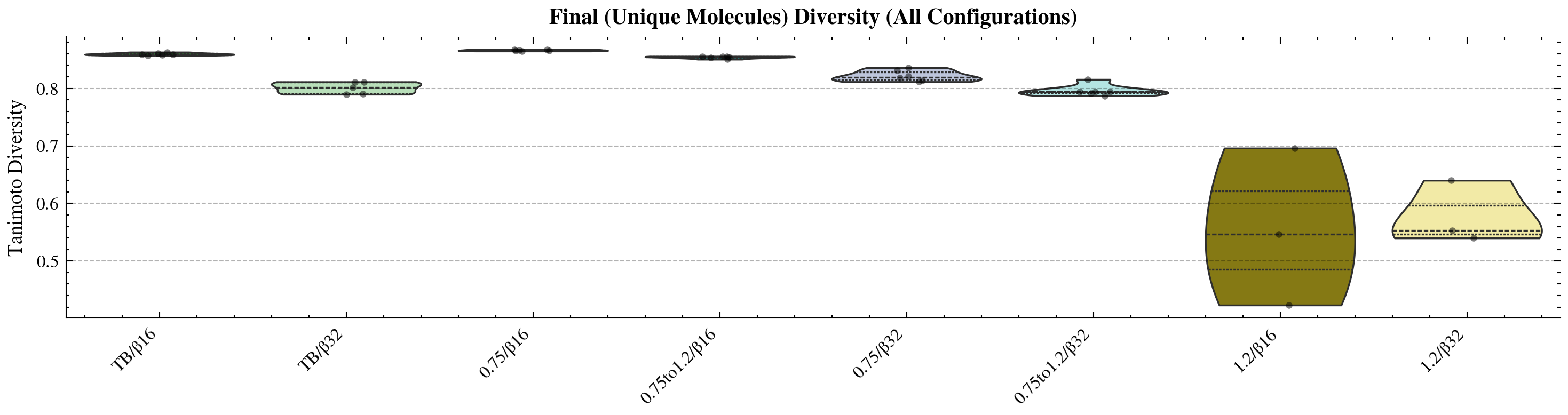}
        \caption{sEH diversity}
        \label{fig:diversity_seh}
    \end{subfigure}
    
    \vspace{0.3cm}
    
    \begin{subfigure}{\textwidth}
        \centering
        \includegraphics[width=\linewidth]{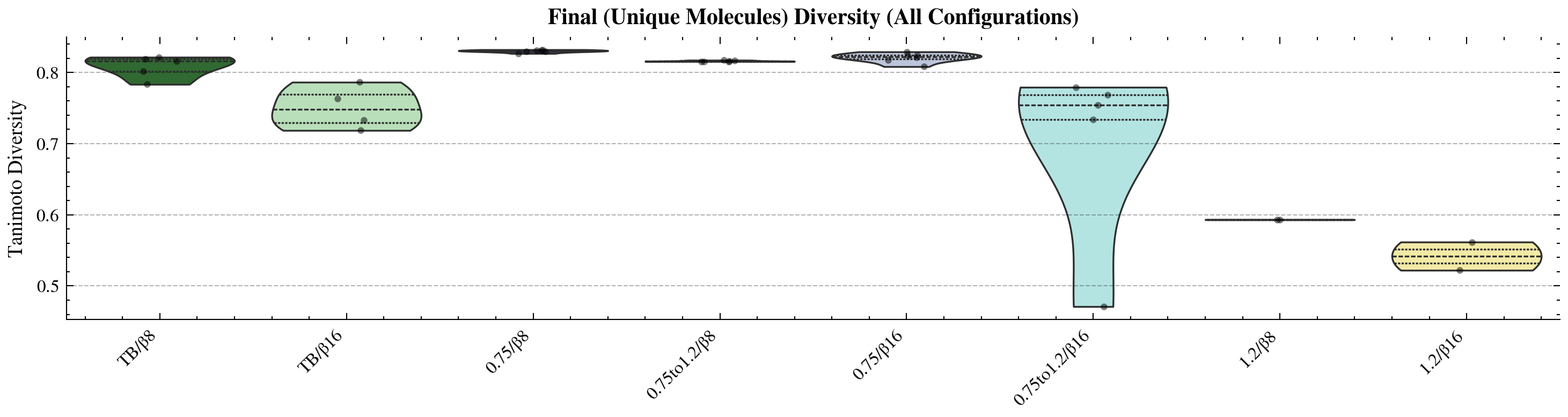}
        \caption{DRD2 diversity}
        \label{fig:diversity_drd2}
    \end{subfigure}

    \caption{Tanimoto diversity of unique samples across varying $\alpha$ and $\beta$ values for GSK3, sEH, and DRD2 targets during SynFlowNet training.This demonstrates that lower $\alpha$ leads to more diverse molecules and that annealing can also lead to more diverse molecules across a range of settings.}
    \label{fig:combined_diversity_plots}
\end{figure}

\newpage
\subsection{Conditional Sampling in Diffusion Models}

Here we present additional details and further analysis for the conditional sampling in diffusion models experiment. 

\subsubsection{Experimental Details}
Our experimental setup is taken directly from \citet{venkatraman2024amortizing}, where the goal is to tune a pre-trained diffusion model for sampling MNIST digits to only sample odd or even digits. This is a helpful task for illustrating the mode seeking vs mode covering properties as to effectively sample the posterior the model must sample from multiple different digits based on their parity. A pre-trained classifier was used to define the reward for generated samples, where the reward is given by:
\begin{align*}
    r(\bx) = \max_{c\in \mathrm{target\_class}} P(c\mid \bx).
\end{align*}
with $P(c\mid \bx)$ coming from the pretrained classifier. 

\paragraph{Hyperparameters}
We use the same parameters as in \citet{venkatraman2024amortizing} which can be found in the repo at \href{https://github.com/GFNOrg/diffusion-finetuning}{https://github.com/GFNOrg/diffusion-finetuning}. For completeness these are:

\begin{table}[H]
\centering
\caption{Model Training and Optimizer Hyperparameters}
\label{tab:hyperparameters}
\begin{tabular}{llc}
\toprule
\textbf{Category} & \textbf{Parameter} & \textbf{Value} \\
\midrule
\multirow{5}{*}{Training} 
    & Epochs & 300 \\
    & Global Batch Size & 128 \\
    & Gradient Accumulation & 1 \\
    & Sampling Steps & 200 \\
    & Workers & 8 \\
\midrule
\multirow{3}{*}{Optimizer (Adam)} 
    & Learning Rate ($\eta$) & $6 \times 10^{-4}$ \\
    & $\beta_1, \beta_2$ & 0.9, 0.999 \\
    & $\epsilon$ & $1 \times 10^{-8}$ \\
\bottomrule
\end{tabular}
\end{table}

\subsubsection{Results on Posterior Fit}
Here we include a variety of results and plots aiming to show how well the different losses capture the posterior. We demonstrate that whilst the relative trajectory balance loss of \citet{venkatraman2024amortizing} leads to the generation of high reward samples (i.e digits with the correct parity) this comes at the cost of sampling unevenly from the posterior digits. This effect can be seen most extremely in the even digits, where relative trajectory balance over samples $0$s as they are more distinct from odd digits than other even digits. 

\begin{figure}[h!]
    \centering
    \begin{subfigure}{0.48\linewidth}
        \centering
        \includegraphics[width=\linewidth]{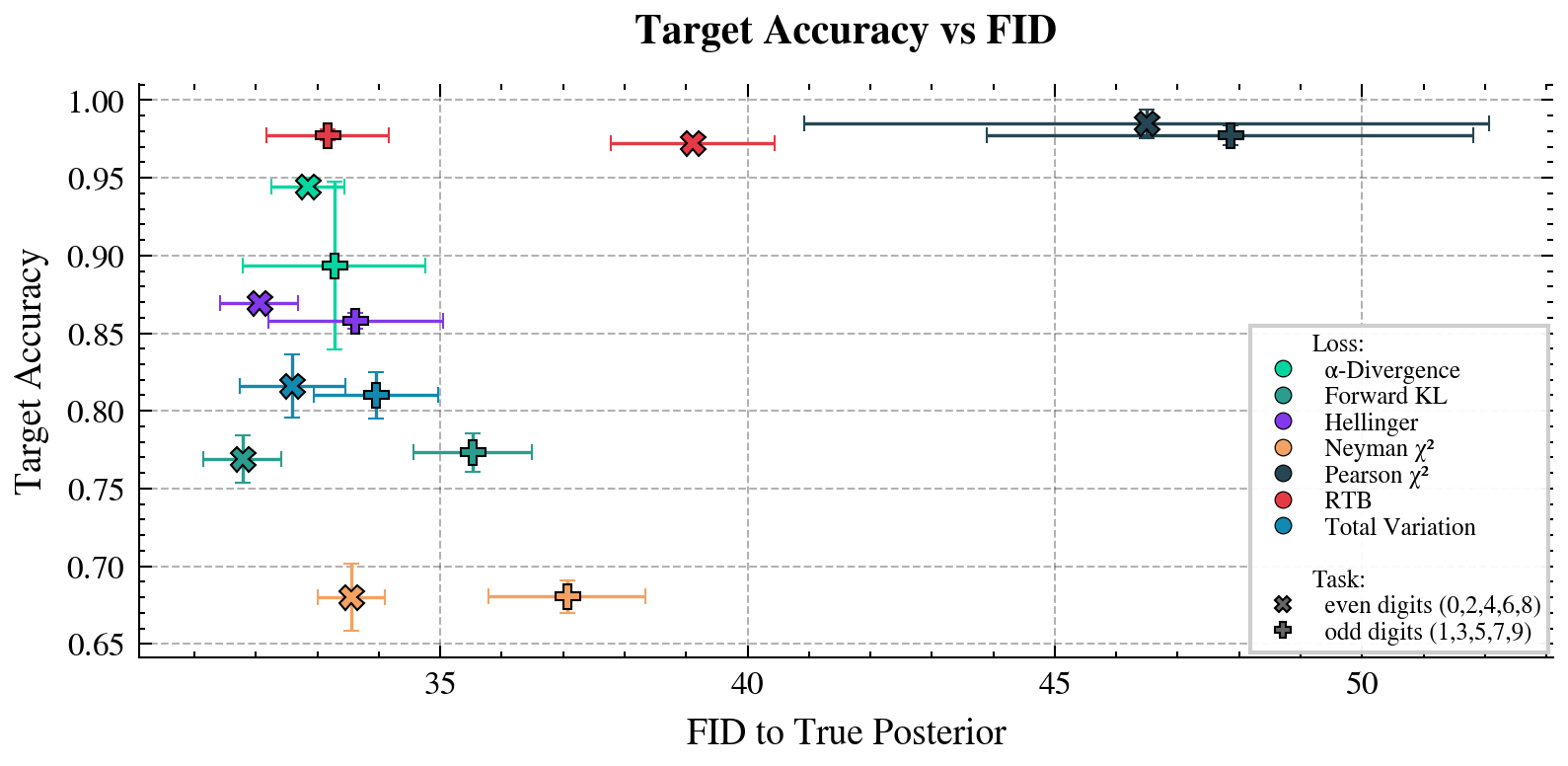}
        \caption{Target Accuracy (i.e proportion of target classes sampled) vs FID to posterior by divergence.}
        \label{fig:reward_vs_FID}
    \end{subfigure}
    \hfill 
    \begin{subfigure}{0.44\linewidth}
        \centering
        \includegraphics[width=\linewidth]{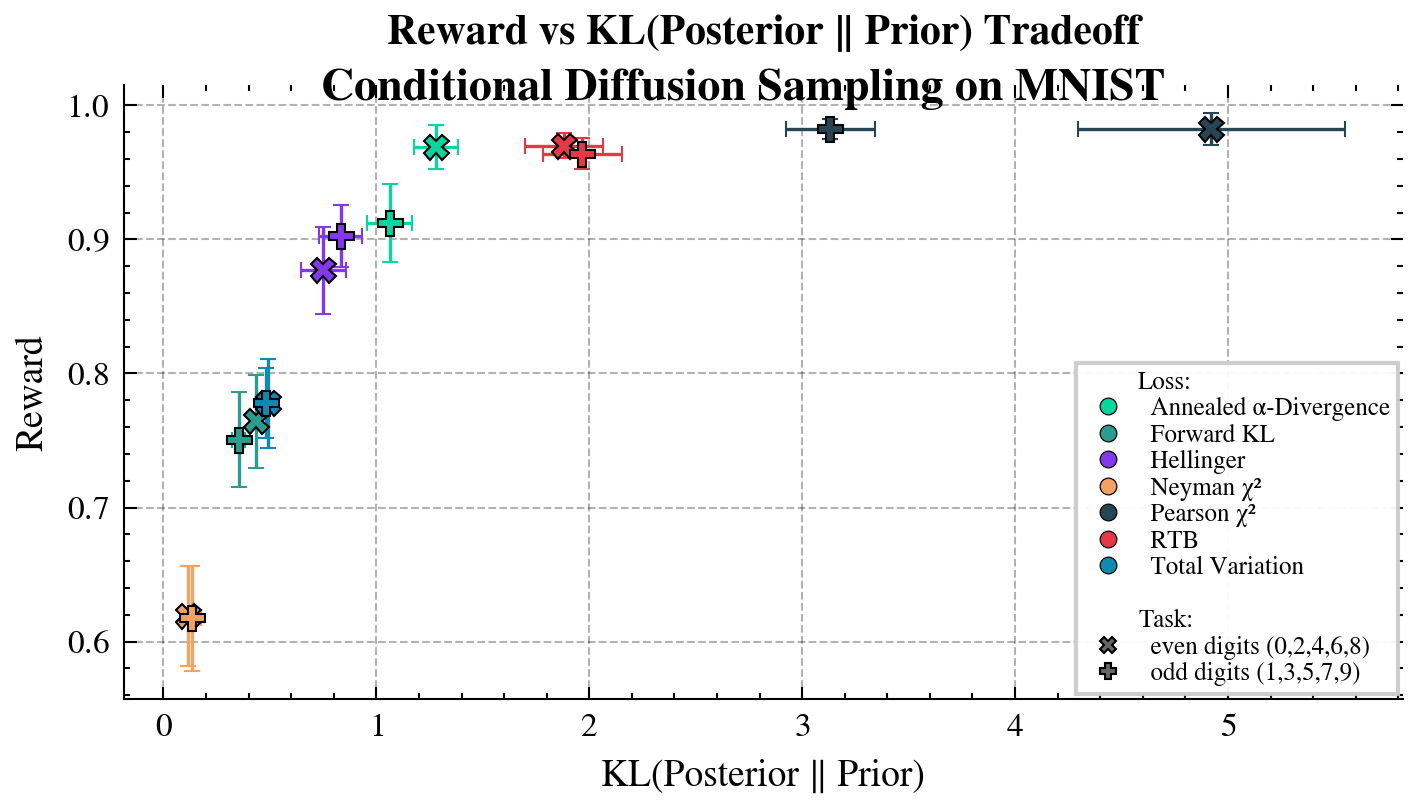}
        \caption{Reward (i.e Target accuracy) vs $\KL$ divergence between the fitted posterior and prior over trajectories.  }
        \label{fig:reward_vs_divergence}
    \end{subfigure}

    \caption{Training curves for reward and entropy across all four models. We can see that the large asynchronous delay causes instability in PPO training whereas all $f$-trajectory balance losses lead to stable training  .  }
    \label{fig:rlvr_analysis_combined}
\end{figure}

\subsubsection{Generated Samples}\label{ap:generated_samples}
\begin{figure}[H]
    \centering
    \begin{subfigure}{0.48\linewidth}
        \centering
        \includegraphics[width=\linewidth]{paper_figures/MNIST_digits/train_posterior_sweep_forward_kl_classeven_5_1769213383_prior_posterior_comparison_300.png}
        \caption{Forward KL}
        \label{fig:forward_kl}
    \end{subfigure}
    \hfill
    \begin{subfigure}{0.48\linewidth}
        \centering
        \includegraphics[width=\linewidth]{paper_figures/MNIST_digits/train_posterior_sweep_hellinger_classeven_5_1769213383_prior_posterior_comparison_300.png}
        \caption{Hellinger}
        \label{fig:hellinger}
    \end{subfigure}

    \vspace{1em} 

    \begin{subfigure}{0.48\linewidth}
        \centering
        \includegraphics[width=\linewidth]{paper_figures/MNIST_digits/train_posterior_sweep_rtb_classeven_5_1769213383_prior_posterior_comparison_300.png}
        \caption{Reverse KL}
        \label{fig:rtb}
    \end{subfigure}
    \hfill
    \begin{subfigure}{0.48\linewidth}
        \centering
        \includegraphics[width=\linewidth]{paper_figures/MNIST_digits/train_posterior_sweep_pearson_classeven_5_1769213383_prior_posterior_comparison_300.png}
        \caption{Pearson $\chi^2$}
        \label{fig:pearson}
    \end{subfigure}

    \caption{Comparison of $f$-divergence losses with more mode covering on the top and mode seeking on the bottom.}
    \label{fig:gen_samples}
\end{figure}
Figure \ref{fig:gen_samples} shows samples generated by the diffusion model tuned using each loss. We can see that as the reverse KL and Pearson $\chi^2$ are more mode seeking than the Hellinger and Forward KL, we get more mode collapse, leading the model to overly sample $0$'s or $6$'s.
\subsection{Asynchronous RLVR}

In this section, we provide details and additional plots for the asynchronous RLVR task.

\subsubsection{Experimental Details}

To demonstrate our losses, we train on the \texttt{math-group} environment from \citet{primeintellect2025prime-rl}, which consists of a mix of samples from \citep{cobbe2021training} and Hendrycks MATH \citep{hendrycks2measuring}. For hyperparameters, we use all the defaults from \citet{primeintellect2025prime-rl} with LORA and a $10$x multiplier on the learning rate as advocated for in \citet{schulman2025lora}. Full hyperparameter details can be found in Table \ref{tab:hyperparameters_llm}.


\begin{table}[H]
\centering
\begin{tabular}{@{}llc@{}}
\toprule
\textbf{Category} & \textbf{Hyperparameter} & \textbf{Value} \\ \midrule
\multirow{2}{*}{General} 
 & Max Sequence Length & 4096 \\
 & Training Steps & 300 \\ \midrule
\multirow{2}{*}{LoRA } & Rank ($r$) & 32 \\
 & Alpha ($\alpha$) & 64 \\
 \midrule
\multirow{2}{*}{Optimizer} & Optimizer & AdamW \\
& Learning Rate & $1 \times 10^{-5}$  \\ \midrule
\multirow{3}{*}{Orchestration} & Global Batch Size & 512 \\
 & Rollouts per Example & 16 \\
 & Async Level & 50 \\
 \midrule
\multirow{2}{*}{Evaluation} 
 & Eval Examples & 600 \\
 & Temperature & 1.0 \\  \midrule
 \multirow{3}{*}{Loss} & $\beta$ & 0.001 \\
 & Tempered & True \\
 & Kimi Approximation & True \\  \bottomrule
\end{tabular}
\caption{Model Configuration and Training Hyperparameters. For all hyperparameters not selected we use prime-rl \citep{primeintellect2025prime-rl} defaults.  }
\label{tab:hyperparameters_llm}
\end{table}
We repeat this task for 4 LLMs, Qwen2.5-3b, -7B, -14B, and OLMo-2-1124-7B with 3 seeds per model. Each model is run on 4 H100s with two for inference and two for training, using prime-rl \citep{primeintellect2025prime-rl}. Training time varies from 3-5 hours depending on model size. For a comparison loss we use the PPO implementation from prime-rl with all standard hyperparameters, specifically no KL regularisation. 

\subsubsection{Training Curves}

\begin{figure}[H]
    \centering
    \begin{subfigure}{0.48\linewidth}
        \centering
        \includegraphics[width=\linewidth]{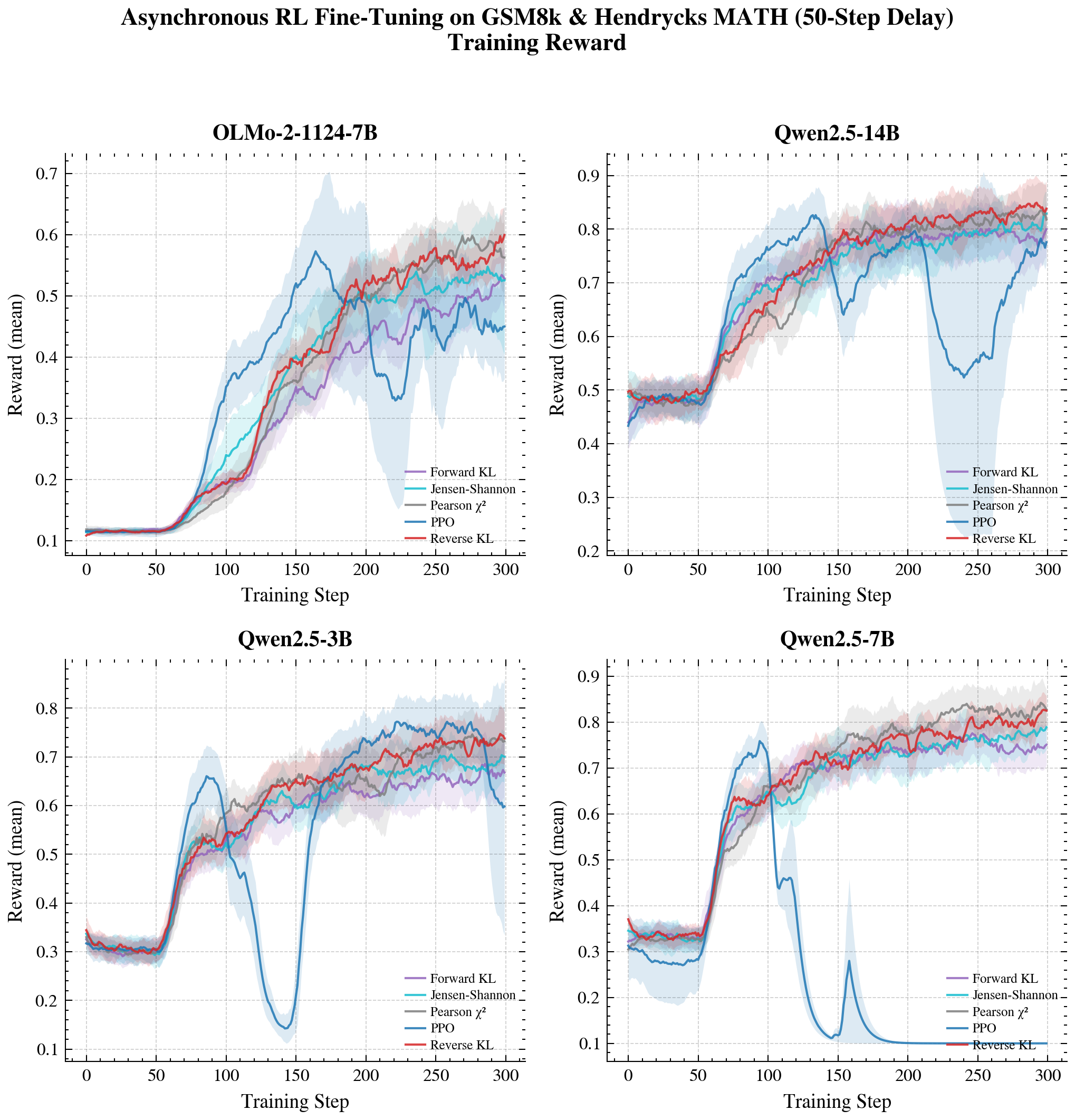}
        \caption{Reward training curves averaged over 3 runs, using 10 steps exponential moving average.}
        \label{fig:entropy_reward_avg}
    \end{subfigure}
    \hfill 
    \begin{subfigure}{0.48\linewidth}
        \centering
        \includegraphics[width=\linewidth]{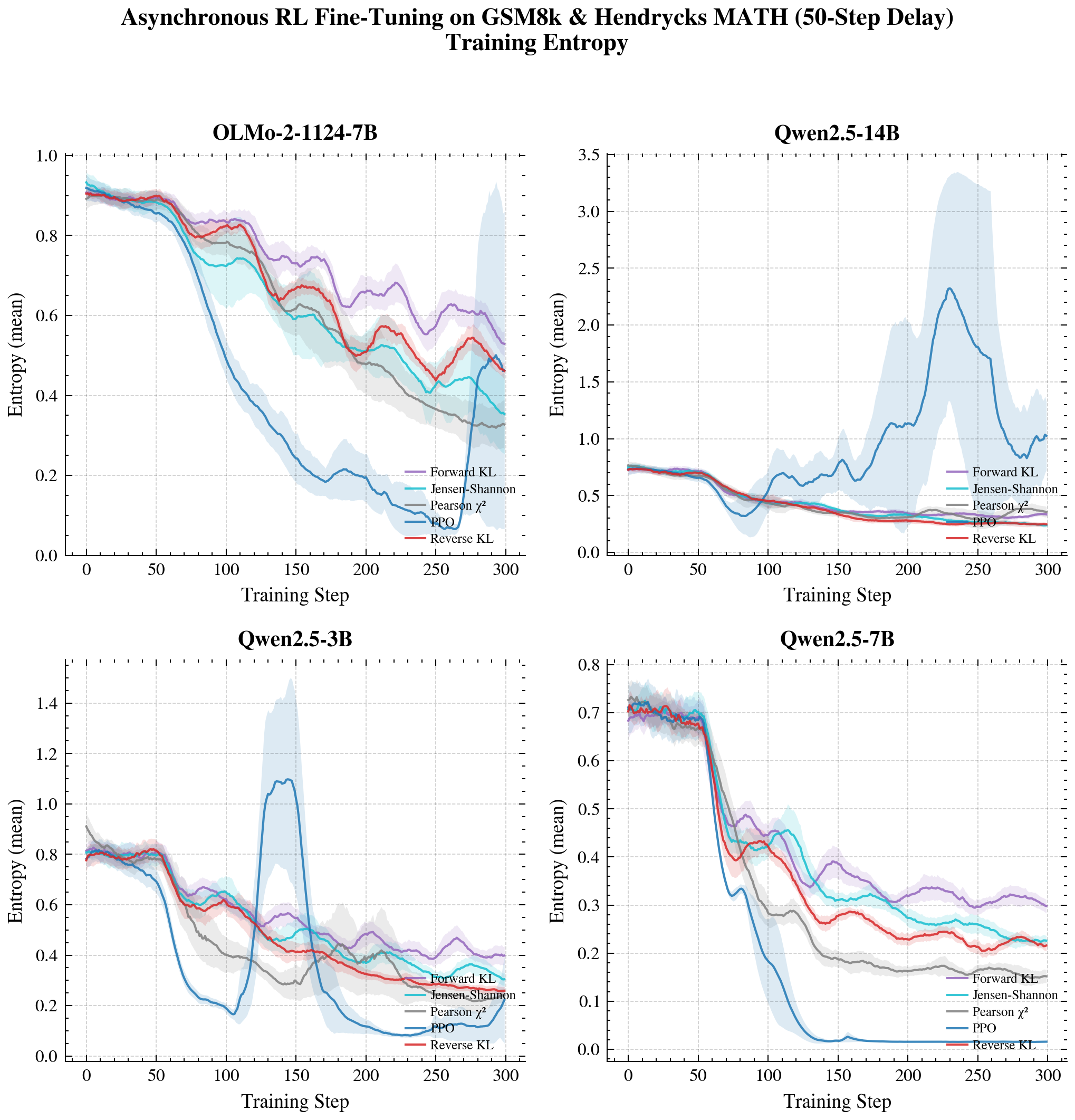}
        \caption{Entropy training curves averaged over 3 runs, using 10 steps exponential moving average.}
        \label{fig:entropy_reward_ood}
    \end{subfigure}

    \caption{Training curves for reward and entropy across all four models. We can see that the large asynchronous delay causes instability in PPO training whereas all $f$-trajectory balance losses lead to stable training  .  }
    \label{fig:rlvr_analysis_combined}
\end{figure}

\subsubsection{Entropy on Downstream Tasks}
We now show that the entropy tradeoff transfers to additional tasks that were not trained on. Specifically on American Invitational Mathematics Examination (AIME) 2024 and OpenPubMedQA \citet{jin2019pubmedqa}. 

\begin{figure}[H] 
    \centering
    
    \begin{subfigure}{\textwidth}
        \centering
        \includegraphics[width=0.8\linewidth]{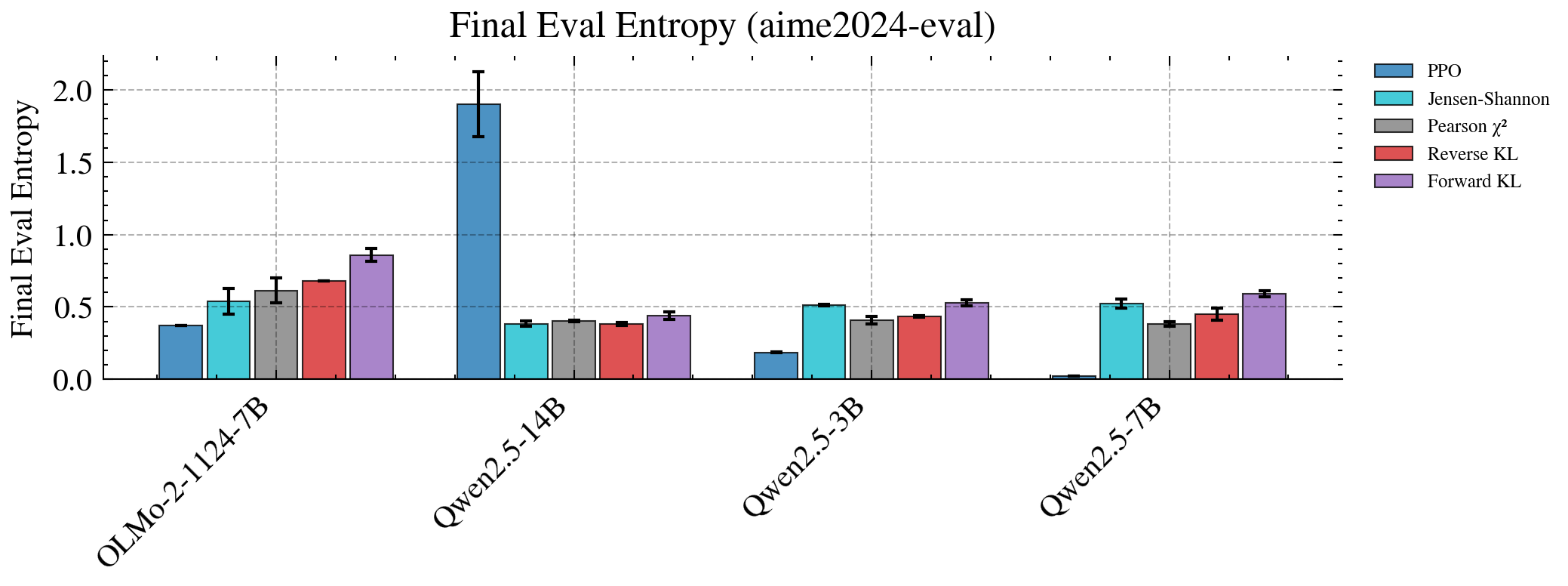}
        \caption{Entropy by loss and model on AIME 2024.}
        \label{fig:aime}
    \end{subfigure}
    
    \vspace{0.5cm} 
    
    \begin{subfigure}{\textwidth}
        \centering
        \includegraphics[width=0.8\linewidth]{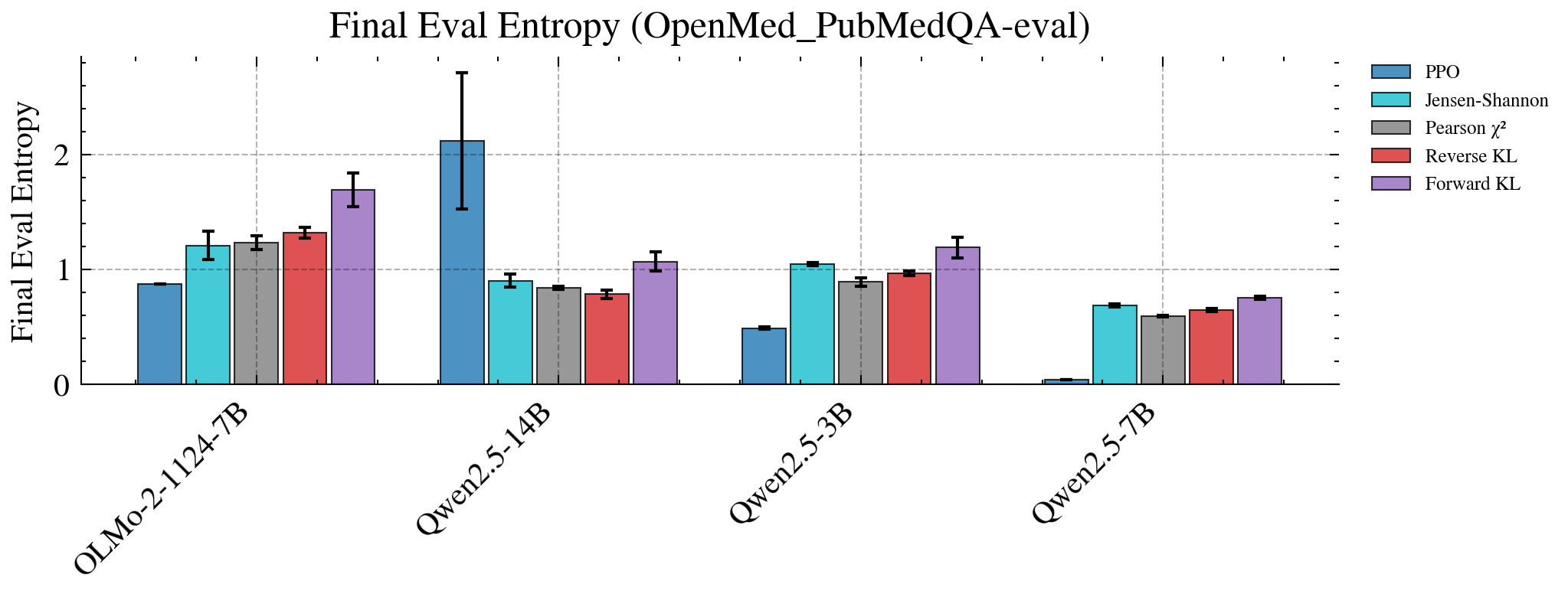}
        \caption{Entropy by loss and model on PubMedQA \citep{jin2019pubmedqa}.}
        \label{fig:pubmed}
    \end{subfigure}
\end{figure}
For completeness we include the reward for the fully trained models, however all trained models score very poorly on these tasks. 
\begin{figure}[h] 
    \centering
    
    \begin{subfigure}{\textwidth}
        \centering
        \includegraphics[width=0.8\linewidth]{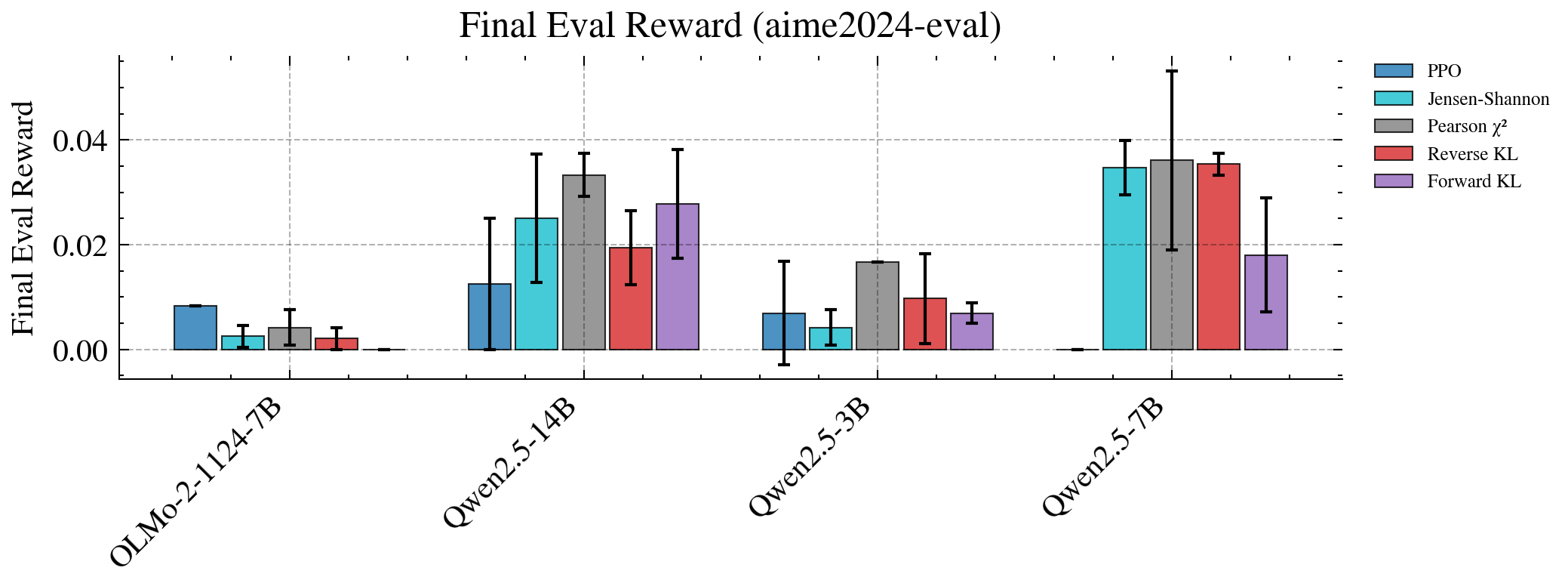}
        \caption{Final reward by loss and model on AIME 2024.}
        \label{fig:aime_reward}
    \end{subfigure}
    
    \vspace{0.5cm} 
    
    \begin{subfigure}{\textwidth}
        \centering
        \includegraphics[width=0.8\linewidth]{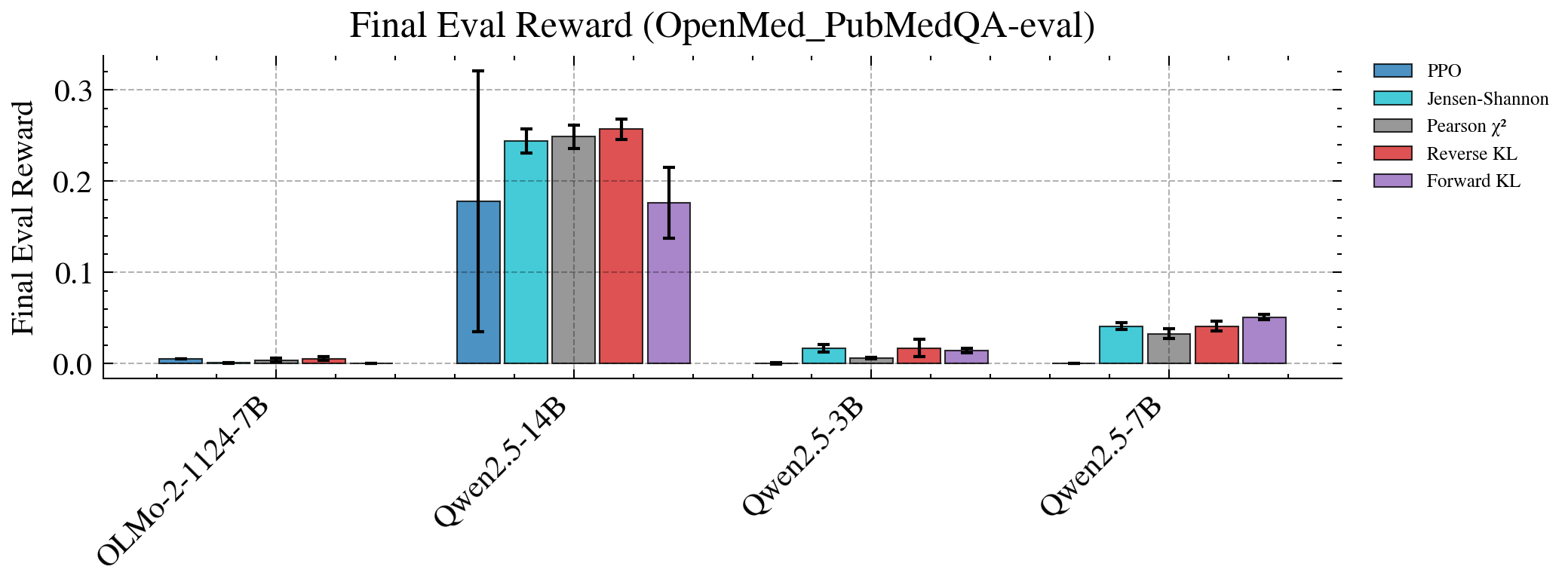}
        \caption{Final reward by loss and model on PubMedQA \citep{jin2019pubmedqa}.}
        \label{fig:pubmed_reward}
    \end{subfigure}
\end{figure}
\end{document}